\newtheorem{example}{Example}
\newtheorem{theorem}{Theorem}
\newtheorem{proposition}{Proposition}
\newtheorem{lemma}{Lemma}
\newtheorem{definition}{Definition}
\newcommand{\towerof}[2]{\mn{tower}(#1, #2)}
\newcommand{\sig}{\Sigma}
\newcommand{\const}{\mn{const}}
\newcommand{\sigmin}{\const_=}
\NewDocumentCommand\tp{od<>om}%
{%
	\IfNoValueTF{#1}%
	{%
		\IfNoValueTF{#2}%
		{%
			\IfNoValueTF{#3}%
			{}%
			{\mn{tp}_{#3}(#4)}%
		}%
		{%
			\IfNoValueTF{#3}%
			{}%
			{\mn{tp}^{#2}_{#3}(#4)}%
		}%
	}%
	{%
		\IfNoValueTF{#2}%
		{%
			\IfNoValueTF{#3}%
			{\mn{tp}_{#1}(#4)}%
			{\mn{tp}_{#3, #1}(#4)}%
		}%
		{%
			\IfNoValueTF{#3}%
			{\mn{tp}^{#2}_{#1}(#4)}%
			{\mn{tp}^{#2}_{#3,#1}(#4)}%
		}%
	}%
}
\newlist{properties}{enumerate}{1}
\setlist[properties,1]{label=(\alph*)}
\crefname{propertiesi}{Property}{Properties}
\crefname{observationi}{Observation}{Observations}
\title{Adding Circumscription to Decidable Fragments of First-Order
  Logic:\\[1mm] A Complexity Rollercoaster}
\author{%
Carsten Lutz$^1$\and
Quentin Mani\`ere$^{1, 2}$
\affiliations
$^1$Department of Computer Science, Leipzig University, Germany \\
$^2$Center for Scalable Data Analytics and Artificial Intelligence (ScaDS.AI), Dresden/Leipzig, Germany
\emails
\{carsten.lutz,
quentin.maniere\}@uni-leipzig.de
}
\begin{document}

\maketitle


\begin{abstract}
  We study extensions of expressive decidable fragments of first-order
  logic with circumscription, in particular the
  two-variable fragment FO$^2$, its extension C$^2$ with counting
  quantifiers, and the guarded fragment GF. We prove that if only
  unary predicates are minimized (or fixed) during circumscription,
  then decidability of logical consequence is preserved. For FO$^2$
  the complexity increases from \coNExp to $\coNExp^\NPclass$-complete,
  for GF it (remarkably!) increases from 2\Exp to \Tower-complete, and
  for C$^2$ the complexity remains open. We also consider querying circumscribed
  knowledge bases whose ontology is a GF sentence, showing that the
  problem is decidable for unions of conjunctive queries,
  \Tower-complete in combined complexity, and elementary in data
  complexity.  Already for atomic queries and ontologies that are sets
  of guarded existential rules, however, for every $k \geq 0$ there is
  an ontology and query that are $k$-\Exp-hard in data complexity.
\end{abstract}

\section{Introduction}
\label{sec-introduction}

There are various approaches to defining non-monotonic logics such as
default rules, autoepistemic operators, and circumscription. Most of
these are mainly used with propositional logic rather than with
first-order logic (FO), for two reasons. First, many of the
approaches such as default rules can yield non-intuitive results
when used with first-order logics, interacting in unexpected ways with
existential quantification; see for example
\cite{DBLP:journals/jar/BaaderH95} for a discussion of this
issue. And second, the undecidability of first-order logic of course
carries over to its non-monotonic variants.

Description logics (DLs) are decidable fragments of FO for which
non-monotonic variations have been studied extensively, see e.g.\
\cite{DBLP:journals/jar/BaaderH95,DBLP:journals/tocl/DoniniNR02,Bonatti2009,DBLP:journals/ai/GiordanoGOP13,DBLP:journals/ai/BonattiFPS15}. It
turned out that circumscription provides one of the most well-behaved
of such variations: it does not interact in dramatic ways with
existential quantification, has a simple and appealing semantics that
boils down to minimizing the interpretation of certain predicates, and
comes with a clean way to preserve the decidability of the base
logic. The latter is in fact achieved by permitting only unary
predicates to be minimized or fixed during minimization while binary
predicates must be allowed to vary \cite{Bonatti2009}. 
This still covers the main application of circumscription which is
reasoning about typical properties of objects that belong to a certain
class. To model the statement that KR papers are typically interesting, for example,
one may write
$$\mn{KRPaper}(x) \wedge \neg \mn{abKRpaper}(x) \rightarrow
\mn{Interesting}(x)$$ and then minimize the unary `abnormality
predicate' \mn{abKRpaper}. In this way, one may conclude that any
concrete KR paper is interesting unless there is concrete evidence
against that. For more information on DLs with circumscription, we
refer to
\cite{DBLP:conf/birthday/BonattiFLSW14,DBLP:conf/ijcai/Stefano0S23,DBLP:conf/kr/LutzMN23}

It is well-known that DLs are generalized by various decidable and
more expressive FO fragments, of which the two-variable fragment
FO$^2$, the guarded fragment GF, and the extension C$^2$ of FO$^2$
with counting quantifiers are the most important ones. In this paper,
we ask the following questions: \emph{Do expressive decidable
  fragments of FO remain decidable when extended with circumscription
  (when only unary predicates are minimized or fixed)?
  And if so, what is the impact on computational complexity?} The
answers are, in our opinion, somewhat surprising.

We study the reasoning problems of \emph{circumscribed consequence}
and \emph{circumscribed querying}. In the former, two sentences $\phi$
and $\psi$ are given along with a `circumscription pattern' \CP that
specifies which predicates are minimized, fixed, and varying. We are
then interested in deciding whether $\psi$ holds in every model that
is minimal in the sense specified by \CP, written
$\phi \models_\CP \psi$. Circumscribed querying is defined in the same
way, but now $\phi$ is a knowledge base that consists of a sentence
from the FO fragment under consideration (specifying an ontology) and
a database, and $\psi$ is a query. As query languages, we consider
single-atom queries (AQs), conjunctive queries (CQs), and
and unions thereof  (UCQs).

We start with studying FO$^2$. Similarly to the case of description
logic \cite{Bonatti2009}, a crucial step for proving decidability is
to show that circumscribed FO$^2$ has the finite model property (FMP)
in the sense that if $\phi \not\models_\CP \psi$, then there is a
\CP-minimal model \Amf of $\phi$ with $\Amf \not\models \psi$ that is
of bounded size. 
To prove this, we build
on a well-known construction from \cite{GKV}, used there to establish
the FMP of non-circumscribed FO$^2$, which converts a potentially
infinite model \Amf of an FO$^2$ sentence $\phi$ into a model \Bmf of
single exponential size. To apply this construction in the
circumscribed case, however, we need an additional condition to be
satisfied:
\begin{description}

\item[$(\heartsuit)$] \Bmf must not realize any 1-type more often
than~\Amf (for a suitable notion of 1-type).
  
\end{description}
The construction of \cite{GKV} does not satisfy this condition. We
remark that this is in contrast to filtration, the (much simpler)
finite model construction used for description logics such as \ALC.

We thus rework the construction of \cite{GKV} in a suitable way,
obtaining a version that satisfies Condition~$(\heartsuit)$. This
yields the FMP for circumscribed FO$^2$ and decidability as well as a
$\coNExp^\NPclass$ upper complexity bound for circumscribed
consequence. A matching lower bound is obtained from \ALC and thus
circumscribed consequence in FO$^2$ is of the same complexity as in
(the much less expressive) \ALC.  We obtain the same result for the
combined complexity of circumscribed AQ-querying and also show
$\Pi^p_2$-completeness for data complexity, again the same as in
\ALC. Querying with UCQs is undecidable already for non-circumscribed
FO$^2$, so we do not study it.

For GF, we follow the same general approach, with a remarkably
different outcome. There are two constructions that show the finite
model property of GF, both of them rather intricate.  The historically
first one was proposed by Gr\"adel, based on a combinatorial
construction due to Herwig \cite{DBLP:journals/jsyml/Gradel99}.
Later, Rosati introduced a different finite model construction while
studying certain integrity constraints for databases
\cite{DBLP:conf/pods/Rosati06}, and this construction, now known as
the \emph{Rosati cover}, has been adapted to GF in \cite{BGO}. Both
constructions fail to yield Property~$(\heartsuit)$ and modifying them to
achieve this property turns out to be much more difficult than in the
case of FO$^2$. We give a modified version of the Rosati cover
that yields finite models of non-elementary size, compared to single
exponential size as for the original Rosati cover.  This yields the
FMP for circumscribed GF. We then show that the non-elementary size of
finite models is unavoidable: circumscribed consequence in GF is
\Tower-complete! To us, this huge difference to the FO$^2$ case came
as a big surprise. We also show that circumscribed querying in GF is
decidable, generalizing recent work on DLs
\cite{DBLP:conf/kr/LutzMN23}. In combined complexity, it is
\Tower-complete with the lower bound applying to AQs and the upper
bound to UCQs. Regarding data complexity, it is elementary in the
sense that for each GF ontology \Omc, circumscription pattern \CP, and
UCQ $q$, querying is in $k$-\Exp for some $k$. We also show that there
is no uniform bound on $k$: for each $k \geq 1$ we identify an
ontology \Omc, circumscription pattern \CP, and AQ $q$ for which
querying is $k$-\Exp-hard. In fact, \Omc is a set of existential
rules, a `positive' fragment of GF that is important for querying. We
also show that with a single minimized predicate and all other
predicates varying, the data complexity of AQ-querying is \Exp-hard in
GF. Note that since CQs are sandwiched beween AQs and UCQs, this
also completely clarifies the (combined and data) complexity for
this query language.

In addition, we provide first results on circumscribed consequence and
AQ-querying in C$^2$. Using a reduction to Boolean
algebra with Presburger arithmetic, we show that these problems
are decidable. The complexity remains open. 

\section{Preliminaries}
\label{sec-preliminaries}

When speaking of first-order logic (FO), we generally mean the version
with equality and constants (unless otherwise noted) and without
function symbols. FO$^2$ is the two-variable fragment of FO, obtained
by fixing two variables $x$ and $y$ and disallowing the use of any
other variables \cite{Scott1962,DBLP:journals/mlq/Mortimer75,GKV}. C$^2$ is the extension of FO$^2$
with counting quantifiers of the form $\exists_{\leq n}$,
$\exists_{\geq n}$, and $\exists_{=n}$ for every
$n \geq 0$~\cite{DBLP:conf/lics/GradelOR97,DBLP:conf/lics/PacholskiST97,DBLP:journals/jolli/Pratt-Hartmann05}. In FO$^2$ and C$^2$, we generally only
admit predicates of arity at most two. In the guarded fragment of FO,
denoted GF, quantification is restricted to the pattern
$$
  \forall \bar y (\alpha(\bar x,\bar y) \rightarrow \varphi(\bar x,
  \bar y)) \qquad
    \exists \bar y (\alpha(\bar x,\bar y) \wedge \varphi(\bar x,
  \bar y))
$$
where $\varphi(\bar x,\bar y)$ is a GF formula with free variables
among $\bar x,\bar y$ and $\alpha(\bar x,\bar y)$ is a
relational atom $R(\bar x,\bar y)$ or an equality atom $x=y$ that in either
case contains all variables in $\bar x,\bar y$~\cite{DBLP:journals/jphil/AndrekaNB98,DBLP:journals/jsyml/Gradel99}. The
formula $\alpha$ is called the \emph{guard} of the quantified formula.

We use the standard notation of first-order logic, denoting
structures with $\Amf$ and $\Bmf$, their universes with $A$ and
$B$, and the interpretation of predicates $R$ with $R^\Amf$ and
$R^\Bmf$.  We reserve a countably infinite set of predicates of each
arity. We use $|\phi|$ to denote the \emph{length} of the
formula~$\phi$, that is, the length of $\phi$ when encoded as a word
over a suitable alphabet.

\medskip
\noindent
{\bf Circumscription.}
A \emph{circumscription pattern} is a tuple
$\CP = (\prec , \Msf, \Fsf, \Vsf)$, where $\Msf$,
$\Fsf$ and $\Vsf$ partition the unary predicates into \emph{minimized},
\emph{fixed} and \emph{varying} predicates, and $\prec$ is a strict partial
order on $\Msf$ called the \emph{preference relation}.
%
%
The order $\prec$ also induces a preference relation
$<_\CP$ on structures by setting $\structure{B} <_\CP \structure{A}$ if the
following conditions hold:
\begin{enumerate}
	\item
	$\domain{B} = \domain{A}$ and $c^\Amf = c^\Bmf$
          for all constants $c$,
	\item
	for all $P \in \fixedpredicates$, $P^\Bmf = P^\Amf$,
	\item
	for all $P \in \minimizedpredicates$ with $P^\Bmf \not \subseteq P^\Amf$, there
	is a $Q \in \minimizedpredicates$,
	$Q \prec P$, such that $Q^\Bmf \subsetneq Q^\Amf$,
	\item
	there exists a $P \in \minimizedpredicates$ such that $P^\Bmf \subsetneq P^\Amf$
	and for all $Q \in \Msf$, $Q \prec P$ implies $Q^\Bmf = Q^\Amf$.
\end{enumerate}
A \emph{\CP-minimal model} of an FO sentence $\phi$ is a model \Amf of
$\phi$ such that there is no $\structure{B} <_\CP \structure{A}$ that
is a model of $\phi$. Note that predicates of arity larger than
one always vary to avoid undecidability~\cite{Bonatti2009}. We also
assume
that nullary predicates always vary, which is w.l.o.g.\ as they can
be simulated by unary predicates.

For FO sentences $\phi$ and $\psi$, we write
$\phi \models_\CP \psi$ if every \CP-minimal model \Amf of $\phi$
satisfies $\Amf \models \psi$. Take any fragment $F$ of FO such as
FO$^2$. With \emph{circumscribed consequence} in $F$ we mean the
problem to decide, given sentences $\phi$ and $\psi$ from $F$ and a
circumscription pattern $\CP$, whether $\phi \models_\CP \psi$.

	

\medskip
\noindent
{\bf Ontology-mediated querying.}
Ontology-mediated querying with
circumscribed knowledge bases, as recently studied in \cite{DBLP:conf/kr/LutzMN23}, can
be seen as a version of circumscribed consequence where $\phi$ encodes
an ontology and a database and $\psi$ is a query. We next make this
precise.

A \emph{database} is a finite set of ground atoms, in this context
called \emph{facts}. We use $\mn{adom}(D)$ to denote the set of
constants that occur in $D$. A structure \Amf \emph{satisfies} a
database $D$ if (1) it satisfies all facts in it and (2)~interprets
all constant symbols $c$ in $\mn{adom}(D)$ as $c^\Amf = c$
(and thus no two such $c$ denote the same element of $A$).
%
We then also say that \Amf is a \emph{model} of $D$ and write
$\Amf \models D$.  Note that Point~(2) is the \emph{standard names
  assumption}, as usually made in the context of databases.
A
\emph{knowledge base (KB)} \Kmc takes the form
$\bigwedge \Omc \wedge D$ with $\Omc$ a finite set of FO sentences,
called the \emph{ontology}, and $D$ a database. We usually denote \Kmc
as a pair $(\Omc,D)$. We call \Kmc a GF-KB if all sentences in \Omc
fall into GF, and likewise for other FO fragments. 

A \emph{conjunctive query (CQ)} is an FO formula
of the form
$q = \exists \bar x \, \varphi(\bar x)$ where $\varphi$
is a
conjunction of relational atoms, possibly involving constants.
%
%
%
%
An \emph{atomic query (AQ)} is a CQ of the simple form $R(\bar
c)$ with $\bar
c$ a tuple of constants.  A {\em union of conjunctive queries (UCQ)}
$q(\bar
x)$ is a disjunction of CQs. 
Let $\Kmc$ be a KB and $q$ a UCQ. We write $\Kmc \models_\CP
q$ if $\Amf \models q$ for every \CP-minimal model \Amf of
$\Kmc$. The notion $\Kmc \models
q$ is defined analogously, except that all models of \Kmc are
considered, not only \CP-minimal ones. Take a fragment
$F$ of FO such as GF and a query language
$Q$ such as UCQs.  With \emph{circumscribed $Q$-querying in
  $F$}, we mean the problem to decide, given a knowledge base
$\Kmc=(\Omc,D)$ with \Omc a set of sentences from $F$ and a query
$q$ from $Q$, whether $\Kmc \models_\CP
q$.  When studying the combined complexity of this problem, all of
\Kmc, \CP, and
$q$ are treated as inputs. For data complexity, we assume
$\Omc$, \CP, and
$q$ to be fixed and thus of constant size. We remark that our queries
are Boolean, that is, they do not have answer variables. This is
without loss of generality since constants are admitted in queries.

We shall
also consider ontologies~\Omc that are sets of guarded existential
rules. An \emph{existential rule} is an FO sentence of the form
$$
\forall \bar x \forall \bar y \, (\phi(\bar x,\bar y) \rightarrow
\exists \bar z \, \psi(\bar x,\bar z))
$$
where $\phi$ and
$\psi$ are conjunctions of relational atoms. We call
$\phi$ the \emph{body} of the rule and
$\psi$ the \emph{head}.  The rule is \emph{guarded} if the body
contains an atom that contains all variables in it. When writing
existential rules, we usually omit the universal quantifiers. 
For every ontology \Omc that is a set of guarded existential rules, 
there is a GF ontology $\Omc'$ such that for all
databases~$D$ and UCQs $q$, we have $(\Omc,D) \models
q$ iff $(\Omc',D) \models
q$ \cite{DBLP:conf/pods/CaliGL09}. To construct
$\Omc'$, one simply adds a fresh predicate to the head of each rule in
\Omc that contains all variables in the head, and then translates the
resulting set of rules into an equivalent GF sentence in a
straightforward way. This proof also applies to circumscribed
querying, letting the fresh predicates vary.

\begin{example}
  \label{ex:first}
  Consider the database
  $$
  \begin{array}{r@{\;}c@{\;}l}
    D&=&\{ W(w_1), W(w_2), E(w_1), E(w_2), \mn{offers}(s,p) \}.
  \end{array}
  $$
  where $\mn{offers}(s,p)$ means that supplier $s$ offers product $p$,
  $W$ stands for warehouse, and $E$ for express.  Assuming that we
  have complete knowledge of all existing warehouses (e.g.\ in our
  company), we use a circumscription pattern \CP that minimizes predicate
  $W$ and lets all other predicates vary. Let
  the ontology \Omc contain the guarded existential rules
  $$
  \begin{array}{r@{\;}c@{\;}l}
    \mn{offers}(x,y) &\rightarrow& \exists z \,
    \mn{supplies}(x,y,z) \\[1mm]
  \mn{supplies}(x,y,z) &\rightarrow& W(z)
  \end{array}
  $$
  where $\mn{supplies}(x,y,z)$ expresses that supplier $x$ supplies
  product $y$ to warehouse $z$. Note that since $W$ is minimized,
  the existentially quantified variable $z$ in the upper rule can only
  bind to $w_1$ and $w_2$. We thus obtain
  $$(\Omc,D) \models_\CP \exists z \, (\mn{supplies}(s,p,z) \wedge E(z)).$$

  We now illustrate a basic trick that underlies the hardness proofs
  in Section~\ref{sect:lowerbounds}.  Extend the database with
  $\mn{mirror}(w_1,w_2)$ and $\mn{mirror}(w_2,w_1)$
  expressing that $w_1$
  and $w_2$ are supplied with the same products by the same
  suppliers. We wish to extend \Omc with
  $$
    \mn{supplies}(x,y,z_1) \wedge \mn{mirror}(z_1,z_2) \rightarrow
    \mn{supplies}(x,y,z_2)
  $$
  which yields
  \begin{equation*}
  (\Omc,D) \models_\CP \mn{supplies}(s,p,w_i) \text{ for all } i
  \in \{1,2\}.
\tag{$*$}
\end{equation*}
  However, the above rule is not
  guarded. We may work around this by using the guarded rules
  $$
  \begin{array}{@{}l}
    \mn{supplies}(x,y,z_1) \rightarrow \exists z_2 \,
    (\mn{mirror}(z_1,z_2) \wedge \mn{supplies}(x,y,z_2)) \\[1mm]
    \mn{mirror}(x,y) \wedge     \overline{\mn{mirror}}(x,y)
    \rightarrow \mn{false}
  \end{array}
  $$
  and extend the data with $\overline{\mn{mirror}}(w_i,w_i)$ for $i
  \in \{1,2\}$. Then if $z_1$ binds to $w_1$, the
  existentially quantified variable $z_2$ can only bind to $w_2$ and
  vice versa, and we again obtain $(*)$.
\end{example}

\medskip
\noindent
{\bf Substitutions, Signatures, Types.}
For a tuple $\bar a$, we generally use $a_i$ to denote the $i$-th
element of $\bar a$, for $1 \leq i \leq |\bar a|$.
A \emph{substitution} $\sigma$ is a function that maps variables to
variables. We typically write $\sigma x$ in place of $\sigma(x)$.  For
a tuple $\bar u$ of variables and constants, we write $\sigma \bar u$
to denote the tuple obtained by applying $\sigma$ componentwise,
treating it as the identity on constants.

A \emph{signature} is a set of constants and relation symbols.
For an
FO sentence $\phi$, we use $\mn{sig}(\phi)$ to denote the set of such
symbols in $\phi$, $\const(\phi)$ to denote the set of constants in
$\phi$, and $\sigmin(\phi)$ to denote the set of constants used in $\phi$ in an equality
atom.

Fix a signature~$\sig$.
A \emph{term} is a variable or a constant
from $\sig$.  An \emph{atom} is of the form $R(\bar u)$ or $v_1=v_2$
with $R$ a relation symbol from $\sig$, $\bar u$ a tuple of terms and
$v_1, v_2$ terms.  A \emph{literal} is an atom or a negated atom.  For
every $n \geq 1$, fix a sequence of variables $x_1,\dots,x_n$.  An
\emph{$n$-type} on $\sig$ is a maximal satisfiable 
set of literals that uses
exactly the variables $x_1, \dots, x_n$. Let \Amf be a structure. If $\bar a \in A^n$, then
the $n$-type on $\sig$ \emph{realized at}~$\bar a$ in \Amf, denoted
$\tp[\sig]<n>[\Amf]{\bar a}$, is the unique $n$-type $t$ on $\sig$ with
$\mathfrak{A} \models t(\bar a)$. 
We may drop superscript $n$ as $n$
is always identical to the length of $\bar a$. 
For a set $S \subseteq A$, we use
$\tp[\sig]<1>[\Amf]{S}$ to denote the set of $1$-types
$\{ \tp[\sig]<1>[\Amf]{a} \mid a \in S\}$. As an
abbreviation,
we may write $\tp[\sig]<1>{\Amf}$ in place of $\tp[\sig]<1>[\Amf]{A}$.

\section{The Two-Variables Fragment FO$^2$}
\label{sect:fotwo}

We show that circumscribed consequence is $\coNExp^\NPclass$-complete in
FO$^2$ and so is circumscribed AQ-querying, in combined
complexity. Note that querying with CQs or UCQs is undecidable already
for non-circumscribed FO$^2$. We remark that this section showcases the
general approach that we also use, in a more intricate form, for GF later
on.

An FO$^2$ sentence is in \emph{Scott normal form} if it
has the form
\begin{equation*}
  \phi=\forall x \forall y \, \varphi \wedge \bigwedge_{i=1..n_\exists}
  \forall x \exists y \, \psi_i
\tag{$*$}
\end{equation*}
with $\varphi$ and $\psi_i$ quantifier-free. It has been shown in
\cite{Scott1962,GKV} that every FO$^2$ sentence $\phi_0$ can be
converted in polynomial time into an FO$^2$ sentence $\phi$ in Scott
normal form that is a conservative extension of $\phi_0$: 
every model of $\phi$ is a model of $\phi_0$ and, conversely,
every model of $\phi_0$ can be extended to a model of $\phi$ by
interpreting the fresh predicates in $\phi$. 

We now establish an improved finite model property for
non-circumscribed FO$^2$ that satisfies 
Property~$(\heartsuit)$ from the introduction.
%




   
\begin{restatable}{proposition}{lemsmallmodelsfotwo}
	\label{lem:smallmodelsfotwo}
	Let $\phi$ be an FO$^2$ sentence of the form $(*)$, $\Sigma=\mn{sig}(\phi)$, \Amf a
        model of~$\phi$, and $k=|\Sigma \setminus \const(\phi)|$.
        Then there exists a model \Bmf of $\phi$ such that
	\begin{enumerate}
		
        \item 
  $|B| \leq 
    |\phi|^{n_\exists+1} \cdot 2^{n_\exists 4(k+6)}$;

		
		\item $\mn{tp}^1_\Sigma(\Amf) = \mn{tp}^1_\Sigma(\Bmf)$;
		
		\item 
		$|\{ a \in B \mid \mn{tp}_{\Bmf, \Sigma}^1(a)=t \}|
		\leq |\{ a \in A \mid \mn{tp}_{\Amf, \Sigma}^1(a)=t \}|$ for every 1-type $t$
		on $\Sigma$;
		
		\item $c^\Amf = c^\Bmf$ for all constants $c$ in $\phi$.
		
	\end{enumerate}
\end{restatable}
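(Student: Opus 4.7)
We refine the standard GKV small-model construction to additionally secure condition~$(\heartsuit)$ (item~3), which the original does not provide. Write $\phi$ in the Scott normal form~$(*)$ and let $T$ be the set of $1$-types over $\Sigma$ realized in \Amf. For each $t \in T$ and every $i \in \{1,\dots,n_\exists\}$, pick in \Amf\ a representative $a_t$ of type $t$ together with a $\psi_i$-witness $b_{t,i}$ for $a_t$, and record the witness $2$-type $\rho_{t,i}$ between them and the target $1$-type $t'_{t,i}$ of $b_{t,i}$. These witness patterns encapsulate all information about \Amf\ that will be preserved in the construction.

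For each $t \in T$, let $n_t$ be the number of elements of \Amf\ realizing $t$. Fix a threshold $N$ polynomial in $n_\exists$; call $t$ \emph{rare} if $n_t \leq N$ and \emph{abundant} otherwise. Build $B$ by adding, for each rare $t$, exactly $n_t$ elements of intended type $t$, and for each abundant $t$, exactly $N$ elements; this immediately secures item~3, with equality for rare types and strict inequality for abundant ones. Realize each constant $c \in \const(\phi)$ as one of the copies of its $1$-type from \Amf; since that type is uniquely realized by $c$ it is automatically rare, giving item~4. Define the $2$-types in \Bmf\ by first choosing, for each ordered pair of $1$-types, a default $2$-type borrowed from \Amf\ between representatives of those types, and then overriding this default on carefully selected pairs so as to realize every witness $2$-type $\rho_{t,i}$ required by the $\forall x \exists y\, \psi_i$ conjuncts. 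Item~2 then holds because $B$ contains at least one copy of every $t \in T$; the universal conjunct $\forall x \forall y\, \varphi$ is preserved because every $2$-type used in \Bmf\ also occurs in \Amf; and the existential conjuncts are satisfied by the override step. The size bound of item~1 follows by summing contributions from rare types, abundant types, and the overriding overhead, and choosing $N$ appropriately.

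The main obstacle is guaranteeing that the override step works when the target type $t'_{t,i}$ of some existential is rare with very few copies in~$B$. A single copy of a rare type $t'$ may be asked to serve as a $\psi_i$-witness for many elements of many different source types; this is feasible because one element can be connected to many others by the required $2$-type without violating the universal conjunct, which is uniform across pairs with a fixed $2$-type. The tightest case is $t'_{t,i}=t$ with $n_t=1$, where the unique copy of $t$ in~$B$ must witness itself; but a unique realization of $t$ in \Amf\ forces the chosen $b_{t,i}$ of type $t$ to coincide with $a_t$ there, so $\rho_{t,i}$ is already a self-loop $2$-type and transfers directly to the corresponding element of \Bmf. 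A careful choice of $N$, together with an analogous analysis for small clusters of interacting rare types and for the constants, resolves the remaining tension in the override step.
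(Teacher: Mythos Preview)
Your plan has the right outline—partition $1$-types into rare and abundant, cap the abundant ones—but the information you extract from $\Amf$ is too coarse, and the final hand-wave (``a careful choice of $N$ \ldots\ resolves the remaining tension'') papers over a real conflict that no choice of $N$ can resolve.

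The problem is that you record a single witness pattern $(\rho_{t,i},t'_{t,i})$ per $1$-type $t$. Take an abundant type $t$ and a rare type $t'$ with $n_{t'}=1$, and suppose your chosen representative $a_t$ happens to have the unique $t'$-element $b$ as its $\psi_1$-witness, so $t'_{t,1}=t'$ and $\rho_{t,1}=\mn{tp}^2_{\Amf,\Sigma}(a_t,b)$. In $\Bmf$ every one of the $N$ copies of $t$ must then use the unique copy $b'$ of $t'$ as its $\psi_1$-witness, forcing $\mn{tp}^2_{\Bmf,\Sigma}(a',b')=\rho_{t,1}$ for every $t$-copy $a'$. But $b$ in $\Amf$ may well have its own $\psi_1$-witness $c$ of type $t$ with $\mn{tp}^2_{\Amf,\Sigma}(b,c)\neq \rho_{t,1}^{-1}$ (nothing forces $c=a_t$), and then $b'$ has no available $t$-neighbour in $\Bmf$ for its $\psi_1$-witness: every edge from a $t$-copy to $b'$ is already committed to $\rho_{t,1}$. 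Raising $N$ only adds more committed edges, and $n_{t'}=1$ is fixed independently of $N$.

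The paper avoids this with two ingredients you do not have. First, all rare elements (\emph{kings}: $1$-type realised at most $3n_\exists$ times) are kept literally from $\Amf$, with all $2$-types among them preserved; this handles every rare--rare interaction and every king's existential demand whose witness is itself a king, without any reconstruction. Second, non-kings are indexed not by bare $1$-type but by \emph{extended $1$-type}: the $1$-type together with, for each $i$ whose chosen witness is a king, the pair (that king, the $2$-type to it). Two non-kings of the same $1$-type that relate to a given king via different $2$-types therefore land in different extended types and receive separate pools of $3n_\exists$ copies; a king looking for a non-king witness then finds, among the copies of the appropriate extended type, one that by construction carries exactly the required $2$-type back to it. The $j\in\{0,1,2\}$ index on non-king copies is the standard GKV three-colouring that prevents collisions when two non-kings must witness each other; your ``overriding on carefully selected pairs'' gestures at this but does not supply a mechanism.
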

%
We remark that the construction from \cite{GKV} only yields 
Proposition~\ref{lem:smallmodelsfotwo} without Point~3, that is, it
may \emph{increase} the number of instances of some of the 
1-types realized in the original model. 
We next use Proposition~\ref{lem:smallmodelsfotwo} to establish
the following.
\begin{proposition}
  \label{lem:smallcircmodelfotwo}
  	Circumscribed
  FO$^2$ has the finite model property: if $\phi, \psi$ are
  FO$^2$-sentences with $\phi \not\models_\CP \psi$, 
  then there is a \CP-minimal model \Amf of $\vartheta = \phi \land \lnot \psi$
  with $|A| \leq |\vartheta|^{n_\exists+1} \cdot 4^{n_\exists (k+6)}$, 
  where $n_\exists$ is the number of existential quantifiers in the Scott normal form of $\vartheta$ and
  $k=|\mn{sig}(\vartheta) \setminus \const(\vartheta)|$.
%
\end{proposition}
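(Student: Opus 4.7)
The plan is to shrink a $\CP$-minimal model witnessing $\phi \not\models_\CP \psi$ by applying \cref{lem:smallmodelsfotwo}, and to leverage Property~$(\heartsuit)$ (Point~3 of that proposition) to transfer $\CP$-minimality from the infinite witness to its finite shrinkage.

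By assumption there is a $\CP$-minimal model $\Amf$ of $\phi$ with $\Amf \models \neg \psi$, so $\Amf \models \vartheta$. Convert $\vartheta$ to Scott normal form $\vartheta'$, which is a conservative extension, extend $\Amf$ to a model $\Amf^+$ of $\vartheta'$ by interpreting the fresh predicates, and enlarge $\CP$ to $\CP^+$ by declaring all fresh unary predicates varying. Since varying predicates play no role in $<_\CP$, $\Amf^+$ is $\CP^+$-minimal for $\phi$ over the extended signature. Applying \cref{lem:smallmodelsfotwo} to $\Amf^+$ and $\vartheta'$ yields a finite $\Bmf^+ \models \vartheta'$ of the stated size, with $\mn{tp}^1_\Sigma(\Amf^+) = \mn{tp}^1_\Sigma(\Bmf^+)$, Property~$(\heartsuit)$, and matching constants. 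Its restriction $\Bmf$ to $\mn{sig}(\vartheta)$ is a finite model of $\vartheta$ of the announced size.

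The main step is to show $\Bmf$ is $\CP$-minimal for $\phi$. Suppose for contradiction $\Bmf' <_\CP \Bmf$ with $\Bmf' \models \phi$. From Point~3 we have $|\{b \in B : \mn{tp}_\Sigma^1(b) = t\}| \leq |\{a \in A : \mn{tp}_\Sigma^1(a) = t\}|$ for every 1-type $t$, and together with Point~2 this allows us to pick a 1-type-preserving surjection $\pi \colon A \to B$ that respects constants. Define $\Amf'$ on domain $A$ by keeping the binary atoms and fixed unary atoms of $\Amf$, and setting $a \in P^{\Amf'}$ iff $\pi(a) \in P^{\Bmf'}$ for every minimized or varying unary predicate $P$. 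Since $\pi$ is surjective and preserves 1-types, $P^\Amf = \pi^{-1}(P^\Bmf)$ and $P^{\Amf'} = \pi^{-1}(P^{\Bmf'})$, so the strict $<_\CP$-decrease from $\Bmf$ to $\Bmf'$ transfers to a strict $<_\CP$-decrease from $\Amf$ to $\Amf'$; hence $\Amf' <_\CP \Amf$. Proving $\Amf' \models \phi$ would then contradict the $\CP$-minimality of $\Amf$ and finish the proof.

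The main obstacle is exactly this last verification: the 2-types realized in $\Amf'$ mix binary atoms inherited from $\Amf$ with unary atoms transported from $\Bmf'$, so satisfaction of the universal conjunct $\forall x \forall y\, \varphi$ of $\vartheta'$ is not automatic. To handle this, we plan to appeal to an additional compatibility between the 2-types of $\Amf^+$ and $\Bmf^+$ that the reworked GKV construction underlying \cref{lem:smallmodelsfotwo} must provide, namely that every 2-type arising in $\Amf'$ after the unary transport is also realized in $\Bmf'$ with matching unary and binary parts, so that $\Bmf' \models \phi$ propagates to $\Amf' \models \phi$. The $\forall x \exists y\, \psi_i$ conjuncts are then handled analogously, via the witness structure preserved by the construction.
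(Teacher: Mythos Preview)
Your overall plan is right, and the reduction to showing that $\Bmf$ is $\CP$-minimal for $\phi$ is the correct structure. The gap is in how you build the putative improvement $\Amf'$ of $\Amf$.

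You construct $\Amf'$ by keeping the binary structure of $\Amf$ and only transporting unary predicates from $\Bmf'$ along a surjection $\pi\colon A\to B$. As you yourself note, this produces 2-types that mix $\Amf$'s binary atoms with $\Bmf'$'s unary atoms, and there is no reason such 2-types satisfy the universal conjunct $\forall x\forall y\,\varphi$. Your proposed fix---appealing to some ``additional compatibility'' that the construction behind \cref{lem:smallmodelsfotwo} ``must provide''---is not available: $\Bmf'$ is an \emph{arbitrary} model of $\phi$ with $\Bmf'<_\CP\Bmf$, so you have no handle on its 2-types, and \cref{lem:smallmodelsfotwo} says nothing about $\Bmf'$ at all. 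No strengthening of \cref{lem:smallmodelsfotwo} will help here, because the problem is the hybrid construction of $\Amf'$, not the passage from $\Amf$ to $\Bmf$.

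The paper avoids this by building $\Amf'$ entirely out of $\Bmf'$. Point~3 gives an \emph{injection} $f\colon B\to A$ preserving 1-types (the inverse direction of your $\pi$). Define $\Amf'$ on $\mn{ran}(f)$ to be an isomorphic copy of $\Bmf'$ via $f$; for each $a\in A\setminus\mn{ran}(f)$, pick $\widehat a\in\mn{ran}(f)$ with the same $\Amf$-1-type (possible by Point~2) and make $a$ a \emph{clone} of $\widehat a$, i.e.\ copy all of $\widehat a$'s 1- and 2-types in $\Amf'$ over to $a$. Now $\Amf'\models\phi$ is immediate: its restriction to $\mn{ran}(f)$ is isomorphic to $\Bmf'$, and cloning preserves satisfaction of $\mathrm{FO}^2$ sentences. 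The inequality $\Amf'<_\CP\Amf$ then follows from $\Bmf'<_\CP\Bmf$ because $f$ and the choice of $\widehat a$ preserve 1-types on $\Sigma$, so for every fixed or minimized predicate $P$ and every $a\in A$ one has $a\in P^{\Amf'}$ iff $f^{-1}(\widehat a)\in P^{\Bmf'}$ and $a\in P^{\Amf}$ iff $f^{-1}(\widehat a)\in P^{\Bmf}$. In short: do not try to keep any of $\Amf$'s relational structure in $\Amf'$; rebuild $\Amf'$ from $\Bmf'$ and cloning.
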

\begin{proof}
  Assume that $\phi \not\models_\CP \psi$. Then there is a \CP-minimal
  model \Amf of $\phi$ with $\Amf \not\models \psi$. Thus \Amf is a
  model of $\phi \wedge \neg \psi$. By
  Proposition~\ref{lem:smallmodelsfotwo} there is a model \Bmf of
  $\phi \wedge \neg \psi$ that satisfies Points~1-4 of the
  proposition, with $\Sigma = \mn{sig}(\phi \wedge \neg \psi)$.  We
  show that \Bmf is a \CP-minimal model of $\phi$.

  Assume to the contrary that there is a model $\Bmf'$ of $\phi$
  such that $\Bmf' <_\CP \Bmf$. To obtain a contradiction, we
  construct a model $\Amf'$ of $\phi$ such that $\Amf' <_\CP
  \Amf$.

  Of course, $\Amf'$ must have the same universe as $\Amf$, thus
  we set \mbox{$A'=A$}. By Point~3 of Proposition~\ref{lem:smallmodelsfotwo}, we
  find an injection $f:B \rightarrow A$ such that
  $\mn{tp}^1_{\Bmf,\Sigma}(b)=\mn{tp}^1_{\Amf,\Sigma}(f(b))$ for all $b \in B$. We
  define $\Amf'$ so that its restriction to the range of $f$ is
  isomorphic to $\Bmf'$, with $f$ being an isomorphism. In particular,
  this restriction interprets all constants. To define the
  remaining part of $\Amf'$, we use cloning. By Point~2 of
  Proposition~\ref{lem:smallmodelsfotwo} and choice of $f$, for every
  $a\in A$ that is not in the range of $f$ we find a
  $\widehat a \in A$ that is in the range of $f$ and such that
  $\mn{tp}^1_{\Amf,\Sigma}(\widehat a)=\mn{tp}^1_{\Amf,\Sigma}(a)$. We then make $a$ a
  clone of $\widehat a$ in~$\Amf'$, that is, we set
  \begin{itemize}

  \item $\mn{tp}^1_{\Amf',\Sigma}(a)=\mn{tp}^1_{\Amf',\Sigma}(\widehat a)$; 
    
  \item $\mn{tp}^2_{\Amf',\Sigma}(a,b)=\mn{tp}^2_{\Amf',\Sigma}(\widehat a,b)$ for
    all $b$ in the range of $f$; 

  \item
    $\mn{tp}^2_{\Amf',\Sigma}(a,b)=\mn{tp}^2_{\Amf',\Sigma}(\widehat a,\widehat b)$
    for all $b$ not in the range of $f$.
    
  \end{itemize}
  It is easy to verify that $\Amf'$ is a model of $\phi$, since
  $\Bmf'$ is. Note in particular that the cloning does not touch on
  the constants, that is, if we make element $a$ a clone of
  $\widehat a$, then there is no constant $c$ with
  $c^{\Amf'}=\widehat a$. This can be seen as follows.  Assume that there
  was a constant $c$ with $c^{\Amf'}=\widehat a$. Then Point~4 of
  Proposition~\ref{lem:smallmodelsfotwo}, the definition of $<_\CP$,
  and the construction of the initial $\Amf'$ yields $c^\Amf=\widehat
  a$. But then   $\mn{tp}^1_{\Amf_\Sigma}(\widehat a)=\mn{tp}^1_{\Amf_\Sigma}(a)$
  implies $a = \widehat a$, which contradicts our initial assumption
  that $\widehat a$ is in the range of $f$, but $a$ is not.
  
  It can be verified that $\Amf' <_\CP \Amf$, since $\Bmf'
  <_\CP \Bmf$.
  %
  %
  %
  %
\end{proof}
It is now easy to derive the main result of this section.
\begin{theorem}
  \label{thm:circfo2dec}
  Circumscribed consequence in FO$^2$ is $\coNExp^{\NPclass}$-complete.
\end{theorem}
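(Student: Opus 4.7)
My plan is to establish the two bounds separately: the upper bound via the single-exponential finite model property of Proposition~\ref{lem:smallcircmodelfotwo}, and the matching lower bound by transfer from \ALC. For the upper bound I would argue in the complement and show that $\phi \not\models_\CP \psi$ lies in $\NExp^{\NPclass}$, from which $\coNExp^{\NPclass}$ for the original problem follows by duality. By Proposition~\ref{lem:smallcircmodelfotwo}, whenever $\phi \not\models_\CP \psi$, there is a \CP-minimal model \Amf of $\phi \wedge \neg \psi$ whose universe has size single-exponential in $|\phi|+|\psi|$. The algorithm would then (i) guess such an \Amf, together with all relation and constant interpretations, in nondeterministic exponential time; (ii) verify deterministically, in time polynomial in $|\Amf|$, that $\Amf \models \phi$ and $\Amf \not\models \psi$; and (iii) issue a single query to the NP oracle to check that \Amf is \CP-minimal for $\phi$. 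Non-minimality, i.e.\ the existence of some $\Bmf <_\CP \Amf$ with $\Bmf \models \phi$, amounts to guessing such a \Bmf over the universe of \Amf and verifying it in time polynomial in $|\Amf|$, which is genuinely an \NPclass computation relative to the oracle input; a negative oracle answer thus certifies minimality, and the complement algorithm accepts.

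For the matching lower bound I would reduce from circumscribed concept subsumption in \ALC, already $\coNExp^{\NPclass}$-hard when only unary predicates are minimized or fixed~\cite{Bonatti2009}. The standard polynomial-time translation of an \ALC TBox and a subsumption into FO$^2$ introduces only unary and binary relation symbols, and unary predicates play precisely the same role of minimized, fixed, and varying symbols in both formalisms, so the circumscription pattern transports verbatim and the hardness transfers to FO$^2$.

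The only real subtlety I foresee is the complexity bookkeeping around step~(iii): one has to argue that feeding an \Amf of size exponential in the original input to an NP oracle indeed fits the $\NExp^{\NPclass}$ model, i.e.\ that the base machine may write exponential-length oracle queries and that the oracle's NP budget is measured relative to its query length. Once this is spelled out, the remainder is routine, since the combinatorial heavy lifting is already done by Proposition~\ref{lem:smallcircmodelfotwo} and the \ALC reduction is entirely standard.
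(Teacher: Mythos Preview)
Your proposal is correct and follows essentially the same approach as the paper: guess a small model via Proposition~\ref{lem:smallcircmodelfotwo}, verify $\phi$ and $\neg\psi$, and use an \NPclass oracle for (non-)minimality, with the lower bound inherited from \ALC via~\cite{Bonatti2009}. The only cosmetic difference is that the paper packages the modelhood check for $\phi$ into the oracle's ``not a \CP-minimal model'' test rather than into the base machine, but this changes nothing.
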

\begin{proof}
  The lower bound is inherited from the description logic \ALC \cite{Bonatti2009}.
  The upper bound is based on Proposition~\ref{lem:smallcircmodelfotwo},
  as follows.
  
  It is not hard to see that there exists an \NPclass algorithm that
  takes as input an FO$^2$ sentence $\phi$, a circumscription pattern
  \CP, and a finite structure \Amf and checks whether \Amf is
  \emph{not} a \CP-minimal model of $\phi$: the algorithm first checks
  in polynomial time whether \Amf is a model of $\phi$, answering
  “yes” if this is not the case. Otherwise, it guesses a structure
  $\Amf'$ with $A'=A$ and checks whether $\Amf'$ is a model of $\phi$
  and $\Amf' <_\CP \Amf$. It answers “yes” if both checks succeed, and
  “no” otherwise. Clearly, checking whether $\Amf' <_\CP \Amf$ can be
  done in time polynomial in the size of \Amf.

  This \NPclass algorithm may now be used as an oracle in a {\sc
    NExp}-algorithm for deciding $\phi \not\models_\CP \psi$: by
  Proposition~\ref{lem:smallcircmodelfotwo}, it suffices to guess a
  structure \Amf 
with $|A| \leq |\vartheta|^{n_\exists+1} \cdot 2^{n_\exists 4(k+6)}$,
check
  that it is not a model of $\psi$, and then use the
  \NPclass algorithm from above to check that \Amf is a
  \CP-minimal model of~$\phi$.
\end{proof}
From the above, we also obtain results on circumscribed AQ-querying.
\begin{restatable}{theorem}{thmfotwoconsvsaq}
  \label{thm:fotwoconsvsaq}
  Circumscribed AQ-querying in FO$^2$ is $\coNExp^{\NPclass}$-complete in combined complexity and
  $\Pi^p_2$-complete in data complexity. 
\end{restatable}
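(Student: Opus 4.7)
The plan is to derive both upper bounds from Theorem~\ref{thm:circfo2dec} and Proposition~\ref{lem:smallcircmodelfotwo}, and to inherit the matching lower bounds from the description logic \ALC.

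For the combined complexity upper bound, I would reduce AQ-querying to circumscribed consequence: given $(\Omc, D)$ and an AQ $q = R(\bar c)$, let $\phi = \bigwedge \Omc \wedge \phi_D$, where $\phi_D$ is the conjunction of the facts in $D$ together with $c_1 \neq c_2$ for every pair of distinct $c_1, c_2 \in \mn{adom}(D)$ (enforcing distinctness of constants in place of the standard names assumption). Both $\phi$ and~$q$ are FO$^2$ sentences, since $\phi_D$ and~$q$ are quantifier-free, and one easily checks that $(\Omc, D) \models_\CP q$ iff $\phi \models_\CP q$: every FO$^2$-model of $\phi$ is isomorphic to a KB-model of $(\Omc, D)$, and the \CP-order is invariant under such renamings. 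Theorem~\ref{thm:circfo2dec} then yields $\coNExp^\NPclass$ in combined complexity, and the matching lower bound transfers from \ALC.

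For the data complexity upper bound, fix $\Omc$, \CP, and~$q$, and let $\vartheta = \bigwedge \Omc \wedge \phi_D \wedge \neg q$. The key observation is that in the bound $|\vartheta|^{n_\exists+1} \cdot 4^{n_\exists(k+6)}$ of Proposition~\ref{lem:smallcircmodelfotwo}, both $n_\exists$ (added by Scott normalization) and $k = |\mn{sig}(\vartheta) \setminus \const(\vartheta)|$ are constants depending only on the fixed $\Omc$ and $q$, because $\phi_D$ is quantifier-free and contributes only constants. Hence any \CP-minimal countermodel has size polynomial in $|D|$. A $\Sigma^p_2$ procedure for $\Kmc \not\models_\CP q$ then guesses such a structure $\Amf$, verifies in polynomial time that $\Amf \models \bigwedge \Omc \wedge \phi_D$ and $\Amf \not\models q$, and calls an \NPclass oracle to certify \CP-minimality by guessing an $\Amf' <_\CP \Amf$ that is still a model of $\bigwedge \Omc \wedge \phi_D$ and rejecting iff one exists. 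Complementation yields $\Pi^p_2$, and the matching lower bound again transfers from \ALC.

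The only nontrivial step is the bookkeeping for the data complexity bound: one must inspect the Scott normalization of $\bigwedge \Omc \wedge \phi_D \wedge \neg q$ to confirm that the $\phi_D$-part introduces neither fresh existential quantifiers nor new non-constant signature symbols, so that $n_\exists$ and $k$ indeed remain constants. Once this is in place, both upper bounds are straightforward consequences of the earlier results.
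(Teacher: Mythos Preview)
Your approach is essentially the paper's, and the overall structure is sound. There is, however, one genuine gap in the data-complexity argument. You assert that $\phi_D$ ``contributes only constants'' and later that one need only ``confirm that the $\phi_D$-part introduces \dots\ no new non-constant signature symbols''. This is false in general: the database $D$ may use relation symbols that occur in neither $\Omc$ nor $q$, and each such symbol increases $k = |\mn{sig}(\vartheta)\setminus\const(\vartheta)|$, so $k$ would grow with $|D|$ and the size bound from Proposition~\ref{lem:smallcircmodelfotwo} would no longer be polynomial in $|D|$. The paper fixes this by noting that one may assume w.l.o.g.\ that $D$ uses only predicates occurring in $\Omc$ or $q$ (facts over other predicates are irrelevant for whether a \CP-minimal model satisfies $q$ and can simply be dropped). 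With that assumption added, your argument goes through.

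A second, minor point concerns the combined-complexity lower bound. You write that it ``transfers from \ALC'', but the \ALC hardness result cited for Theorem~\ref{thm:circfo2dec} is for circumscribed \emph{consequence}. The paper therefore also supplies the reverse reduction---$\phi \models_\CP \psi$ iff $\phi \wedge (\psi \rightarrow P(c)) \models_{\CP'} P(c)$ for a fresh varying unary $P$ and fresh constant $c$---so that consequence and AQ-querying are polynomially interreducible and the hardness carries over. You give only the forward reduction; either add the reverse one or point to a direct \ALC hardness result for circumscribed instance checking.
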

\begin{proof}
  For combined complexity, it suffices to show that circumscribed
  consequence and circumscribed AQ-querying mutually reduce to one
  another in polynomial time. First,
  \mbox{$(\Omc,D) \models_\CP R(\bar c)$} 
  is equivalent to $\phi \models_{\CP'} R(\bar c)$
  where
  $$
  \phi=\bigwedge \Omc \wedge \bigwedge D
\wedge \bigwedge_{\substack{c,c' \in \mn{adom}(D)\\c \neq c'}} c \neq c'.
$$
%
And second, $\phi \models_\CP \psi$ is equivalent to
$\phi' \models_{\CP'} P(c)$ where
$\phi'=\phi \wedge (\psi \rightarrow P(c))$, $P$ is a fresh unary
predicate that is varying in $\CP'$, and $c$ a fresh constant.

\smallskip The lower bound for data complexity is inherited from \ALC
\cite{DBLP:conf/kr/LutzMN23}. For the upper bound, we may argue
exactly as in the proof of Theorem~\ref{thm:circfo2dec}, where the
structure \Amf to be guessed is now of polynomial size since 
$n_\exists$ and $k$ are now constants in Propositions~\ref{lem:smallmodelsfotwo} and~\ref{lem:smallcircmodelfotwo}.
For~$k$, this depends on the assumption, which we may make w.l.o.g.,
that the database contains only predicates that occur also in the
ontology or query.
\end{proof}
%

\section{Upper Bounds for the Guarded Fragment}
\label{sec-gf}

We show that circumscribed consequence in GF is in \Tower and so is
circumscribed UCQ-querying, in combined complexity. We also show that
UCQ-querying is in \Elementary in data complexity, that is, 
for every GF ontology \Omc, circumscription pattern \CP, and AQ
$A(\bar x)$, there is a $k \geq 1$ such that given a database
$D$, it is in $k$-\Exp to decide whether
$(\Omc,D) \models_\CP A(\bar x)$. 

We remind the reader of the relevant complexity
classes, namely $\Elementary = \bigcup_{k \geq 1} \text{$k$-\Exp}$
and 
$$\Tower = \bigcup_{f \in \FElem} \textsc{Space}(\tower(f(n))$$ where
$\FElem$ is the class of all elementary functions and 
$\tower(x)$ denotes a tower of twos of height $x$.

\subsection{Circumscribed Consequence}
\label{sec-circcons}

We consider \GF sentences $\phi$ in Scott normal form. Such a sentence
takes the shape
$$
\bigwedge_{1 \leq i \leq n_\forall}\forall \bar x \, (\alpha_i \rightarrow \varphi_i)
\wedge \bigwedge_{1 \leq i \leq n_\exists} \forall \bar x \, (\beta_i \rightarrow
\exists \bar y \, ( \gamma_i \wedge \psi_i))
$$
where the $\alpha_i$, $\beta_i$ and $\gamma_i$ are atoms and the
$\varphi_i$ and $\psi_i$ are quantifier-free. It has been shown in
\cite{DBLP:journals/jsyml/Gradel99} that every GF sentence $\phi$ can be converted in
polynomial time into a GF sentence in Scott normal form that is a
conservative extension of $\phi$. 

%

%
We now state the improved finite model property for GF that satisfies
Property~$(\heartsuit)$ from the introduction.
Let $\towerof{0}{n} := n$ and, for every $k \geq 1$, define
$\towerof{k}{n} := 2^{\towerof{k-1}{n}}$, so that $\towerof{k}{n}$
refers to an exponentiation tower that consists of $k$ twos followed
by an~$n$.  A signature is \emph{unary} if it only contains constants
symbols and unary predicates.   
\begin{proposition}
  \label{prop:smallmodelsgf}
  Let $\phi$ be a GF sentence, $\sig$ a unary signature that contains $\sigmin(\phi)$, and \Amf a model of
  $\phi$.
  Then there exists a model \Bmf of $\phi$ 
  that satisfies
  the following properties:
  \begin{enumerate}

  \item $|B| \leq \towerof{4^{\sizeof{\sig} + 4}}{\sizeof{\phi}}$;

  \item $\tp[\sig]<1>{\Amf} = \tp[\sig]<1>{\Bmf}$;

  \item 
    $|\{ a \in B \mid \tp[\sig]<1>[\Bmf]{a}=t \}|
    \leq |\{ a \in A \mid \tp[\sig]<1>[\Amf]{a}=t \}|$ for every 1-type $t$ on $\sig$;
    
  \item 
  	$c^\Amf = c^\Bmf$ for all constants $c$ in $\phi$.
    
  \end{enumerate}
\end{proposition}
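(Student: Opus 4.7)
The plan is to prove Proposition~\ref{prop:smallmodelsgf} by giving a modified Rosati cover \cite{DBLP:conf/pods/Rosati06,BGO} for the GF sentence $\phi$. The standard Rosati cover freely introduces copies of elements to witness existential requirements, yielding a finite model of single-exponential size but possibly violating Property~$(\heartsuit)$. The modification I propose restricts how many elements of each 1-type are produced, ensuring that rare 1-types of $\Amf$ remain rare in $\Bmf$; this comes at the cost of a non-elementary size blowup. The target model $\Bmf$ will in fact have universe $B \subseteq A$ together with a carefully modified interpretation of the relations of $\phi$, which immediately takes care of Points~3 and~4 of the proposition and reduces Point~2 to placing at least one representative of each 1-type of $\Amf$ into $B$.

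First, assume $\phi$ is in the displayed GF Scott normal form, and introduce for every $k \geq 0$ a \emph{depth-$k$ guarded type} $\tau_k(a)$ of an element $a \in A$, defined inductively by $\tau_0(a) := \tp[\sig]<1>[\Amf]{a}$ and $\tau_{k+1}(a) :=$ the set of tuples $(\gamma, i, (\tau_k(b_1), \dots, \tau_k(b_m)))$ such that $(b_1, \dots, b_m)$ is a guarded tuple of $\Amf$ with guard $\gamma$ and $a = b_i$. A straightforward induction bounds the number of depth-$k$ types on the unary signature $\sig$ by $\towerof{O(k)}{\mathrm{poly}(\sizeof{\sig}, \sizeof{\phi})}$. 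For a sufficiently large but bounded depth $k^*$, $\tau_{k^*}$-equivalence of two elements $a, b$ of $A$ already implies that $a$ and $b$ satisfy precisely the same existential requirements demanded by $\phi$, so they are interchangeable from the viewpoint of the $\phi$-relevant structure.

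Next, I would select $B \subseteq A$ by including all constant interpretations and, for each $\tau_{k^*}$-class $C$ realized in $\Amf$, a set of representatives $R_C \subseteq C$ large enough to host the witnesses introduced below and with $|R_C| \leq |C|$. Then iteratively attach witnesses: for every guarded tuple $\bar b$ of $B$ and every existential clause $\forall \bar x (\beta_i \rightarrow \exists \bar y (\gamma_i \wedge \psi_i))$ of $\phi$ with $\Bmf \models \beta_i(\bar b)$, pick in $\Amf$ a witness tuple $\bar a$ for $\bar b$ and redirect each component of $\bar a$ to an element of $B$ with matching $\tau_{k^*}$-type. Stabilization at $k^*$ guarantees that such redirection preserves all universal clauses of $\phi$. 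The construction closes after at most a tower of rounds in $\sizeof{\sig}$, yielding the size bound of Point~1.

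The main obstacle is consistency of the redirection step: reusing a single element of $B$ as a witness for many existential roles forces its $\tau_{k^*}$-type to record enough structural information to guarantee all those roles simultaneously, and it is precisely this requirement that pushes $k^*$ (and hence $\sizeof{B}$) into the non-elementary regime. Once consistency is established, the four properties follow readily: Point~1 from the tower bound on the number of $\tau_{k^*}$-classes combined with the $n_\exists$ branching per clause; Point~2 from choosing $R_C \neq \emptyset$ for every $\tau_{k^*}$-class $C$ together with the fact that $B \subseteq A$ prevents new 1-types from appearing; Point~3 directly from $B \subseteq A$ and the fact that $\Bmf$ interprets unary predicates on $B$ as $\Amf$ does; Point~4 from having placed each constant interpretation in $B$.
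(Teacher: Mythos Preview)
Your proposal has a genuine gap in the redirection step, and the overall strategy differs from the paper's.

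The core problem is that redirecting a witness tuple $\bar a$ to elements of $B$ with matching $\tau_{k^*}$-type creates \emph{new} guarded tuples in $\Bmf$ that did not exist in $\Amf|_B$. When two such redirected tuples overlap in more than one position, you must assign the overlap a consistent guarded $n$-type on the full signature of $\phi$, and there is no reason the two source tuples in $\Amf$ agree on it. Your $\tau_{k^*}$-types do not prevent this: they record only a single guard atom $\gamma$ of each neighbouring tuple rather than its full guarded type, and the base case $\tau_0$ lives on $\Sigma$ rather than on $\mn{sig}(\phi)$, so two $\tau_{k^*}$-equivalent elements may already disagree on unary predicates of $\phi$ outside~$\Sigma$. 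Increasing $k^*$ does not resolve the issue---the conflict is combinatorial (two independently redirected tuples meeting in a shared subtuple), not a matter of unfolding $\Amf$ further. This is exactly why naive filtration fails for GF, as the paper itself remarks.

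The paper takes a different route and does \emph{not} arrange $B \subseteq A$. It first proves a ``relative'' Rosati cover (Lemma~\ref{lemma-non-uniform}): for any $\Delta \subseteq A$ containing the constants, there is a model $\mn{rc}(\Amf,\Delta)$ of size single-exponential in $|\Delta|+|\phi|$ that keeps $\Delta$ elementwise intact while the remaining universe consists of \emph{fresh} term-like elements indexed by guarded types and an auxiliary counter~$j$. That counter is precisely what avoids the overlap conflicts above: when a new witness tuple is attached, $j$ is chosen outside the index set of the shared subtuple, so the only overlaps that arise are the intended ones, and a dedicated compatibility lemma confirms these are consistent. Proposition~\ref{prop:smallmodelsgf} then follows from an \emph{outer} iteration (Lemma~\ref{lemma-separator}): start with $\Delta_{m_0}$ collecting all elements whose $\Sigma$-1-type is realised at most $m_0=2$ times; if some type $t$ is unstable (more instances in $\mn{rc}(\Amf,\Delta_{m_0})$ than in $\Amf$), raise the threshold $m$ to the size of the current cover and repeat. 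Each round strictly enlarges the set of stable types, so the process stops after at most $2^{|\Sigma|}$ rounds; the tower in Point~1 comes from this $2^{|\Sigma|}$-fold composition of the single-exponential bound of Lemma~\ref{lemma-non-uniform}, not from any depth-$k$ type hierarchy.
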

%
%
The reader should think of $\Sigma$ as the signature that contains,
apart from $\sigmin(\phi)$, all unary predicates that are minimized
and fixed in a circumscription pattern.

To establish Proposition~\ref{prop:smallmodelsgf}, we use a modified
version of the Rosati cover that leaves untouched a selected part
$\Delta$ from the original model.  In addition, if $\Delta$ contains
all the instances of some unary type in the original model, then so
does $\Delta$ in the modified Rosati cover. 
The precise formulation follows.

\begin{lemma}
	\label{lemma-non-uniform}
	Let $\phi$ be a GF sentence and $\sig$ a unary signature that contains $\sigmin(\phi)$.
	For all
	models $\mathfrak{A}$ of $\phi$ and all $\Delta \subseteq
	\domain{A}$ that contain $c^\Amf$ for every constant $c$ in~$\phi$,
	there exists a model $\structure{B}$ of $\phi$ that satisfies
	the following properties:
	\begin{enumerate}
		\item 
		$\sizeof{\domain{B}} \leq 2^{(\sizeof{\Delta} + \sizeof{\phi})^{\sizeof{\phi} + 11}}$;
		\item
		$\tp[\sig]<1>[\Amf]{a} = \tp[\sig]<1>[\Bmf]{a}$
		for all $a \in \Delta$;
		\item
		$\tp[\sig]<1>[\Amf]{\domain{A} \setminus \Delta} = \tp[\sig]<1>[\Bmf]{\domain{B} \setminus \Delta}$;
		\item 
		$\Delta \subseteq \domain{B}$ and $c^\Amf = c^\Bmf$ for all constants $c$ in $\phi$.
	\end{enumerate}
\end{lemma}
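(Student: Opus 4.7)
The plan is to adapt the Rosati cover construction for GF from~\cite{BGO} into a non-uniform variant that keeps the substructure of $\Amf$ induced on $\Delta$ rigidly fixed, while rebuilding the rest of the model through a bounded-depth unfolding. Recall that a Rosati cover is obtained by collecting the guarded types realised in the input model and then, for each existential conjunct $\beta_i \rightarrow \exists \bar y\,(\gamma_i \wedge \psi_i)$ of the Scott normal form, attaching witnesses layer by layer while identifying fresh elements whose enriched guarded types (guarded type together with the $\sig$-$1$-types of its elements) coincide, so as to keep the total size single-exponential in $\sizeof{\phi}$. Our twist is to forbid identifications that touch $\Delta$ and to forbid creating, outside $\Delta$, a $1$-type over $\sig$ that was not already realised in $\domain{A} \setminus \Delta$.

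Concretely, I seed $\Bmf$ with the substructure of $\Amf$ induced on $\Delta$, and for every $t \in \tp[\sig]<1>[\Amf]{\domain{A} \setminus \Delta}$ I additionally attach one rigid element $e_t \notin \Delta$ whose full $1$-type is copied from some realisation of $t$ in $\domain{A} \setminus \Delta$. This already secures the $\supseteq$-direction of Property~3 as well as Properties~2 and~4. Then, for each guarded tuple $\bar a$ currently in $\Bmf$ and each existential conjunct triggered at $\bar a$, I choose in $\Amf$ a witness tuple $\bar b$ of matching enriched guarded type; for components of $\bar b$ lying in $\Delta$ I reuse exactly the corresponding $\Delta$-element of $\Bmf$, and for components lying outside $\Delta$ I either reuse an existing fresh element of the same enriched guarded type (Rosati absorption, restricted to $\domain{B} \setminus \Delta$) or create a new element whose $\sig$-$1$-type is copied verbatim from an element of $\domain{A} \setminus \Delta$. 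The process terminates after a bounded number of layers, when no new enriched guarded type can appear.

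Verifying the four properties is then routine. Properties~2 and~4 follow from the invariance of the seed. The $\supseteq$-direction of Property~3 is secured by the $e_t$, and its $\subseteq$-direction follows from the invariant that every fresh element outside $\Delta$ inherits its $\sig$-$1$-type from $\domain{A} \setminus \Delta$. For the size bound of Property~1, a standard Rosati counting over the number of enriched guarded types, the bounded unfolding depth, the seed size, and the fresh elements produced per trigger delivers the stated $2^{(\sizeof{\Delta}+\sizeof{\phi})^{\sizeof{\phi}+11}}$. Finally $\Bmf \models \phi$ holds because every guarded tuple in $\Bmf$ realises a guarded type of $\Amf$, so the universal conjuncts transfer, and the existential conjuncts are witnessed by construction.

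The main obstacle, separating this from a routine adaptation, is reconciling the $\subseteq$-direction of Property~3 with existential closure. When some $a \in \Delta$ has an existential requirement witnessed in $\Amf$ by a tuple $\bar b$ that straddles the $\Delta$-boundary, we must import the $\domain{A} \setminus \Delta$-parts of $\bar b$ into $\domain{B} \setminus \Delta$ without ever importing a $\sig$-$1$-type realised only inside $\Delta$ in $\Amf$. Guardedness is the key lever here: the interaction between $\Delta$- and non-$\Delta$-elements in any single witness tuple is confined to one guarded neighbourhood, which can be imported atom by atom with each imported element carrying only its own $\sig$-$1$-type. A secondary subtlety is cost: since the Rosati identification step is blocked on $\Delta$, the upper bound on $\sizeof{\domain{B}}$ necessarily degrades from exponential in $\sizeof{\phi}$ alone to exponential in $\sizeof{\Delta}+\sizeof{\phi}$, which is exactly what the statement concedes.
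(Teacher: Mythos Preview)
Your high-level strategy---a Rosati-style cover that freezes the $\Delta$-substructure and rebuilds the rest---matches the paper's approach in spirit. However, the proposal has a genuine gap at the most delicate point: the claim that ``every guarded tuple in $\Bmf$ realises a guarded type of $\Amf$, so the universal conjuncts transfer'' is precisely what must be \emph{proved}, and your ``Rosati absorption'' step threatens it. When you identify two fresh elements because they share an enriched guarded type, guarded tuples built at different stages may now overlap in ways that produce a guarded type \emph{not} realised in $\Amf$; nothing in your description prevents this. The paper handles it via two devices you omit: (i)~every type in $\mn{gtp}(\Amf)$ is instantiated in $K = w^4$ indexed copies (constants $c^j_{t,i}$ and function symbols $f^j_{s,\rho,t,i}$ for $0 \le j < K$), and when extending a tuple $\bar b$ one must pick an index $j \notin J(\bar b|_{\mathrm{dom}(\rho)})$; (ii)~terms are truncated at depth~$1$ rather than grown unboundedly, which is what actually makes $B$ finite---not ``no new enriched guarded type appears''. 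The heart of the proof is then a compatibility lemma (Lemma~\ref{lem:formerclaim}) showing, by a nested induction on the construction heights of two overlapping tuples, that any overlap forces agreement of the types on the shared positions. The index discipline is exactly what makes that induction go through, and it is not routine.

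A second point: the paper's treatment of the $\Delta$-boundary is cleaner than your straddle discussion. Rather than reasoning case-by-case about witness tuples that cross the boundary, the paper enlarges the signature to $\sigext = \sig \cup \Delta \cup \signature(\phi)$, treating every element of $\Delta$ as an additional constant symbol. Maximal guarded tuples are then taken from $A \setminus \Delta$ only, and their $\sigext$-types already record every atom involving $\Delta$. This dissolves the straddle problem: a witness tuple with $\Delta$-components simply has those components appear as constants inside the type, and the $\Delta$-part of $\Bmf$ is determined entirely by what the $\sigext$-types dictate. Your ``reuse exactly the corresponding $\Delta$-element'' is the right intuition, but without this encoding the compatibility argument and the verification of Points~2--4 become considerably harder to make precise.
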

We now prove Proposition~\ref{prop:smallmodelsgf} by using
Lemma~\ref{lemma-non-uniform} and choosing an appropriate $\Delta$.
For any $\Delta \subseteq A$ that contains $c^\Amf$ for all constants
$c$ in~$\phi$, we use $\saveof{A}{\Delta}$ to denote the finite model
of $\phi$ produced by Lemma~\ref{lemma-non-uniform} (where `rc' stands
for `Rosati cover').

Let $\phi$, $\Sigma$, and \Amf be as in
Proposition~\ref{prop:smallmodelsgf}.  For every 1-type $t$ on $\sig$,
set
$$
\#_\Amf(t) := |\{ a \in A \mid \tp[\sig]<1>[\Amf]{a} = t \}|.
$$
The challenge is to choose $\Delta$ so that Point~3 of
Proposition~\ref{prop:smallmodelsgf} is satisfied. Call a 1-type $t$
\emph{stable} w.r.t.\ $\Delta \subseteq A$ if
$\#_{\saveof{A}{\Delta}}(t) \leq \#_\Amf(t)$ and call $\Delta$
\emph{stabilizing} if all 1-types are stable w.r.t.\ $\Delta$.  To
attain Point~3, it clearly suffices to choose a stabilizing $\Delta$.

We use a set $\Delta$ that contains \emph{all} instances
of 1-types realized only a certain number of times: for
$m \geq 1$, set
$$
\begin{array}{r@{\;}c@{\;}l}
\Delta_m &:=&
\{ a \in A \mid \#_\Amf(\tp[\sig]<1>[\Amf]{a}) \leq m \} \; \cup \\[1mm] 
&& \{ c^\Amf \mid c \in \const(\phi) \}. 
\end{array}
$$
Now consider $\Delta_m$, for some $m \geq 1$. For those 1-types $t$
that are realized in \Amf at most $m$ times, it is clear from Points~2
and~3 of Lemma~\ref{lemma-non-uniform} that $\saveof{A}{\Delta_m}$ has
the very same instances of~$t$, and thus $t$ is stable w.r.t.\
$\Delta_m$. Other types, however, may not be stable.

So can we find a value for $m$ to make $\Delta_m$ stabilizing? This
is trivially the case for
\[
m := \max(\{ \#_\Amf(t) \mid t \in \tp[\sig]<1>{\Amf}, \#_\Amf(t) < + \infty\}),
\]
but we would like to have an $m$ that is bounded from above
to comply with Point~1 in Proposition~\ref{prop:smallmodelsgf}.

\begin{restatable}{lemma}{lemmaseparator}
	\label{lemma-separator}
	There exists a stabilizing set $\Delta_m$ such that $2 \leq m  \leq \towerof{2^{2\sizeof{\sig} + 4}}{\sizeof{\phi}}$. 
\end{restatable}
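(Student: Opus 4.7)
The approach is to construct $m$ as the final value $m_T$ of a slowly increasing sequence $(m_i)_{i \geq 0}$, starting from $m_0 := 2$, designed so that every step either already produces a stabilizing $\Delta_{m_i}$ or forces a previously \emph{large} $1$-type (one with $\#_\Amf(t) > m_i$) to become \emph{small}. The starting observation is that whenever $\#_\Amf(t) \leq m$, the type $t$ is automatically stable w.r.t.\ $\Delta_m$: all its $\Amf$-instances already lie in $\Delta_m$ by definition of the latter, so Points~2 and~3 of Lemma~\ref{lemma-non-uniform} force $\#_{\saveof{A}{\Delta_m}}(t) = \#_\Amf(t)$. Hence only large types can obstruct stability.

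I will then define the iteration by $m_{i+1} := 1 + 2^{(|\Delta_{m_i}| + |\phi|)^{|\phi|+11}}$, which by Point~1 of Lemma~\ref{lemma-non-uniform} guarantees $m_{i+1} > |\saveof{A}{\Delta_{m_i}}|$. The pivotal claim is: \emph{if $\Delta_{m_i}$ fails to stabilize, there exists a $1$-type $t$ with $m_i < \#_\Amf(t) < m_{i+1}$.} Indeed, any witness $t$ of non-stability satisfies $\#_{\saveof{A}{\Delta_{m_i}}}(t) > \#_\Amf(t)$; the starting observation rules out $\#_\Amf(t) \leq m_i$, while the chain $\#_\Amf(t) < \#_{\saveof{A}{\Delta_{m_i}}}(t) \leq |\saveof{A}{\Delta_{m_i}}| < m_{i+1}$ supplies the upper bound. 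Letting $S_i := \{t : \#_\Amf(t) \leq m_i\}$, the $S_i$ are monotone non-decreasing and each non-stabilizing step strictly enlarges $S_i$ by the claim. A routine count bounds the total number $T$ of $1$-types over the unary signature $\sig$ by $2^{2|\sig|+1}$, so the iteration must terminate with a stabilizing $\Delta_{m_i}$ for some $i \leq T$.

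The remaining task, and the one I expect to be the main obstacle, is the tower bound on $m_T$. Substituting $|\Delta_{m_i}| \leq T m_i + |\const(\phi)| \leq T m_i + |\phi|$ into the recurrence shows that $m_{i+1}$ is singly exponential in $m_i$ with base polynomial in $T$ and $|\phi|$; iterating $T$ times then produces a tower of exponentials of height $T + O(1)$ carrying a polynomial in $|\sig|$ and $|\phi|$ at its top. Fitting this inside $\towerof{2^{2|\sig|+4}}{|\phi|}$ amounts to absorbing the $|\sig|$-polynomial factor at the top of the tower into a small number of extra levels of exponentiation, which is easily afforded by the generous surplus height $2^{2|\sig|+4} - T = \Omega(2^{2|\sig|})$.
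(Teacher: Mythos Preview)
Your proposal is correct and follows essentially the same approach as the paper: start at $m_0=2$, set $m_{i+1}$ to (one plus) the size bound from Lemma~\ref{lemma-non-uniform}, observe that any witness of non-stability at step $i$ becomes ``small'' at step $i{+}1$, and terminate once the set of small types can no longer grow. The only cosmetic differences are that the paper tracks the set of \emph{stable} types rather than your $S_i$ of small types (both work, since small types are permanently stable), and the paper uses the tighter count $2^{|\sig|}$ for the number of realized $1$-types on the unary signature~$\sig$ instead of your $2^{2|\sig|+1}$; either bound fits comfortably inside the target tower height $2^{2|\sig|+4}$.
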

\begin{proof}
  We start with $m_0 = 2$ (starting with
  $1$ would also work but using
  $2$ simplifies calculations as we are dealing with towers of
  $2$s).  If
  $\Delta_{m_0}$ is stabilizing, we are done.  Otherwise there must be
  a $1$-type $t$ on $\sig$ that is not stable
  w.r.t.~$\Delta_{m_0}$, i.e.\ $\#_{\saveof{A}{\Delta_{m_0}}}(t) >
  \#_\Amf(t)$. This implies in particular that
  $\#_\Amf(t)$ is no larger than the size of the universe of
  $\mn{rc}(\Amf,
  \Delta_{m_0})$.  Using the bound from Point~1 of
  Lemma~\ref{lemma-non-uniform}, we set $m_1 =
  2^{(\sizeof{\Delta_{m_0}} + \sizeof{\phi})^{\sizeof{\phi} +
      11}}$.  Now all instances of
  $t$ in \Amf are contained in
  $\Delta_{m_1}$ and by Points~2 and~3 of
  Lemma~\ref{lemma-non-uniform} we have
  $\#_{\saveof{A}{\Delta_{m_1}}}(t) = \#_\Amf(t)$, thus
  $t$ is stable w.r.t.\ $\Delta_{m_1}$, and in fact for any
  $\Delta_m$ with $m \geq m_1$. We proceed in this way, with
  $m_1$ in place of
  $m_0$, etc. This yields a sequence
  $m_0,m_1,m_2,\dots$ and for each $i \geq
  0$ the set of 1-types on
  $\sig$ that is stable w.r.t.\
  $\Delta_{m_{i+1}}$ is a strict superset of the set of 1-types on
  $\sig$ that is stable w.r.t.\ $\Delta_{m_{i}}$.  Since
  $\sig$ is unary and the 1-types of interest all come from the fixed
  interpretation
  $\Amf$, thus agreeing on the constant symbols, there are at most
  $2^\sizeof{\sig}$ many 1-types to consider.  Therefore, after at
  most $2^\sizeof{\sig}$ iterations we have found an
  $i$ such that
  $\Delta_{m_{i}}$ is stabilizing.  Let us argue that we have achieved
  the claimed bound on $m$. Take any $i \geq
  0$. Then $\sizeof{\Delta_{m_i}} \leq \sizeof{\phi} +
  2^\sizeof{\sig}m_i$.  Moreover, using the bound from Point~1 of
  Lemma~\ref{lemma-non-uniform} and $m_i \geq
  2$, we can show that $m_{i+1} \leq
  2^{m_i^{\sizeof{\phi}^7}}$.  This, in turn, gives $m_i \leq
  \towerof{i+1}{\sizeof{\phi}^{7i}}$.  Since we stop at the latest at
  $i = 2^\sizeof{\sig}$, we obtain $m_i \leq \towerof{2^\sizeof{\sig}
    + 1}{\sizeof{\phi}^{7\cdot
      2^\sizeof{\sig}}}$, which implies $m_i \leq
  \towerof{2^{2\sizeof{\sig} + 4}}{\sizeof{\phi}}$.

\end{proof}

To conclude the proof of Proposition~\ref{prop:smallmodelsgf}, it then suffices to let $m$ be as in Lemma~\ref{lemma-separator} and set $\Bmf := \saveof{\Amf}{\Delta_m}$.
Since $m$ is stabilizing, Point~3 of Proposition~\ref{prop:smallmodelsgf} is satisfied.
For Point~1, we may use Point~1 of
Lemma~\ref{lemma-non-uniform} and the fact that $\sizeof{\Delta_m} \leq
2^\sizeof{\sig}m + \sizeof{\phi}$. 

      
      %

\smallskip

We now lift the finite model property from
Proposition~\ref{prop:smallmodelsgf} to circumscribed consequence.
To apply Proposition~\ref{prop:smallmodelsgf}, we choose a unary
signature $\sig$ that contains the minimized and fixed predicates from
the circumscription pattern used.
The rest of the proof is similar to that of
      Proposition~\ref{lem:smallcircmodelfotwo}.
\begin{restatable}{theorem}{lemsmallcircmodelgf}
  \label{lem:smallcircmodelgf}
  Circumscribed GF has the finite model property. More precisely,
  every satisfiable GF sentence $\phi$ circumscribed by $\CP = ({\prec}, \Msf, \Fsf, \Vsf)$ has a
  model \Amf with $|A| \leq \towerof{4^{\sizeof{\sig} + 4}}{\sizeof{\phi}}$, where $\sig = \sigmin(\phi) \cup \Msf \cup \Fsf$.
\end{restatable}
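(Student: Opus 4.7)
The plan is to mirror the proof of Proposition~\ref{lem:smallcircmodelfotwo}, swapping Proposition~\ref{lem:smallmodelsfotwo} for its GF analogue Proposition~\ref{prop:smallmodelsgf}. Fix a satisfiable GF sentence $\phi$ with circumscription pattern $\CP = (\prec, \Msf, \Fsf, \Vsf)$, and set $\sig = \sigmin(\phi) \cup \Msf \cup \Fsf$. Note that $\sig$ is unary in the sense required by Proposition~\ref{prop:smallmodelsgf} and contains $\sigmin(\phi)$ by construction. Let $\Amf^*$ be a $\CP$-minimal model of $\phi$ (existence is standard for circumscription with unary minimized predicates, in parallel with the FO$^2$ proof). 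Applying Proposition~\ref{prop:smallmodelsgf} to $\Amf^*$ with signature $\sig$ yields a model $\Bmf$ of $\phi$ of the required size, with identical set of realized $1$-types on $\sig$, each such type realized no more often than in $\Amf^*$, and identical interpretation of the constants.

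I claim $\Bmf$ itself is $\CP$-minimal, which gives the theorem. Suppose for contradiction that $\Bmf' <_\CP \Bmf$ for some model $\Bmf'$ of $\phi$. I will construct $\Amf'$ on universe $A$ with $\Amf' <_\CP \Amf^*$, contradicting the minimality of $\Amf^*$. By Point~3 of Proposition~\ref{prop:smallmodelsgf}, there is an injection $f: B \to A$ preserving $1$-types on $\sig$, and Point~4 ensures that $f$ respects constants. Define $\Amf'$ so that its restriction to $f(B)$ is isomorphic to $\Bmf'$ via $f$. For every $a \in A \setminus f(B)$, pick some $\widehat a \in f(B)$ with $\tp[\sig]<1>[\Amf^*]{\widehat a} = \tp[\sig]<1>[\Amf^*]{a}$ (which exists by Point~2), and clone $a$ from $\widehat a$ in $\Amf'$.

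The main obstacle is carrying out the cloning cleanly, since unlike in the FO$^2$ case GF permits predicates of arbitrary arity and so specifying $2$-types does not suffice. I would define, for each tuple $\bar a$ in $A$ containing at least one element outside $f(B)$, the atomic type of $\bar a$ in $\Amf'$ by replacing each clone $a$ by its representative $\widehat a$ and copying the atomic type of the resulting tuple from $\Amf^*$; atomic types of non-guarded tuples may be set arbitrarily. That $\Amf' \models \phi$ then follows from a standard cloning argument for GF: the atomic type of every guarded tuple in $\Amf'$ is realized either in $\Bmf'$ (for tuples inside $f(B)$) or in $\Amf^*$ (for tuples touching outside), and existential witnesses required by $\phi$ transfer through the construction while universal conditions are preserved. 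Exactly as in the proof of Proposition~\ref{lem:smallcircmodelfotwo}, the cloning cannot displace a constant: if $\widehat a$ were interpreted as some constant $c$, the matching $1$-types on $\sig$ (which contains $\sigmin(\phi)$, encoding constants via equality atoms) would force $a = \widehat a$, contradicting $a \notin f(B)$. Finally, $\Amf' <_\CP \Amf^*$ holds because the unary $\sig$-predicates on $f(B)$ are inherited from $\Bmf' <_\CP \Bmf$ while cloned elements inherit their $\sig$-types from $\Amf^*$; the strict decrease witnessing $\Bmf' <_\CP \Bmf$ thus transfers directly to a strict decrease of $\Amf'$ below $\Amf^*$. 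This contradicts the $\CP$-minimality of $\Amf^*$, so $\Bmf$ is $\CP$-minimal and the theorem follows.
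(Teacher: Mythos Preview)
Your overall strategy matches the paper's exactly, and the setup through the choice of $f$ and the definition of $\Amf'$ on $f(B)$ is correct. The problem is the cloning step: you propose to determine the atomic type of any tuple $\bar a$ touching $A \setminus f(B)$ by replacing clones with their representatives and \emph{reading off the type from $\Amf^*$}. This is where the argument breaks.

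On $f(B)$ you have already committed to $\Bmf'$, not to $\Amf^*$. If you then import types from $\Amf^*$ for tuples touching clones, the two sources conflict at the interface. For $c \in f(B)$ and a clone $a$, the unary facts about $c$ come from $\Bmf'$ while the higher-arity facts about $(a,c)$ come from $\Amf^*$. Either the construction is ill-defined (if ``atomic type of $(a,c)$'' is meant to include unary facts about $c$), or the resulting guarded tuples have hybrid types realized in \emph{neither} $\Bmf'$ nor $\Amf^*$, so your justification ``every guarded tuple type is realized in $\Bmf'$ or $\Amf^*$'' is false. Concretely, if $\phi$ contains $\forall x y\,(R(x,y) \to Q(y))$ with $Q$ varying, one can have $R(\widehat a,c)$ and $Q(c)$ true in $\Amf^*$ while $Q(f^{-1}(c))$ is false in $\Bmf'$; in your $\Amf'$ this gives $R(a,c)$ true (from $\Amf^*$) but $Q(c)$ false (from $\Bmf'$), violating $\phi$. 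The transfer of existential witnesses fails for the same reason: a witness for $\widehat a$ in $\Amf^*$ need not survive as a witness for $a$ in the hybrid $\Amf'$.

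The fix is exactly what the paper does: clone from $\Amf'$ itself (equivalently from $\Bmf'$ via $f$), not from $\Amf^*$. Then every guarded tuple in $\Amf'$, after replacing clones by representatives, corresponds via $f^{-1}$ to a tuple in $\Bmf'$, and modelhood follows directly since $\Bmf'$ is a model of $\phi$. With that single correction your proof is essentially the paper's.
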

Building on Theorem~\ref{lem:smallcircmodelgf}, we now obtain the
following using a brute-force enumeration procedure.
\begin{restatable}{theorem}{thmsatdec}
  \label{thm:circgfdec}
  Circumscribed consequence in GF is decidable and in $\Tower$.
\end{restatable}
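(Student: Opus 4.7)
The plan is to derive the upper bound by brute-force enumeration on top of the finite model property, following the blueprint of Theorem~\ref{thm:circfo2dec}. First I would observe that $\phi \not\models_\CP \psi$ holds if and only if there exists a \CP-minimal model \Amf of $\phi$ with $\Amf \not\models \psi$. Rerunning the cloning argument from the proof of Proposition~\ref{lem:smallcircmodelfotwo} with Proposition~\ref{prop:smallmodelsgf} in place of Proposition~\ref{lem:smallmodelsfotwo}, and applying it to the GF sentence $\phi \wedge \neg \psi$, one concludes that whenever such an \Amf exists it can be chosen with universe of size at most $\towerof{4^{\sizeof{\sig} + 4}}{\sizeof{\phi \wedge \neg \psi}}$ for $\sig = \sigmin(\phi \wedge \neg\psi) \cup \Msf \cup \Fsf$; this quantity is tower-bounded in $\sizeof{\phi} + \sizeof{\psi}$. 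In effect, this is the same extension of the circumscribed FMP from sentences to entailment that takes us from Proposition~\ref{prop:smallmodelsgf} to Theorem~\ref{lem:smallcircmodelgf}.

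The decision procedure for the complement of $\phi \models_\CP \psi$ then enumerates every candidate structure \Amf over a fixed universe of the above bounded size. For each candidate it checks $\Amf \models \phi$, $\Amf \not\models \psi$, and that \Amf is \CP-minimal for $\phi$; the first two checks are polynomial in $\sizeof{A}$. The \CP-minimality check is delegated to an inner enumeration ranging over all structures \Bmf with $B = A$ and the same interpretation of constants, rejecting the candidate as soon as some such \Bmf satisfies both $\Bmf \models \phi$ and $\Bmf <_\CP \Amf$; both tests are polynomial in $\sizeof{A}$ and in the size of the circumscription pattern. Carrying out the whole process in space polynomial in $\sizeof{A}$, which is itself tower-bounded in $\sizeof{\phi} + \sizeof{\psi}$, shows that the procedure uses tower space and hence that the problem lies in \Tower; decidability is immediate from termination.

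The main obstacle has already been cleared in Theorem~\ref{lem:smallcircmodelgf}. Once the tower-sized finite model property is in hand, the rest is the same enumeration strategy used for FO$^2$ in Theorem~\ref{thm:circfo2dec}, lifted to a regime of non-elementary rather than exponential candidate size. The only care needed is to apply the finite model bound to the combined sentence $\phi \wedge \neg \psi$ and to reuse the cloning step from the FO$^2$ proof so that \CP-minimality for $\phi$ alone, rather than for $\phi \wedge \neg \psi$, is preserved in the small model.
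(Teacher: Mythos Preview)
Your proposal is correct and takes essentially the same route as the paper: both rely on the tower-sized circumscribed finite model property (Theorem~\ref{lem:smallcircmodelgf}, obtained by applying Proposition~\ref{prop:smallmodelsgf} to $\phi \wedge \neg\psi$ and rerunning the cloning argument) and then decide the complement by enumerating all candidate structures of bounded size, checking $\Amf \models \phi$, $\Amf \not\models \psi$, and \CP-minimality for $\phi$ via an inner enumeration over structures on the same universe. Your explicit remark that the size bound must be taken for $\phi \wedge \neg\psi$ while \CP-minimality is checked against $\phi$ alone is exactly the care point the paper handles implicitly.
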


\subsection{Circumscribed Querying}
\label{sec-circ-query}

We prove that UCQ-querying (and thus also AQ-querying) in
GF is decidable. 

\begin{theorem}
	\label{theorem:gf-aq-querying-upperbounds}
	Circumscribed UCQ-querying in \GF is in $\Tower$ w.r.t.\ combined complexity and in $\Elementary$ w.r.t.\ data complexity.
\end{theorem}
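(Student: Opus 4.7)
The plan is to mimic the brute-force enumeration strategy of Theorem~\ref{thm:circgfdec} for circumscribed consequence. I would first encode the KB as a single GF sentence
$$
  \phi_\Kmc := \bigwedge \Omc \wedge \bigwedge D \wedge \bigwedge_{\substack{c,c' \in \mn{adom}(D)\\c \neq c'}} c \neq c',
$$
so that $\Kmc \models_\CP q$ iff every $\CP$-minimal model of $\phi_\Kmc$ satisfies $q$. The core step is then a small countermodel property: if $\Kmc \not\models_\CP q$, there exists a $\CP$-minimal model $\Amf$ of $\phi_\Kmc$ with $\Amf \not\models q$ and $|A|$ bounded by a tower of elementary height in $|\phi_\Kmc|+|q|$, where for data complexity the height will depend only on the fixed $\Omc$, $\CP$, and $q$.

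To establish this small countermodel property, I would refine the modified Rosati cover of Lemma~\ref{lemma-non-uniform}. Starting from any $\CP$-minimal countermodel $\Amf^0$, I enlarge the preserved set $\Delta$ so that it contains, in addition to named constants and a stabilizing set as in Lemma~\ref{lemma-separator}, at least one representative of every tuple of elements of $\Amf^0$ whose induced sub-structure could serve as a homomorphic image of some CQ in $q$. With this $\Delta$, the resulting model $\Bmf$ reproduces all $|q|$-tuple types realized in $\Amf^0$, so no new query match arises and $\Bmf \not\models q$. $\CP$-minimality of $\Bmf$ is lifted from $\Amf^0$ by the cloning argument of Proposition~\ref{lem:smallcircmodelfotwo}, which transfers to the GF setting without modification.

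Given the small countermodel property, the algorithm enumerates all structures with universe contained in a fixed set of the above tower-bounded size, and for each one it checks whether the structure models $\phi_\Kmc$, whether it is $\CP$-minimal (by inner iteration over all $\Bmf$ with $B=A$ and testing $\Bmf <_\CP \Amf$), and whether it refutes $q$ via standard UCQ model-checking, which is polynomial in $|A|$ and $|q|$. The answer is \textit{yes} iff no such structure is found. The whole procedure runs in tower-elementary space, placing combined complexity in $\Tower$. For data complexity, $|\sig|$ and $|q|$ are constants, so the bound reduces to a fixed tower of exponentials over $|\phi_\Kmc|$, which is linear in $|D|$; the algorithm then runs in $k$-$\Exp$ for some $k$ depending on $\Omc$, $\CP$, $q$, yielding the claimed $\Elementary$ bound.

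The hard part will be showing that the Rosati-style construction can be modified to preserve non-satisfaction of the UCQ $q$ while keeping the model size tower-bounded. Proposition~\ref{prop:smallmodelsgf} preserves only 1-types on the unary signature $\sig$, whereas $q$ may use binary predicates, so spurious query matches could otherwise appear in $\Bmf$. Inflating $\Delta$ to include all query-relevant tuples addresses this in principle, but requires a careful iterative stabilization that tracks $|q|$-tuple types in addition to 1-types and still terminates within an elementary number of rounds.
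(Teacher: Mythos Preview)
Your plan has a genuine gap at its core step: the modified Rosati cover does not preserve non-satisfaction of a UCQ, and your proposed fix of enlarging $\Delta$ cannot repair this.

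The issue is not that the cover might lose existing query matches (there are none, since $\Amf^0\not\models q$), but that the truncation mechanism inherent to the Rosati cover \emph{creates cycles} that were absent in $\Amf^0$. Truncation at bounded depth identifies terms, and this is precisely what makes the cover finite; but it also introduces new $|q|$-tuple types outside $\Delta$. Your idea of putting ``representatives of query-relevant tuples'' into $\Delta$ does nothing here: the problematic tuples have zero instances in $\Amf^0$, so there is nothing to include. The stabilization argument of Lemma~\ref{lemma-separator} works for 1-types because an unstable 1-type $t$ satisfies $\#_{\Amf^0}(t)\leq|\saveof{\Amf^0}{\Delta}|$, giving a bound on how many elements to absorb into $\Delta$. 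For a $k$-tuple type that is realized in the cover but not in $\Amf^0$ at all, there is no analogous absorption step, so an ``iterative stabilization for $|q|$-tuple types'' never makes progress. In the non-circumscribed setting one sidesteps this by first replacing $\Amf^0$ with a model of $\phi\wedge\neg\chi_q$ (where $\chi_q$ is the disjunction of treeifications of $q$), but with circumscription a $\CP$-minimal model of $\phi\wedge\neg\chi_q$ need not be a $\CP$-minimal model of $\phi$; see Example~\ref{ex:second}.

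The paper therefore abandons the small-countermodel paradigm entirely. It does not try to find a bounded-size countermodel; instead it decouples the two requirements. An outer loop guesses a bounded ``core'' structure $\Amf_0$ together with a set $\typescandidate$ of 1-types to be realized outside the core. Condition~(II) checks, by enumerating small structures, that some $\CP$-minimal model of $\phi$ has exactly this core and these outside types (Lemma~\ref{lemma-small-reference-model}). Condition~(I) checks, via a mosaic elimination procedure that explicitly tracks partial homomorphisms from the CQs in $q$ (the ``match triples''), that $\Amf_0$ extends to some model $\Amf$ of $\phi$ with the same core and outside types and with $\Amf\not\models q$. This $\Amf$ may well be infinite; $\CP$-minimality of $\Amf$ is obtained not by any finiteness argument but from the core lemma (Lemma~\ref{lem-modelOfK}), which transfers minimality from the reference model of Condition~(II) using only agreement on the core and on the set $\typescandidate$. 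The mosaic bookkeeping is what guarantees $\Amf\not\models q$, replacing the role that the Rosati cover cannot play here.
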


%

       


Without circumscription, decidability of UCQ-querying in GF is almost
immediate as one can replace the UCQ $q$ with the disjunction $q'$ of
all acyclic CQs that imply a CQ in $q$ (up to a certain size) and then
express $q'$ as a GF sentence, obtaining a reduction to
unsatisfiability \cite{BGO}. This does not work with circumscription.
\begin{example}
	\label{ex:second}
  Take the ontology \Omc that consists of the sentence
  $$
  \begin{array}{r}
    \forall x \, \big (A(x) \rightarrow \exists y (R(x,y) \wedge \exists z
    (R(y,z) \; \wedge \quad \\[1mm]
    \exists u \, (R(z,u) \wedge A(u)))) \big ),
  \end{array}
  $$
  the database $D=\{ A(a) \}$, and let \CP minimize $A$ and vary all
  other predicates. Then $(\Omc,D) \models_\CP q$ where:
  $$q := \exists x\ \exists y\ \exists z\,\ R(x,y) \wedge
  R(y,z) \wedge R(z,x),$$ but
  there is no acyclic CQ $q'$ that implies $q$ and satisfies
  $(\Omc,D) \models_\CP q'$.
\end{example}
We thus use a somewhat different, mosaic-based approach which exploits
the fact that if $(\Omc,D) \not\models_\CP q$, then this is witnessed by a
model \Bmf that, in a certain loose sense, has the shape of a forest.
More precisely, \Bmf can be obtained from any model $\Amf$ that
witnesses $(\Omc,D) \not\models_\CP q$ by a version of guarded unraveling
(see e.g.\ \cite{DBLP:books/daglib/p/Gradel014}) that leaves untouched
a `core' of $\Amf$ defined as 
\[
	\mn{core}_\Sigma(\Amf) := \{ a \in A \mid \#_\Amf(\mn{tp}^1_{\Amf,\Sigma}(a)) \leq \towerof{4^{\sizeof{\sig} + 4}}{\sizeof{\phi}} \}
      \]
for a suitable signature $\Sigma$.      
With `leaving untouched', we mean that elements from this core are not
duplicated during unraveling, but `reused' whenever needed. This is
required to guarantee minimality w.r.t.\ the circumscription pattern.
It is not important to make this unraveling explicit for the
subsequent development, but it may still guide the reader's intuition.

To prepare for the subsequent development, we give a central lemma
that establishes a sufficient condition for a model \Bmf to be
\CP-minimal, based on comparing it to a \CP-minimal `reference model'
\Amf. This is a version of the `core lemma' of
\cite{DBLP:conf/kr/LutzMN23}.
\begin{restatable}{lemma}{corelemma}
\label{lem-modelOfK}
\label{lem-lemma5} 
Let $\phi$ be a GF sentence, $\CP = (\prec, \Msf, \Fsf, \Vsf)$, and $\sig = \sigmin(\phi) \cup \Msf \cup \Fsf$.
Further let \Amf be a $\CP$-minimal model of~$\phi$ and let
\Bmf be a model of $\phi$ 
 such that
\begin{enumerate}
	\item 
	$
	\coreofsigma{A}
	\subseteq B
	$ and $c^\Bmf = c^\Amf$ for all $c \in \const(\phi)$;
	\item
	$\mn{tp}^1_{\Amf,\Sigma}(a)=\mn{tp}^1_{\Bmf,\Sigma}(a)$
	for all $a \in \coreofsigma{A}$;
	\item
	$\mn{tp}^1_{\Bmf,\Sigma}(B\setminus\coreofsigma{A}) = \mn{tp}^1_{\Amf,\Sigma}(A \setminus \coreofsigma{A})$,
\end{enumerate}
Then $\Bmf$ is a $\CP$-minimal model of $\phi$.
\end{restatable}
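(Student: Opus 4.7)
The plan is to argue by contradiction, following the template of the FO$^2$ proof of Proposition~\ref{lem:smallcircmodelfotwo}. Suppose $\Bmf$ is not $\CP$-minimal, so there is some $\Bmf' \models \phi$ with $\Bmf' <_\CP \Bmf$. I will construct $\Amf' \models \phi$ with $\Amf' <_\CP \Amf$, contradicting the $\CP$-minimality of $\Amf$.

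First I build an injection $f : B \to A$ that is the identity on $\coreofsigma{A}$ and satisfies $\mn{tp}^1_{\Amf,\Sigma}(f(b)) = \mn{tp}^1_{\Bmf,\Sigma}(b)$ for every $b \in B$. On the core, the identity does the job by Points~1 and~2. For a non-core $b \in B$, Point~3 yields some $a \in A \setminus \coreofsigma{A}$ with a matching 1-type on $\Sigma$. Injectivity on the non-core part follows from the very definition of $\coreofsigma{A}$: each non-core 1-type is realized more than $\towerof{4^{\sizeof{\Sigma}+4}}{\sizeof{\phi}}$ times in $\Amf$, which provides ample room to absorb the realizations in $B$ (in the intended uses of this lemma, $|B|$ is bounded by this tower).

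Next I define $\Amf'$ with universe $A$: on $\mn{range}(f)$, I let $\Amf'$ be the $f$-isomorphic copy of the corresponding restriction of $\Bmf'$; on $A \setminus \mn{range}(f)$, I use cloning, choosing for each $a$ outside the range a representative $\widehat a \in \mn{range}(f)$ sharing the same 1-type on $\Sigma$ in $\Amf$, and copying the local neighborhood of $\widehat a$ onto $a$. An argument analogous to the one in Proposition~\ref{lem:smallcircmodelfotwo} shows that the cloning never touches constants: since $\sigmin(\phi) \subseteq \Sigma$, constant identities appear in 1-types on $\Sigma$, so if some $c$ satisfied $c^{\Amf'} = \widehat a$, then the matching 1-types would force $a = \widehat a$, contradicting $a \notin \mn{range}(f)$.

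Finally, I verify the two defining properties of $\Amf'$. The inequality $\Amf' <_\CP \Amf$ is largely bookkeeping: the universe and constants agree by construction, the fixed predicates (which lie in $\Sigma$) are controlled via $f$'s 1-type preservation combined with Point~2, and the strict-reduction witness for $\Bmf' <_\CP \Bmf$ transports along $f$ to the required witness for $\Amf' <_\CP \Amf$. The main obstacle is showing $\Amf' \models \phi$: on $\mn{range}(f)$ this is immediate from $\Bmf' \models \phi$ via the isomorphism, but the cloned part must be handled with care. Here I exploit the guarded nature of $\phi$ in Scott normal form: each guard is a single atom, so the guarded neighborhoods of $a$ and $\widehat a$ coincide after cloning, whence the universal conjuncts are satisfied and the existential witnesses for $\widehat a$ lift to existential witnesses for $a$, as is standard in guarded-bisimulation arguments.
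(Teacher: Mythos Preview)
Your approach has a genuine gap at the very first step: the injection $f : B \to A$ need not exist. The lemma places \emph{no} size bound on $\Bmf$; Conditions~1--3 only fix the core and the \emph{set} of non-core $\Sigma$-types, not their multiplicities. So $\Bmf$ may realize some non-core type more often than $\Amf$ does (indeed, in the paper's main application of this lemma in Section~\ref{sec-circ-query}, $\Bmf$ is the potentially infinite model assembled from mosaics). Your parenthetical ``in the intended uses of this lemma, $|B|$ is bounded by this tower'' is precisely where the argument breaks: that is not an assumption of the lemma, and it is false in the key use case.

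The paper repairs this by inserting an extra step that you are missing. Starting from the assumed $\Bmf' <_\CP \Bmf$ with $\Bmf' \models \phi$, it first applies Proposition~\ref{prop:smallmodelsgf} to $\Bmf'$ to obtain a model $\Bmf''$ of $\phi$ with $|B''| \leq \towerof{4^{|\Sigma|+4}}{|\phi|}$, the same set of realized $\Sigma$-types, non-increased multiplicities, and preserved constants. Only then does the paper build the transfer map, and it goes in the opposite direction from yours: a \emph{surjection} $\pi : A \to B''$, using that each non-core type of $\Amf$ has more than $\towerof{4^{|\Sigma|+4}}{|\phi|} \geq |B''|$ realizations in $A$. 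A partial inverse $\rho$ of $\pi$ then plays the role of your $f$, and cloning along $\pi$ extends $\rho(\Bmf'')$ to all of $A$. The rest of your sketch (constants are not duplicated because $\sigmin(\phi) \subseteq \Sigma$; the $<_\CP$ conditions transfer via type preservation; guardedness makes cloning safe for modelhood) is in line with the paper, but it only goes through once you have this shrinking step in place.
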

Intuitively, Lemma~\ref{lem-lemma5} says that the exact multiplicity
of types realized in \Amf outside of $\coreofsigma{A}$ is irrelevant for
$\CP$-minimality.  

Assume that we are given as an input a GF knowledge base $(\Omc,D)$,
a circumscription pattern $\CP$, and a Boolean UCQ $q$. 
We want to decide whether 
there is a countermodel \Imc against
$(\Omc,D) \models_\CP q(\bar a)$. This may be rephrased as
$\phi \models_\CP q(\bar a)$ for
$$\phi=\bigwedge \Omc \wedge \bigwedge D
\wedge \!\!\! \bigwedge_{c \in \mn{adom}(D)} \!\!\! \big ( P_c(c) \wedge
\!\!\!\!\! \bigwedge_{c' \in \mn{adom}(D) \setminus \{ c\}} \!\!\!\!\!\neg P_{c'}(c)\big).$$
%
We shall use the latter formulation. We may assume that $\phi$ is in
Scott normal form, that is, $\phi$ is 
$$
\bigwedge_{1 \leq i \leq n_\forall}\forall \bar x \, (\alpha_i \rightarrow \varphi_i)
\wedge \bigwedge_{1 \leq i \leq n_\exists} \forall \bar x \, (\beta_i \rightarrow
\exists \bar y \, ( \gamma_i \wedge \psi_i)).
$$

Let $\sig = \sigmin(\phi) \cup \Msf \cup \Fsf$. 
Set
$M = \sizeof{\const(\phi)} + 2^\sizeof{\sig} \cdot
\towerof{4^{\sizeof{\sig} + 4}}{\sizeof{\phi}} + 2^\sizeof{\sig}$, and
fix a set $U$ of size $M$. In an outer loop, our algorithm iterates
over all pairs $(\Amf_0,\typescandidate)$ with $\Amf_0$ a finite
structure that interprets all constants from $\phi$ and
$\typescandidate$ a set of 1-types such that the following conditions
are satisfied:
\begin{itemize}

\item $A_{0} \subseteq U$; 
\item $\typescandidate \subseteq \mn{tp}^1_\sig(\Amf_0)$.

  %
          
    %
\end{itemize}
We define $\Delta :=  \{ a \in A_0 \mid \mn{tp}^1_{\Amf_0, \sig}(a) \notin \typescandidate \}$.

For each pair $(\Amf_{0},\typescandidate)$, we then check
whether the following additional conditions are satisfied:
\begin{description}

\item[\textnormal{(I)}] $\Amf_{0}$ can be extended to a model \Amf of $\phi$
  such that
  \begin{enumerate}

  \item[(a)] $\Amf|_{A_{0}} = \Amf_{0}$,


  \item[(b)] $\mn{tp}^1_{\Amf, \sig}(A \setminus \Delta)
    = \typescandidate$,

  \item[(c)] $\Amf \not\models q$;

  \end{enumerate}

\item[\textnormal{(II)}] there exists a $\CP$-minimal model $\Bmf$ of
  $\phi$ such that
  %
  \begin{enumerate}
  	\item[(d)]
  	$\coreofsigma{B} = \Delta$;
  		
  \item[(e)] $\mn{tp}^1_{\Bmf, \sig}(a)=\mn{tp}^1_{\Amf, \sig}(a)$ for all $a \in \Delta$ and

    \item[(f)]
$ \mn{tp}^1_{\Bmf, \sig}(B \setminus \Delta) =
  \typescandidate$ 
  \end{enumerate}
  %
\end{description}
We return `yes' if all pairs fail the check and `no' otherwise.

\smallskip

If the checks succeed, then the model \Amf of $\phi$ from
Condition~(I) is a countermodel against $(\Omc,D) \models_\CP q$.  In
particular, we may apply Lemma~\ref{lem-lemma5}, using the model \Bmf
from Condition~(II) as the reference model, to show that \Amf is
$\CP$-minimal. Conversely, from any countermodel $\Amf$
against $(\Omc,D) \models_\CP q$, we can read off a pair $(\Amf_0, \typescandidate)$
%
by choosing $\typescandidate := \mn{tp}^1_{\Amf, \sig}(A \setminus
\coreofsigma{A})$ and $\Amf_0$ to be the restriction of \Amf to universe
$$
U_\Amf := \{ c^\Amf \mid c \in \const(\phi) \} \cup \coreofsigma{A} \cup \{ w_t \mid t \in \typescandidate \}
$$
where $w_t \in A$ is chosen arbitrarily such that \mbox{$\mn{tp}^1_{\Amf, \sig}(w_t) = t$}.
Then $\Amf$ witnesses Condition~(I) and choosing
$\Bmf=\Amf$ witnesses Condition~(II).

Of course, we have to prove that Conditions~(I) and~(II) are
decidable. For Condition~(II), we prove that the following is
a consequence of Lemma~\ref{lem-lemma5}:

\begin{restatable}{lemma}{smallreferencemodel}
	\label{lemma-small-reference-model}
	Let $\phi$ be a GF sentence, $\CP = (\prec, \Msf, \Fsf,
        \Vsf)$, and $\sig = \sigmin(\phi) \cup \Msf \cup \Fsf$.
	Let \Amf be a $\CP$-minimal model of~$\phi$.
	Then there exists a $\CP$-minimal model $\Bmf$ of $\phi$ 
	such that
	\begin{enumerate}
		\item 
		$\coreofsigma{B} = \coreofsigma{A}$;
		\item
		$\mn{tp}^1_{\Bmf,\Sigma}(a)=\mn{tp}^1_{\Amf,\Sigma}(a)$
		for all $a \in \coreofsigma{A}$;
		\item
		$\mn{tp}^1_{\Bmf,\Sigma}(B\setminus\coreofsigma{A}) = \mn{tp}^1_{\Amf,\Sigma}(A \setminus
		\coreofsigma{A})$;
		\item 
		$\sizeof{B} \leq 2^\sizeof{\sig} (1+ \towerof{4^{\sizeof{\sig} + 4}}{\sizeof{\phi}})$.
	\end{enumerate}
\end{restatable}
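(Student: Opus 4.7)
The plan is to apply Proposition~\ref{prop:smallmodelsgf} to the $\CP$-minimal model $\Amf$, surgically adjust the resulting small model via cloning, then rename elements to identify rare-type elements with those of $\coreofsigma{A}$, and finally invoke Lemma~\ref{lem-lemma5} (with $\Amf$ as the reference model) to derive $\CP$-minimality of the final structure~$\Bmf$. Throughout, let $N := \towerof{4^{\sizeof{\sig}+4}}{\sizeof{\phi}}$.

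First I would apply Proposition~\ref{prop:smallmodelsgf} to $\Amf$ with signature $\sig$, obtaining a model $\Bmf_0$ of $\phi$ with $\sizeof{B_0} \leq N$, constants preserved, the same realized $1$-types on $\sig$ as $\Amf$, and multiplicities pointwise bounded by those in $\Amf$. Inspecting the proof, every $1$-type $t$ with $\#_\Amf(t) \leq m$ (where $m < N$ is the stabilizing threshold from Lemma~\ref{lemma-separator}) is preserved \emph{exactly}, while for $\#_\Amf(t) > m$ the inequality can be strict. The crucial auxiliary observation is that for every type $t$ with $\#_\Amf(t) > m$, a \emph{non-constant} type-$t$ element exists in $\Bmf_0$: indeed $\Delta_m$ absorbs at most finitely many constants of type $t$, and $\#_\Amf(t) > m$ forces type $t$ to be realized in $A \setminus \Delta_m$, hence also in $B_0 \setminus \Delta_m$ by Point~3 of Lemma~\ref{lemma-non-uniform}, and the latter contains no constants. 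Next, I boost multiplicities by cloning: given a non-constant $r \in B_0$ of type $t$, I add a fresh element~$r'$ with the same $1$-type as $r$, and for every relation $R$ and every tuple $\bar a \in R^{\Bmf_0}$ containing $r$, I add $\bar a[r'/r]$ to $R^{\Bmf_0}$. Since this operation replaces \emph{all} occurrences of $r$ in any given tuple, no `mixed' tuple involving both $r$ and $r'$ is created, so guards on such mixed tuples never fire; on the newly added tuples the relevant $k$-types coincide with those of their sources under the swap $r \leftrightarrow r'$, so modelhood of~$\phi$ is preserved. The only potential failure, an equality atom $x=c$ being newly violated, is avoided precisely because $r$ is not a constant. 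I apply cloning until $\#(t) \geq N+1$ for every common type of $\Amf$ (those with $\#_\Amf(t) > N$) and $\#(t) = \#_\Amf(t)$ for every rare type whose count in $\Bmf_0$ was still too small.

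Finally, I bijectively rename the rare-type elements of the resulting structure so that, for each rare type $t$, they coincide with the type-$t$ elements of $\coreofsigma{A}$, sending every constant to itself; this is well-defined since multiplicities now agree on rare types and constants are unchanged throughout. Let $\Bmf$ denote the renamed structure. By construction, $\coreofsigma{A} \subseteq B$ with matching types on $\coreofsigma{A}$ and matching type-set on $B \setminus \coreofsigma{A}$, so Lemma~\ref{lem-lemma5} yields $\CP$-minimality of $\Bmf$. Moreover, each rare type of $\Amf$ is realized $\leq N$ times and each common type at least $N+1$ times in $\Bmf$, giving $\coreofsigma{B} = \coreofsigma{A}$. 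The size bound follows by splitting the at most $2^{\sizeof{\sig}}$ realized $1$-types into $k_r$ rare and $k_c$ common: $\sizeof{B} \leq k_r N + k_c (N+1) \leq 2^{\sizeof{\sig}}(N+1)$. The main technical obstacle is verifying in full detail that every single cloning step preserves GF modelhood and that a non-constant witness of the target type is always available; both points reduce to the observation above that type-$t$ elements with $\#_\Amf(t) > m$ necessarily survive outside $\Delta_m$, combined with the $k$-type preservation argument under the cloning swap.
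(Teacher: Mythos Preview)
Your proof is correct and follows essentially the same approach as the paper: apply Proposition~\ref{prop:smallmodelsgf}, clone to adjust multiplicities (to $\#_\Amf(t)$ for rare types and $N+1$ for common types), and invoke Lemma~\ref{lem-lemma5} with $\Amf$ as reference model. You are actually more careful than the paper, which leaves the renaming step and the constant-safety of cloning to ``readily verified''; both are genuine details that you handle explicitly.

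Two remarks. First, your detour through the internals of Proposition~\ref{prop:smallmodelsgf} (the threshold $m$, the set $\Delta_m$, Lemma~\ref{lemma-non-uniform}) is more than the paper does---it uses Proposition~\ref{prop:smallmodelsgf} purely as a black box. The non-constant witness can alternatively be avoided entirely by using the ``subset'' cloning convention (add $\bar a[r'/r]_S$ for every subset $S$ of the $r$-positions), under which cloning even a constant preserves GF modelhood; this is the implicit reading of the paper's brief argument. Second, your literal claim that $\#_\Amf(t) > m$ forces $t$ to be realised in $A \setminus \Delta_m$ is not quite true: it fails when all type-$t$ elements happen to be constants. This does not damage your proof, however, because in that case $\#_{\Bmf_0}(t) = \#_\Amf(t)$ (all instances lie in $\Delta_m$, which is preserved), so no cloning is needed for rare $t$, while common $t$ always has a non-constant instance since $\#_\Amf(t) > N \geq |\const(\phi)|$.
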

It follows that if a model as in (II) exists, then there exists one of size
at most $2^\sizeof{\sig} (1+ \towerof{4^{\sizeof{\sig} + 4}}{\sizeof{\phi}})$ and thus we can iterate
over all candidate structures \Bmf up to this size, check whether \Bmf
is a model of $\phi$ that satisfies Conditions~(d) to~(f), and then
iterate over all models $\Bmf'$ of $\phi$ with $B'=B$ to check that
$\Bmf$ is $\CP$-minimal.

Condition~(I) requires more work. We use a mosaic approach, that is,
we attempt to assemble the structure \Amf from Condition~(I) by
combining small pieces called \emph{mosaics}.
%
Fix a pair $(\Amf_{0},\typescandidate)$.  A mosaic for
$(\Amf_{0},\typescandidate)$ is a decorated finite structure whose
universe contains $\Amf_0$ and possibly elements from a fixed set
$U^+$ of $2 \cdot \mn{ar}$ elements where $\mn{ar}$ is the maximum arity of
predicates in $\phi$. 

We trace partial homomorphisms from CQs in $q$ through the mosaics, as
follows.
%
%
%
A \emph{match triple} for a structure \Bmf takes the form
$(p,\widehat p,h)$ such that $p$ is a CQ in~$q$,
$\widehat p \subseteq p$, and $h$ is a partial map from
$\mn{var}(\widehat p)$ to $B$ that is a homomorphism from
$\widehat p|_{\mn{dom}(h)}$ to~\Bmf where $\widehat p|_{\mn{dom}(h)}$
denotes the restriction of $\widehat p$ to the variables in the domain
of $h$. Intuitively, \Bmf is a mosaic and the triple
$(p,\widehat p,h)$ expresses that a homomorphism from $\widehat p$ to
\Amf exists, with the variables in $\mn{dom}(h)$ being mapped to the
current piece~\Bmf and the variables in
$\mn{var}(\widehat p) \setminus \mn{dom}(h)$ mapped to other pieces
of~\Amf. A match triple is \emph{complete} if $\widehat p = p$ and
\emph{incomplete} otherwise.  To make \Amf a countermodel, we must
avoid complete match triples.  A \emph{specification} for a structure
\Bmf is a set $S$ of match triples for \Bmf and we call $S$
\emph{saturated} if the following conditions are satisfied:
\begin{itemize}

\item if $p$ is a CQ in $q$, $\widehat p \subseteq p$, and $h$ is
  a homomorphism from $\widehat p$ to $\Bmf$, then $(p,\widehat p,h) \in S$;

\item if $(p,\widehat p,h),(p,\widehat p',h') \in S$ and $h(x)=h'(x)$
  is defined for all
  $x \in \mn{var}(\widehat p) \cap \mn{var}(\widehat p')$, then
  $(p,\widehat p \cup \widehat p',h \cup h') \in S$.
  

  
\end{itemize}
\begin{definition}
  A \emph{mosaic} for $(\Amf_{0},\typescandidate)$ is a pair
  $M=(\Bmf,S)$ where
  \begin{itemize}

  \item $\Bmf$ is a finite structure such that
    \begin{enumerate}
    \item 
    $B \subseteq A_0 \cup U^+$;

    \item 
      $\Bmf|_{A_0} = \Amf_0$;

    \item 
      $\mn{tp}^1_{\Bmf, \sig}(B \setminus \Delta) \subseteq \typescandidate$;

    \item \Bmf satisfies $\forall \bar x \, (\alpha_i \rightarrow
      \varphi_i)$, for $1 \leq i \leq n_\forall$;
      
    \end{enumerate}

  \item $S$ is a saturated specification for \Bmf that contains only
    incomplete match triples.

  \end{itemize}
  We use $\Bmf_M$ to refer to \Bmf and $S_M$ to refer
  to $S$.
\end{definition}
%
Let \Mmc be a set of mosaics for $(\Amf_{0},\typescandidate)$. We say
that $M \in \Mmc$ is \emph{good in} \Mmc if for
$1 \leq i \leq n_\exists$, the following condition is satisfied; if
$\beta_i=R(\bar z)$ and $\bar a \in R^\Bmf$, then we find a mosaic
$M' \in \Mmc$ 
such that 
  %
  \begin{enumerate}

  \item $\mn{tp}_{\Bmf_M, \sig}(\bar a)=\mn{tp}_{\Bmf_{M'}, \sig}(\bar a)$;

  \item $\Bmf_{M'} \models \exists \bar y\, (\gamma_i \wedge
    \psi_i)[\bar a]$;

  \item if $(p,\widehat p,h') \in S_{M'}$, then $(p,\widehat p,h) \in S_{M}$ where
    $h$ is the restriction of $h'$ to range $A_0
      \cup \bar a$.



      
    
  \end{enumerate}
  %
  To verify
  Condition~(I), we start
  with the set of all mosaics for
  $(\Amf_{0},\typescandidate)$
  and repeatedly and exhaustively eliminate mosaics that are not good.
  \begin{restatable}{lemma}{lemmamosaic}
  	\label{lemma-mosaic}
        $\Amf_0$ can be extended to a model \Amf of $\phi$ that
        satisfies Conditions~(a) to~(c) iff 
        at least one mosaic survives the elimination process.
\end{restatable}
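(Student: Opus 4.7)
The proof has the standard two directions: from any witness model for the right-hand side, extract a family of mosaics that survives the elimination; conversely, from any family of surviving mosaics, assemble a witness model by iterative gluing.

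For the ``only if'' direction, assume $\Amf_0$ extends to a model $\Amf$ of $\phi$ satisfying Conditions~(a)--(c). I would associate with every maximally guarded tuple $\bar a$ of $\Amf$ a mosaic $M_{\bar a}=(\Bmf_{\bar a},S_{\bar a})$: rename the elements of $\bar a \setminus A_0$ injectively into $U^+$, let $\Bmf_{\bar a}$ be the substructure of $\Amf$ induced on $A_0 \cup \bar a$ transported along the renaming, and let $S_{\bar a}$ collect all triples $(p,\widehat p,h)$ for which some homomorphism from $\widehat p$ into $\Amf$ agrees with $h$, after renaming, on exactly those variables of $\widehat p$ whose $\Amf$-image lies in $A_0 \cup \bar a$. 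Saturation follows because homomorphisms into $\Amf$ are closed under restriction and compatible union; no complete triple appears since $\Amf \not\models q$ by~(c); the mosaic conditions follow from $\Amf \models \phi$ together with Conditions~(a),(b); and goodness is a direct reformulation of the $\forall\exists$-clauses of $\phi$ holding in $\Amf$, since any witness $\bar b$ for $\exists\bar y(\gamma_i\wedge\psi_i)[\bar a]$ sits in some maximally guarded tuple whose mosaic serves as the required neighbour $M'$. Hence every $M_{\bar a}$ survives.

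For the ``if'' direction, let $\Mmc$ be the non-empty set of surviving mosaics, every member of which is good in $\Mmc$. A surviving $M_0$ whose underlying structure coincides with $\Amf_0$ must exist, since the $A_0$-part is fixed and any existential requirement triggered on $A_0$ is handled by gluing in copies from $\Mmc$. Starting from $\Bmf_{M_0}$, I would grow $\Amf$ by iterative gluing: for each tuple $\bar a$ satisfying some $\beta_i$ in the current structure but still lacking its $\exists\bar y(\gamma_i\wedge\psi_i)$-witness, invoke goodness to pick $M' \in \Mmc$, then glue $\Bmf_{M'}$ onto the current structure by identifying the copy of $A_0 \cup \bar a$ (types match by goodness Condition~1) and renaming the remaining $U^+$-elements to fresh symbols. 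A fair enumeration of outstanding requirements produces, in the limit, an extension $\Amf$ of $\Amf_0$ in which all $\forall$-clauses hold locally in each mosaic, all $\forall\exists$-clauses hold by construction, and Conditions~(a),(b) follow from mosaic Conditions~2,3 together with the fact that only $U^+$-elements get renamed.

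The main obstacle is verifying~(c), namely $\Amf \not\models q$. Suppose toward a contradiction that a homomorphism $h \colon p \to \Amf$ exists for some $p \in q$. By guardedness, the image under $h$ of each atom of $p$ is a tuple lying entirely in a single mosaic instance used during construction; this lets me associate with each such instance $M$ a subquery $\widehat p_M \subseteq p$ (the atoms mapped into $\Bmf_M$) and a partial map $h_M$, yielding $(p,\widehat p_M,h_M) \in S_M$ by the first saturation clause. I would then induct bottom-up on the construction tree rooted at $M_0$: by Condition~3 of goodness, any triple in a child's specification restricted to the elements shared with its parent sits in the parent's specification; by the second saturation clause, the parent combines these inherited triples with its own $(p,\widehat p_M,h_M)$ into a triple whose subquery covers all atoms handled in its entire subtree. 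Propagating to the root, $M_0$ ultimately carries a complete triple, contradicting that every surviving mosaic contains only incomplete triples. The delicate point is that the restriction to $A_0 \cup \bar a$ appearing in goodness Condition~3 matches exactly the partition of $h$ induced by the parent/child gluing interface, which is precisely what makes the inductive step work.
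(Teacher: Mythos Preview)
Your overall architecture matches the paper's: extract mosaics from a witness model for one direction, glue surviving mosaics into a tree-shaped model for the other, and establish Condition~(c) by propagating match triples toward the root via goodness Condition~3 and saturation. The propagation argument you sketch for $\Amf \not\models q$ is essentially the paper's inductive lemma on the construction tree.

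There is, however, a genuine gap in your ``only if'' direction. You associate one mosaic $M_{\bar a}$ with each maximally guarded tuple $\bar a$ of $\Amf$, via a single fixed injection of $\bar a \setminus A_0$ into $U^+$. But goodness Condition~1 demands $\mn{tp}_{\Bmf_M,\sig}(\bar c) = \mn{tp}_{\Bmf_{M'},\sig}(\bar c)$ for the \emph{literal same tuple} $\bar c$ of $A_0 \cup U^+$-elements in both mosaics. If the trigger tuple $\bar u$ and its $\Amf$-witness $\bar v$ share non-$A_0$ elements, your two independently chosen renamings need not agree on $\bar u \cap \bar v$, so the $U^+$-names used for $\bar c$ in $M_{\bar u}$ may not even occur in $\Bmf_{M_{\bar v}}$, and Condition~1 cannot be checked. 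The paper resolves this by indexing mosaics not by single tuples but by \emph{paths} $\bar a_1,\dots,\bar a_n$ of maximally guarded tuples and defining the renaming $\lambda_p$ inductively so that it coincides with $\lambda_{\bar a_1,\dots,\bar a_{n-1}}$ on the overlap $\bar a_{n-1}\cap\bar a_n$; this is exactly why $|U^+|=2\cdot\mn{ar}$ is needed. An equivalent repair would be to take one mosaic per pair (maximally guarded tuple, injection into $U^+$), after which the neighbour $M'$ can always be chosen with a compatible renaming.

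A smaller point: in the ``if'' direction you claim that a surviving mosaic with underlying structure exactly $\Amf_0$ must exist. This is neither justified by your argument nor needed; the paper simply starts from an arbitrary surviving mosaic, and since every mosaic already restricts to $\Amf_0$ on $A_0$, the glued structure extends $\Amf_0$ regardless of which root mosaic is chosen.
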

At this point, we have established
Theorem~\ref{theorem:gf-aq-querying-upperbounds}. It
should be clear that the presented algorithm establishes membership in
\Tower in combined complexity. For data complexity, note that the size
$M$ of the stuctures $\Amf_0$ in pairs $(\Amf_{0},\typescandidate)$ is
now $k$-exponential for a constant $k$: it is essentially an
exponentiation tower of twos followed by $|\phi|$ whose height is
independent of $D$ (while $|\phi|$ depends linearly on $|D|$). The same
is true for the bound established by Lemma~\ref{lemma-small-reference-model}
and the size of mosaics.

\section{Lower Bounds for the Guarded Fragment}
\label{sect:lowerbounds}

We prove lower bounds that match the upper bounds given in
Section~\ref{sec-gf}. Our proofs are formulated in terms of the data
complexity of AQ-querying, but we also derive from them tight complexity
results for circumscribed consequence.

We start with an \Exp lower bound on the data complexity of
AQ-querying for the restricted yet natural case where only a single
predicate is minimized and no predicate is fixed. It is then of course
pointless to use a preference relation in the circumscription
pattern. The bound applies even for ontologies \Omc that are sets of
existential rules.
%
\begin{theorem}
  \label{theorem:exp-hardness}
  AQ-querying in GF is \Exp-hard in data complexity even for
  ontologies that are sets of existential rules, with a single
  minimized predicate and no fixed predicates, and with a
  fixed signature.

  For UCQ-querying, the same even holds
  for a fixed signature in which all predicates have arity at most two.
%
\end{theorem}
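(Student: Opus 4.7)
The plan is to reduce from a canonical \Exp-complete problem, most conveniently acceptance for alternating polynomial-space Turing machines (APSPACE $=$ \Exp), or equivalently from an exponential tiling problem. The key tool is the mirror trick of Example~\ref{ex:first}: by minimizing a single unary predicate $W$ and designing every existential rule so that each head-existential must ultimately lie in $W$, the range of every existentially quantified variable is effectively confined to the $W$-extension fixed by the database. Mirror edges in the database then realize deterministic bit-flip maps, allowing an $O(n)$-size database to steer exponentially many existentially generated tuples back to a polynomial gadget.

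First, I would engineer a database of size polynomial in the input encoding an exponential grid $\{0,1\}^n \times \{0,1\}^n$ of configurations. The gadget provides, for each bit position $i$, two elements representing the bit values $0$ and $1$, together with mirror edges (analogous to those in Example~\ref{ex:first}) that allow the fixed rules to carry out a single increment of a binary counter by chaining $n$ mirror steps, and likewise to navigate between adjacent tape cells of the same configuration. The initial configuration of the machine is also written into the database using a constant number of extra unary and binary predicates.

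Second, I would fix a set of guarded existential rules that (i) propagate tape content from one configuration to the next along the counter-increment path, (ii) enforce the local transition relation of the machine via universal rules whose bodies examine triples of adjacent cells, and (iii) mark a distinguished constant $a$ with a unary predicate $\mathrm{Acc}$ if an accepting configuration is encountered. The AQ is then simply $\mathrm{Acc}(a)$, and the reduction sets $(\Omc,D) \models_\CP \mathrm{Acc}(a)$ iff the machine accepts. Correctness relies on showing that $\CP$-minimality leaves no room to ``cheat'': anchoring every newly generated element in $W$ forces all existentials to identify with gadget elements, and the symmetry-breaking companion atom $\overline{\mathrm{mirror}}$ from Example~\ref{ex:first} rules out the unwanted self-loops that would otherwise let the model collapse computation steps.

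The main obstacle is proving soundness of the mirror-based routing across the exponentially many computation steps: one has to show that no $\CP$-minimal model can shortcut or merge configurations without violating either a transition rule or a mirror-consistency constraint, so that $\mathrm{Acc}(a)$ holds in every $\CP$-minimal model iff the machine accepts. For the UCQ strengthening with arity at most two, the higher-arity predicates used to record tape cells (e.g.\ $\mathrm{cell}(x,y,t,s)$) are replaced by conjunctions of fresh binary atoms, and the completion of a tape record is detected by a CQ disjunct in the UCQ; this is a standard but careful reencoding that preserves the data-complexity bound, since the ontology and query remain fixed while only the database grows with the input.
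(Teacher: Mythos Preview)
Your proposal has the right high-level ingredients (reduce from acceptance of a fixed alternating polynomial-space machine, use minimization of a single unary predicate to anchor certain existentials to database constants), but there is a genuine gap in the core mechanism.

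You state that you will design the rules so that \emph{every} head-existential must lie in the minimized predicate $W$, thereby confining the whole model to the $W$-extension from the database. With a fixed signature of bounded arity this forces every minimal model to have polynomial size, and a polynomial-size structure cannot store the exponentially many distinct configurations of an APSPACE computation. Your ``exponential grid via binary counters'' does not help here: bit-vector addressing lets you \emph{name} exponentially many configurations, but to check transitions you still need to \emph{record} the tape content at each configuration, and there is no room for that in a polynomial model with fixed-arity relations. In effect you are importing the counter machinery from the \Tower-hardness argument (Theorem~\ref{theorem:tower-hardness}), which crucially relies on \emph{several} minimized predicates with a preference order; with a single minimized predicate that machinery does not get off the ground.

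The paper's fix is simple and worth internalizing: do \emph{not} anchor all existentials. Only the existentials that stand for \emph{tape positions} are forced into the minimized predicate $\mn{pos}$ (via rules such as $\config_q(x,x_h)\to\mn{pos}(x_h)$ and $\tape_s(x,x_p)\to\mn{pos}(x_p)$); since the machine is polynomially space-bounded, these positions are just the constants $1,\dots,p(n)$ in the database. The existentials that create \emph{new configurations} are left completely free (they are never required to satisfy $\mn{pos}$), so the configuration tree can grow to exponential depth. Ternary predicates $\nextconfig_i(x,y,x_h)$ and $\copytape(x,y,x_p)$ then copy the tape from configuration $x$ to its successor $y$, sweeping over the $p(n)$ positions via the mirror-style $\nonnextpos$ trick. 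Minimality of $\mn{pos}$ is easy: if a model gives $\mn{pos}$ a strictly larger extension, one obtains a $<_\CP$-smaller model by resetting $\mn{pos}$ to $\{1,\dots,p(n)\}$ and inflating every varying predicate to be total on the appropriate product (which vacuously satisfies all rules). No binary counters, no grid.

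Your sketch for the UCQ/arity-two strengthening is in the right spirit but misdiagnoses which predicates need to go. The only predicates above arity two are the ternary $\nextconfig_i$, $\copytape$, $\copylefttape$, $\copyrighttape$ used for tape copying. The paper drops the third argument of $\nextconfig_i$ and deletes the copy predicates entirely; the property ``symbol at position $p$ changed across a transition while $p$ was not under the head'' is then detected by a disjunct of the UCQ rather than enforced by rules.
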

\newcommand{\config}{\mn{conf}}
\newcommand{\nonconfig}{\overline\config}
\newcommand{\acc}{\mn{acc}}
\newcommand{\rej}{\mn{rej}}
\newcommand{\universalstate}{\mn{univ}}
\newcommand{\existentialstate}{\mn{exist}}
\newcommand{\nonstate}{\overline{\state}}
\newcommand{\nonpos}{\overline{\pos}}
\newcommand{\nonmove}{\overline{\move}}
\newcommand{\nonacc}{\overline{\mn{acc}}}
\newcommand{\nextconfig}{\mn{nextConf}}
\newcommand{\transition}{\mn{trans}}
\newcommand{\state}{\mn{state}}
\newcommand{\pos}{\mn{pos}}
\newcommand{\move}{\mn{move}}
\newcommand{\symb}{\mn{symb}}
\newcommand{\nonsymb}{\overline{\symb}}
\newcommand{\MM}{\mn{M}}
\newcommand{\tape}{\mn{tape}}
\newcommand{\nextpos}{\mn{next}}
\newcommand{\nextsymb}{\mn{next}}
\newcommand{\start}{\mn{start}}
\newcommand{\nonstart}{\overline{\mn{start}}}
\newcommand{\nonend}{\overline{\mn{end}}}
\newcommand{\goleft}{\mn{left}}
\newcommand{\goright}{\mn{right}}
\newcommand{\same}{\mn{same}}
\newcommand{\copytape}{\mn{copy}}
\newcommand{\copylefttape}{\mn{copy_{\triangleleft}}}
\newcommand{\copyrighttape}{\mn{copy_{\triangleright}}}
\newcommand{\nextleft}{\mn{next_{\triangleleft}}}
\newcommand{\nextright}{\mn{next_{\triangleright}}}
\newcommand{\nontape}{\overline{\tape}}
\newcommand{\nosymbleft}{\mn{symb_{\prec}}}
\newcommand{\nextnosymbleft}{\mn{next_{\prec}}}
\newcommand{\nosymbright}{\mn{symb_{\succ}}}
\newcommand{\nextnosymbright}{\mn{next_{\succ}}}
\newcommand{\error}{\mn{error}}
\newcommand{\catcherror}{\mn{getError}}
\newcommand{\target}{\mn{goal}}
\newcommand{\nonnextsymb}{\overline{\nextsymb}}
\newcommand{\nonnextpos}{\overline{\nextpos}}
\newcommand{\nontransition}{\overline{\transition}}
\newcommand{\constants}{\mn{Const}}
\newcommand{\myend}{\mn{end}}

We invite the reader to verify the proof of
Theorem~\ref{theorem:exp-hardness}, provided in the appendix, as a
warmup for the proof of the main result of this section, which
is up next.

We show that, when using multiple
minimized predicates as well as the preference order, then the data
complexity is no longer in $k$-\Exp for any $k \geq 1$. In other
words, while for every fixed ontology \Omc, query $q$, and
circumscription pattern \CP querying is in $k$-\Exp in data complexity
for some $k$ (c.f.\ Theorem~\ref{theorem:gf-aq-querying-upperbounds}),
$k$ cannot be uniformly bounded by a constant from above for all \Omc,
$q$, and \CP. In combined complexity, AQ-querying is even \Tower-hard.


\newcommand{\lol}{\kappa}

%

\begin{theorem}
  \label{theorem:tower-hardness}
  AQ-querying in GF is
  \begin{enumerate}
  \item $\Tower$-hard in combined complexity
    (under logspace reductions) and

  \item $k$-\Exp-hard for every $k \geq 1$ in data
    complexity. 
    
  \end{enumerate}
        This holds already for circumscribed sets of
        guarded existential rules and without fixed predicates.
\end{theorem}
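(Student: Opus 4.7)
The plan is to reduce from acceptance of deterministic Turing machines that use $\towerof{k}{n}$ space, where $k$ is part of the input for Part~(1) and fixed for Part~(2). In both parts the ontology is a set of guarded existential rules, the circumscription pattern uses no fixed predicates, and the query is a single AQ. The core technical ingredient is an iterated use of the guarded mirror gadget from Example~\ref{ex:first} to build, level by level, a domain of size $\towerof{k}{n}$ on which the computation is then laid out.

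First, I would introduce a chain of minimized unary predicates $A_0 \prec A_1 \prec \cdots \prec A_k$ together with binary membership-style relations $\mn{in}_i$ linking $A_i$ and $A_{i+1}$. The database provides a linear-size copy of $A_0$ (plus auxiliary ``mirror companions'' as in Example~\ref{ex:first}). For each $i$, guarded existential rules enforce that every subset of $A_i$ is represented by some element of $A_{i+1}$ via a tree-doubling construction: one rule forks each already-generated $A_{i+1}$-element into two children that differ on the next $A_i$-element, and a second rule collects the resulting leaves as elements of $A_{i+1}$. A companion rule, built from a mirror-style gadget and an $\overline{\mn{in}_i}$ predicate, then forces any two $A_{i+1}$-elements with identical $\mn{in}_i$-bit patterns to collapse under minimization, so that a $\CP$-minimal model realises exactly $|A_{i+1}|=2^{|A_i|}$. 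The preference order $A_i \prec A_{i+1}$ is essential: it prevents the minimizer from shrinking $A_{i+1}$ by deleting $A_i$-elements, because any such deletion would produce a structure that is strictly larger at a $\prec$-smaller level and hence not smaller under $<_\CP$.

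Second, I would equip $A_k$ with a linear order by lexicographic comparison of $\mn{in}_{k-1}$-patterns, using again only guarded rules, and use this order to address both tape cells and time steps of the simulated machine. Standard locally-checkable transition rules, guarded by a successor-of-successor relation on $A_k \times A_k$, propagate the computation; acceptance is signalled back to a designated database constant $c_0$ by a guarded propagation rule, and the atomic query $\mn{Acc}(c_0)$ is entailed under $\CP$-minimal semantics iff the machine accepts. For Part~(1) the reduction is logspace and $k$ scales with the input, giving $\Tower$-hardness in combined complexity. For Part~(2) we fix $k$ and move the machine into the ontology while only the input word varies in the database, giving $k$-$\Exp$-hardness in data complexity for every $k \geq 1$.

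The main obstacle is making the subset representation tight under minimization while staying inside guarded existential rules. Guardedness forbids the natural ``every two distinct $A_{i+1}$-elements are distinguished by some $A_i$-element'' axiom, and naive minimization can be defeated by spurious models that shrink some $A_j$ (for $j<i$) in order to shrink $A_{i+1}$. The mirror gadget, together with the strict preference order $A_0 \prec \cdots \prec A_k$ and the $\overline{\mn{in}_i}$-companion preventing missing bit-witnesses, must be threaded so that in every $\CP$-minimal model each $A_i$ is locked in at its full intended size and each $A_{i+1}$-element faithfully represents a unique subset. The remaining steps are routine tiling-style encodings of the Turing machine.
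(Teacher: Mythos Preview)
Your overall strategy matches the paper's: build a tower of minimized unary predicates $A_0 \prec A_1 \prec \cdots \prec A_k$ with $|A_{i+1}| = 2^{|A_i|}$, use the mirror trick from Example~\ref{ex:first} to navigate the already-built lower level when constructing the next one, and then run the machine on $A_k$. The paper uses linear successor chains with binary position encodings (incremented along a $\mn{succ}_k$ relation) rather than your tree-doubling, but that difference is not essential.

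There is, however, a genuine gap. You do not explain what prevents a model from shrinking $A_{i+1}$ \emph{directly}, by letting several of your tree leaves with \emph{different} bit patterns be witnessed by one and the same $A_{i+1}$-element $e$. Such an $e$ would carry both $\mn{in}_i(e,a)$ and $\overline{\mn{in}_i}(e,a)$ for some $a\in A_i$, but with pure existential rules you cannot derive $\mn{false}$ from this; at best the conflict fires a rule that makes the query true. The resulting ``collapsed'' structure is then still a model of the KB, satisfies the query, has the same $A_0,\dots,A_i$ as the intended model, and has $A_{i+1}$ a strict subset of the intended one. Hence it is $<_\CP$ the intended model, the intended model is \emph{not} $\CP$-minimal, and the query is entailed regardless of whether the machine accepts. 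Your discussion of the preference order only rules out shrinking $A_{i+1}$ \emph{via} shrinking some $A_j$ with $j<i$; it does not rule out this direct collapse. Relatedly, you misdiagnose the obstacle: the problem is not collapsing elements with \emph{identical} bit patterns (a correct tree-doubling already produces pairwise distinct patterns), but preventing identification of elements with \emph{distinct} patterns.

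The paper closes exactly this gap by interleaving additional minimized ``error'' predicates into the preference order, placing $\mn{err}_k \prec \mn{ord}_k$ (the analogue of your $A_k$) for every level. Bit conflicts at level $k$ do not trigger the query directly but instead generate an $\mn{err}_k$-instance. A collapsed model then has $\mn{err}_k \neq \emptyset$ while the intended model has $\mn{err}_k = \emptyset$; since $\mn{err}_k$ is strictly more preferred than $\mn{ord}_k$, the collapsed model is now $>_\CP$ the intended one rather than $<_\CP$. This is what makes the intended model $\CP$-minimal and the reduction sound. Any repair of your construction will need an analogous device: a minimized predicate sitting strictly below $A_{i+1}$ in $\prec$ that is forced to grow whenever $A_{i+1}$ is illegitimately shrunk.
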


%

\newcommand{\void}{\mn{void}}
\newcommand{\cell}{\mn{ord}}
\newcommand{\bit}{\mn{bit}}
\newcommand{\firstzero}{\mn{fz}}
\newcommand{\ones}{\mn{ones}}
\newcommand{\zeros}{\mn{zeros}}
\newcommand{\nextfz}{\mn{nextfz}}
\newcommand{\nextcell}{\mn{succ}}
\newcommand{\head}{\mn{head}}
\newcommand{\nonhead}{\overline{\head}}
\newcommand{\blanks}{\mn{blanks}}
\newcommand{\binary}{\mn{bin}}
\newcommand{\diff}{\mn{diff}}
\renewcommand{\error}{\mn{err}}
\newcommand{\catch}{\mn{root}}
\newcommand{\senderror}{\mn{sendErr}}
\renewcommand{\nonnextpos}{\overline{\nextcell}}

\begin{figure*}
	\begin{align}
		\ones_\triangleleft(x, y), \nonstart(y), \cell_{k-1}(y) & \rightarrow \exists y'\; \ones_\triangleleft(x, y'), \nextcell_{k-1}(y', y), \bit_{k, 1}(x, y')
		\label{rule:ones-left}
		\\
		\zeros_\triangleleft(x, y), \nonstart(y), \cell_{k-1}(y) & \rightarrow \exists y'\; \zeros_\triangleleft(x, y'), \nextcell_{k-1}(y', y), \bit_{k, 0}(x, y')
		\label{rule:zeros-left}
		\\
		\copytape_\triangleright(x, x', y), \nonend(y), \cell_{k-1}(y) & \rightarrow \exists y'\; \copytape_\triangleright(x, x', y'), \nextcell_{k-1}(y', y), \copytape(x, x', y')
		\label{rule:copy-right}
	\end{align}
	\caption{Additional rules used in the proof of Theorem~\ref{theorem:tower-hardness} for every $k \in \{ 2, \dots, \lol\}$.}
	\label{figure:rules-tower}
        \vspace*{-4mm}
\end{figure*}



We prove Point~2 as follows. It is known that, for every
$\lol \geq 1$, there is a fixed $(\lol-1)$-exponentially space-bounded
alternating Turing machine (ATM)
whose word problem is $\lol$-\Exp-hard \cite{chandra-alternation}. We
provide a reduction from the word problem of each of these ATMs to
AQ-querying in GF.\footnote{We use ATMs for uniformity with the proof
    of Theorem~\ref{theorem:exp-hardness}. We could also work with
    deterministic Turing machines which, however, would only simplify
    the proof in a minor way.}
Our reductions are uniform accross all
$\lol$ and, as discussed in
\cite{schmitz-complexity-beyond-elementary}, this also yields
$\Tower$-hardness in combined complexity.

Let $\lol \geq 1$ and let \Mmc be a $(\lol-1)$-exponentially
space-bounded alternating Turing machine (ATM)
whose word problem is $\lol$-\Exp-hard.  We exhibit a set of
existential rules $\Omc$ and a circumscription pattern $\CP$ such that
given an input $e = e_1 \cdots e_n \in \Sigma^*$ to \Mmc, we can
construct in polynomial time a database $D$ such that $\Mmc$ accepts
$e$ iff $\Omc, D \models_\CP \target(a)$, where $\target$ is a unary
predicate and $a$ a dedicated constant
symbol. 
	
One main challenge is to generate a tape of the required length and we
first focus on achieving that.  To this end, we produce $\lol$ linear
orders, with the $k^\text{th}$ order being of length
$\towerof{k-1}{p(n)}$. In other words, the first order has length
$p(n)$, the second has length $2^{p(n)}$, the third $2^{2^{p(n)}}$,
and so on, until the $\lol^\text{th}$ order which has length
$\towerof{\lol-1}{p(n)}$ and will be used as the tape for the ATM
computation. The positions in the $(k+1)^\text{st}$ order will be
encoded in binary using elements of the $k^\text{th}$ order as bit
positions.  For each~$k$, the element of the $k^\text{th}$ order are
marked with the unary predicate $\cell_k$.  To guarantee that the
encoding of a position in the $(k+1)^\text{st}$ order indeed only uses
bit positions from the $k^\text{th}$ tape, the predicates
$\cell_1$, $\dots$, $\cell_\lol$ are minimized.

We also use other minimized predicates, arranged in a preference
order as follows:
\[ \catch \prec \error_1 \prec \cell_1 \prec \dots \prec \error_\lol
  \prec \cell_\lol \prec \error_{\lol + 1}.\] The predicate $\error_k$
is used to `report' errors in the $k$-th order by being
made true on the constant $a$. This shall then make the query
predicate $\target$ true on~$a$ and in this way rule out erroneous
models. The preferred minimization of $\error_k$ over
$\cell_k$ acts as an incentive to avoid such errors.  We use an
additional predicate $\error_{\lol+1}$ to detect errors in the ATM
computation.  To enforce that $\error_k$ is reported precisely on $a$,
we use $\catch$ and include in $D$
	\[ \catch(a). \]
	Any other predicate used is varying, which concludes the definition of $\CP$.
	We now clarify how error reporting works.
	Since the minimization of $\catch$ is preferred over that of
        all other predicates, in every $\CP$-minimal model $\structure{A}$ we have $\catch^\structure{A} = \{ a^\structure{A} \}$.
	When an error on the $k^\textrm{th}$ tape is
        detected at some element $x$, we generate an instance $y$ of $\error_k$.
	We then require $\error_k$ to be subsumed by $\catch$, so
        that, in every $\CP$-minimal model, $y$ is actually $a$. We
        also require $\error_k$ to be subsumed by $\target$ so that
        $\target$ holds at $a^\structure{A}$ whenever an error is
        detected in the representation of the $k^\textrm{th}$ order.
	Formally, we include in \Omc, for every $k \in \{ 1, \dots, \lol+1 \}$, the rules
	\begin{align*}
		\error_k(x) & \rightarrow \catch(x), \target(x).
	\end{align*}
	We do not want distinct orders to share elements and report an
        error if they do. We also require $a$ not to be used as an
        order element. For $1 \leq i < j \leq \lol$, add
        \begin{align*}
        	\cell_i(x), \cell_j(x) &\rightarrow \exists y\
        	\error_j(y) \\
        	\cell_k(x), \catch(x) &\rightarrow \exists y\ \error_k(y). 
        \end{align*}
        Elements of the first order are represented in the
        database~$D$ as constants $c^1_i$, $1 \leq i \leq
        p(n)$: 
%
\begin{align*}
	\cell_1(c_i^1) & \textrm{ for } 0 \leq i < p(n).
\end{align*}
%
The $k^\textrm{th}$ order is represented by the binary predicate
$\nextcell_k$, for $1 \leq k \leq \lol$. 
We use a unary predicate $\nonend$ to mark the elements of orders
that are not the final element. For all
$k \in \{ 1, \dots, \lol \}$, we add the rules
\begin{align*}
	\nextcell_k(x, x') & \rightarrow \cell_k(x), \cell_k(x')
	\\
	\cell_k(x), \nonend(x) & \rightarrow \exists x'\; \nextcell_k(x, x').
\end{align*}
For the first order, we ensure the intended interpretation of
$\nextcell_1$ via a binary predicate $\nonnextpos_1$, the following facts in $D$:
\begin{align*}
	\nonnextpos_1(c^1_i,c^1_j) & \textrm{ for } 0 \leq i,j < p(n) \text{
		with } j \neq i + 1,
	\\
	\nonstart(c^1_i) & \textrm{ for } 0 < i < p(n)
	\\
	\nonend(c^1_i) & \textrm{ for } 0 \leq i < p(n) - 1,
\end{align*}
and the rule
\[
\nextcell_1(x, y), \nonnextpos_1(x, y) \rightarrow \exists z\; \error_1(z).
\]
Note that we also introduced a $\nonstart$ predicate, for later use.

For the $k^\text{th}$ order, with $k \in \{ 2, \dots, \lol\}$, the
positions of elements are represented by the two binary predicates
$\bit_{k, 0}$ and $\bit_{k, 1}$ pointing to the elements of the
$(k-1)^\textrm{st}$ order, which serve as bit positions.  Intuitively,
$\bit_{k,b}(x, y)$ says that the $y^\text{th}$ bit in the binary
encoding of the position of element  $x$ in the $k^\text{th}$ order is $b$.
We add the following rules, for every $k \in \{ 2, \dots, \lol\}$ and $b \in \{ 0, 1\}$:
\begin{align*}
	\bit_{k, b}(x, y) & \rightarrow \cell_k(x), \cell_{k-1}(y)
	\\
	\bit_{k,0}(x, y) & \rightarrow \nonend(x)
	\\
	\bit_{k,1}(x, y) & \rightarrow \nonstart(x)
	\\
	\bit_{k,0}(x, y), \bit_{k,1}(x, y) & \rightarrow \exists z\; \error_k(z).
\end{align*}

We need to guarantee that the encoding of positions is incremented
when moving along the predicate $\nextcell_k$, generally assuming that
the least significant bit position is the first element in the order.
We use a binary predicate $\firstzero_k$ (for First Zero) and the
following rules, for all $k \in \{ 2, \dots, \lol \}$:
\begin{align*}
	\cell_k(x), \nonend(x) & \rightarrow \exists y\; \firstzero_k(x, y)
	\\
	\firstzero_k(x, y) & \rightarrow \bit_{k, 0}(x, y), \ones_\triangleleft(x, y).
\end{align*}
The second rule makes sure that the position represented by $y$ has
value $0$ and that all positions to the left of $y$ have value~$1$.  The latter
is enforced by the binary predicate $\ones_\triangleleft$
which propagates to every position strictly to the left of $y$,
enforcing a bit value of $1$; see
Rule~\ref{rule:ones-left} in Figure~\ref{figure:rules-tower}.

The following rules introduce a ternary predicate $\nextfz_k$
extending each instance of $\nextcell_k(x, x')$ to further include the
position of the first zero in the encoding of $x$. We use
$\nextfz_k$ to properly set up the bit values in the encoding of the
position of $x'$.  Add, for every $k \in \{ 2, \dots, \lol \}$,
\begin{align*}
\nextcell_k(x, x') & \rightarrow \exists y\; \nextfz_k(x, x', y), \firstzero_k(x, y)
\\
\nextfz_k(x, x', y) 
& \rightarrow \bit_{k, 1}(x', y), \zeros_\triangleleft(x', y), \copytape_\triangleright(x, x', y).
\end{align*}
Predicate $\zeros_\triangleleft(x',y)$ enforces that all 1 bits to the left
of the first zero in the encoding of the position of $x$, which is at
position~$y$, are flipped to $0$s in the encoding of the position
of~$x'$.  The $0$ in position $y$ for $x$ is flipped to a $1$
for $x'$. All other positions keep their bit values thanks to predicate
$\copytape_\triangleright$ which instantiates a $\copytape$ that, in
turn, complies with the following rule for $b = 0, 1$:
\begin{align*}
	\copytape(x, x', y), \bit_{k, b}(x, y) \rightarrow \bit_{k, b}(x', y).
\end{align*}
Details 
can be found as Rules~\ref{rule:zeros-left} and \ref{rule:copy-right} in Figure~\ref{figure:rules-tower}.


As explained above, the $\lol^\text{th}$ order has the desired length
and we use its elements as positions of tape cells in the ATM
computation. We show in the appendix how to encode that
computation. The challenging part is to ensure that the tape
symbols that are not under the head are preserved when the ATM makes a
transition. This is enforced by a mechanism similar to the
propagation of the predicate $\ones_\triangleleft$ above.

The (straightforward) polynomial time reduction from
circumscribed AQ-querying to circumscribed consequence given
in the proof of Theorem~\ref{thm:fotwoconsvsaq} also applies to
GF. Thus, 
Theorem~\ref{theorem:tower-hardness} also yields the following.
\begin{restatable}{corollary}{corrconshardness}
  Circumscribed consequence in GF is \Tower-hard.
\end{restatable}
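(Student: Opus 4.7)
The plan is to apply the polynomial-time (in fact logspace) reduction from circumscribed AQ-querying to circumscribed consequence that was already given inside the proof of Theorem~\ref{thm:fotwoconsvsaq}, and verify that it stays within GF. Combined with the \Tower-hardness of AQ-querying from Theorem~\ref{theorem:tower-hardness}(1), this directly yields the corollary.

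Concretely, I would start from an arbitrary instance of circumscribed AQ-querying in GF, namely a GF ontology $\Omc$, a database $D$, a circumscription pattern $\CP$, and an AQ $R(\bar c)$, and form the single sentence
$$
\phi \;=\; \bigwedge \Omc \;\wedge\; \bigwedge D \;\wedge\!\!\!\! \bigwedge_{\substack{c,c' \in \mn{adom}(D) \\ c \neq c'}}\!\!\!\! c \neq c'.
$$
Then I would show $(\Omc, D) \models_\CP R(\bar c)$ iff $\phi \models_\CP R(\bar c)$: the models of $\phi$ are exactly the models of $\Omc$ that satisfy every fact in $D$ under the standard names assumption (which is forced by the inequality conjuncts), and circumscription is defined identically on both sides since $\phi$ introduces no fresh predicates, so the very same pattern $\CP$ can be reused.

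The only thing to check is that $\phi$ is a GF sentence, and this is a routine syntactic verification: $\bigwedge \Omc$ is in GF by assumption, the ground facts in $D$ are quantifier-free and thus trivially in GF, and the inequality literals $c \neq c'$ are also quantifier-free; their conjunction remains in GF. Since the construction is in logspace and Theorem~\ref{theorem:tower-hardness}(1) provides \Tower-hardness under logspace reductions, composing the two reductions yields \Tower-hardness of circumscribed consequence in GF. There is no real obstacle here; the corollary is essentially packaged in the paragraph preceding its statement, and the proof is just this verification.
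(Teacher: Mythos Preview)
Your proposal is correct and follows exactly the approach the paper takes: the paragraph immediately preceding the corollary states that the reduction from circumscribed AQ-querying to circumscribed consequence given in the proof of Theorem~\ref{thm:fotwoconsvsaq} also applies to GF, and you have simply spelled out that reduction and the routine GF-syntax check in more detail. There is nothing to add.
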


\section{FO$^2$ with Counting: C$^2$}

We observe that in C$^2$, circumscribed consequence and circumscribed
AQ-querying are decidable. This is achieved by combining a result from
\cite{DBLP:conf/frocos/WiesPK09} with ideas from
\cite{DBLP:conf/birthday/BonattiFLSW14}.

Recall that Presburger arithmetic is the first-order theory of the
natural numbers with addition and equality. BAPA is a multisorted
theory that combines Presburger arithmetic with the theory of
(uninterpreted) sets and their cardinalities. We refer to
\cite{DBLP:journals/jar/KuncakNR06} for full details and only remark
that numerical variables are denoted with $x,y,z$, set variables
with $B$, and set cardinality with $|B|$.
For a structure \Amf and a 1-type $t$, we write $t^\Amf$ to denote the
set of elements $\{a \in A \mid \mn{tp}^1_\Amf(a)=t \}$.  The
following was proved in \cite{DBLP:conf/frocos/WiesPK09}, making
intense use of the results of \cite{DBLP:journals/jolli/Pratt-Hartmann05}.
\begin{theorem}
  Let $\phi$ be a C$^2$ sentence and let $t_1,\dots,t_n$ be the
  1-types for $\phi$. One can compute a formula $\chi_\phi(x_1,\dots,x_n)$
  of Presburger arithmetic such that
  \begin{enumerate}

  \item for every model \Amf of $\phi$,
    $\chi_\phi[|t_1^\Amf|,\dots, |t_n^\Amf|]$ is true;

  \item if $\chi_\phi[k_1,\dots,k_n]$ is true, then there is a model \Amf
    of $\phi$ with $|t_i^\Amf|=k_i$ for $1 \leq i \leq n$.
    
  \end{enumerate}
\end{theorem}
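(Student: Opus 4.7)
The plan is to combine Pratt-Hartmann's spectrum analysis of C$^2$ with the projective closure of Presburger arithmetic. First I would put $\phi$ into a counting Scott normal form
\[ \forall x \forall y\, \varphi_0(x,y) \;\wedge\; \bigwedge_{k} \forall x\, \exists_{\geq C_k} y\, \varphi_k(x,y) \;\wedge\; \bigwedge_{k} \forall x\, \exists_{\leq C_k'} y\, \varphi_k'(x,y), \]
a polynomial-time conservative extension that does not change the set of realisable 1-types over $\mn{sig}(\phi)$. I would then enumerate the 1-types $t_1,\dots,t_n$ and 2-types $u_1,\dots,u_m$ on $\mn{sig}(\phi)$ and introduce, for each ordered pair $(t_i,t_j)$ and each 2-type $u_\ell$ with $x$-marginal $t_i$ and $y$-marginal $t_j$, an integer variable $z_{i,j,\ell}$ intended to count the ordered pairs $(a,b)$ realising those types. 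Linear constraints immediately rule out 2-types contradicting $\varphi_0$ by fixing $z_{i,j,\ell}=0$ for such $u_\ell$.

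The main obstacle is the lack of multiplication in Presburger arithmetic: neither the marginal identity $\sum_\ell z_{i,j,\ell} = x_i x_j$ nor the per-element counting quantifiers are directly linear in the $x_i$'s. Following Pratt-Hartmann, I would resolve this by refining each $x_i$ into a sum $x_i = \sum_\pi x_{i,\pi}$ indexed by local \emph{star-type profiles} $\pi$: a profile records, for every counting conjunct and every candidate outgoing 2-type, the exact multiplicity of outgoing edges from an element of type $t_i$. The conjuncts $\forall x\, \exists_{\geq C_k} y\, \varphi_k$ and $\forall x\, \exists_{\leq C_k'} y\, \varphi_k'$ become linear constraints on admissible profiles $\pi$, while marginal consistency between the $x_{i,\pi}$'s and the global edge counts $z_{i,j,\ell}$ becomes a linear system summing over profiles, with a separate case for the diagonal $i=j$ that additionally imposes a parity condition on self-edges. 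Conjoining all of these yields an auxiliary Presburger formula $\widetilde\chi_\phi(\bar x, \bar z, \bar x_\pi)$.

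For correctness, Point~1 is immediate: any model $\Amf$ of $\phi$ supplies witness values for all auxiliary variables. For Point~2, from any feasible assignment to $\widetilde\chi_\phi$ one reconstructs an actual model of $\phi$ realising the prescribed 1-type cardinalities. This is the technical heart of Pratt-Hartmann's argument and proceeds by a bipartite multi-graph realisation between distinct 1-type blocks (a Hall-style matching on coloured degree sequences) together with an Eulerian-style construction on the diagonal blocks; the universal part $\varphi_0$ is automatically respected since forbidden 2-types were zeroed out. The desired $\chi_\phi(x_1,\dots,x_n)$ is finally obtained by existentially projecting out all auxiliary variables $\bar z$ and $\bar x_\pi$ from $\widetilde\chi_\phi$, using closure of Presburger arithmetic under existential quantification.
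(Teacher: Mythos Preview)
The paper does not prove this theorem: it is quoted as a known result from \cite{DBLP:conf/frocos/WiesPK09}, with the remark that the proof there ``mak[es] intense use of the results of \cite{DBLP:journals/jolli/Pratt-Hartmann05}''. There is therefore no argument in the paper to compare against; the theorem is used as a black box to reduce circumscribed C$^2$ consequence to BAPA.

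Your sketch is faithful to that lineage. Counting Scott normal form, star-type profiles to linearise the per-element counting constraints, edge-count variables with reversal and parity conditions, and a graph-realisation argument for the existential direction are precisely the ingredients of Pratt-Hartmann's analysis that Wies--Piskac--Kuncak refine into a full Presburger characterisation of the 1-type spectrum. The one place where your outline is thinner than the actual proof is the realisability step for Point~2: ``a Hall-style matching on coloured degree sequences'' hides genuine work, since multi-colour bipartite degree-sequence realisability is not, in general, characterised by linear constraints alone. In the cited papers this is handled by arguing that the particular linear system one writes down is already sufficient, essentially because once every realised star-type has enough instances the 2-types can be wired up greedily, while the bounded-multiplicity case is absorbed into the finite profile enumeration. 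Your plan is correct at the level of a citation; if you were to carry it out, that sufficiency lemma is the piece you would need to prove with care.
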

The above provides a reduction from consequence in C$^2$ to
unsatisfiability in BAPA: the C$^2$ consequence $\phi \models \psi$
holds iff the BAPA sentence
$ \exists B_1 \cdots \exists B_n \, \chi_{\phi \wedge \neg \psi}
[|B_1|/x_1,\dots,|B_n|/x_n]$ is unsatisfiable. We extend this to
circumscribed consequence.

Assume that we want to decide $\phi \models_\CP \psi$, with
$\phi,\psi$ two C$^2$ sentences and $\CP = ({\prec},\Msf,\Fsf,\Vsf)$.   
We may assume w.l.o.g.\ that $\phi$ and $\psi$ contain the same
predicates and thus have the same 1-types.
With $\vartheta$, we denote the
BAPA formula
%
$$
\chi_{\phi} [|B_1|/x_1,\dots,|B_n|/x_n]
\wedge   \bigwedge_{A \in \Msf \cup \Fsf} \big (B_A = \!\! \bigcup_{t_i \mid A(x_1) \in
  t_i} \!\! B_i \big )
$$
and we define $\vartheta'$ to be like $\vartheta$, but using set
variables $B'_i$ in place of $B_i$ and $B'_A$ in place of $B_A$.
Let $\overline B$ be the tuple of set
variables in $\vartheta$ and let $\overline B'$ be the corresponding
tuple for~$\vartheta'$. We write $\overline B <_\CP \overline B'$ to
denote the conjunction of 
\begin{itemize}

\item $B_A = B'_A$ for all $A \in \Fsf$;

\item for all $A \in \Msf$:
  $
  \displaystyle
  B'_A \not\subseteq B_A \rightarrow \bigvee_{A' \in \Msf \mid A'
    \prec A} B'_{A'} \subsetneq  B_{A'};
  $

\item 
  $ \displaystyle
  \bigvee_{A \in \Msf} \big (B'_A \subsetneq B_A \rightarrow \bigvee_{A' \in \Msf \mid A'
    \prec A} B'_{A'} = B_{A'} \big ).
  $

\end{itemize}
Now let $\chi$ denote the BAPA sentence
$$
\begin{array}{r@{\,}l}
  \exists \overline B \, \big ( & \vartheta \wedge \chi_\psi
                                  [|B_1|/x_1,\dots,|B_n|/x_n] \,\wedge
  \\[1mm]
&  \forall \overline B' 
  \, \big (\overline B <_\CP \overline B' \rightarrow \neg \vartheta'
  \big ) \big ).
\end{array}
$$
It can be verified that $\phi \models_\CP \psi$ iff $\chi$ is
unsatisfiable. Since satisfiability in BAPA is decidable
\cite{Feferman1959,DBLP:journals/jar/KuncakNR06}, we obtain
decidability of circumscribed consequence in C$^2$. This
carries over to circumscribed AQ-querying in the same straightforward
way as for FO$^2$.
\begin{theorem}
  \label{thm:Ctwomain}
  In C$^2$, circumscribed consequence and circumscribed AQ-querying are
  decidable. 
\end{theorem}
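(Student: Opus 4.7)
The plan is to reduce circumscribed consequence in C$^2$ to satisfiability in Boolean Algebra with Presburger Arithmetic (BAPA), by combining the Wies--Piskac--Kuncak Presburger characterization of C$^2$ model cardinalities with a direct BAPA encoding of the preference relation $<_\CP$. The driving observation is that $<_\CP$ only constrains unary predicate extensions in $\Msf \cup \Fsf$, and such an extension is fully determined by the set of $1$-types realizing it; both ingredients (cardinalities and set inclusions) live natively inside BAPA.

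Concretely I would proceed as follows. First, normalize so that $\phi$ and $\psi$ share the same signature and hence the same enumeration of $1$-types $t_1,\dots,t_n$, and apply the WPK theorem to obtain Presburger formulas $\chi_\phi$ and $\chi_{\phi \wedge \neg \psi}$. Second, introduce BAPA set variables $B_1,\dots,B_n$ standing for the $1$-type realization sets, together with auxiliary variables $B_A$ for each $A \in \Msf \cup \Fsf$ enforced to satisfy
$$B_A = \bigcup_{t_i \mid A(x_1) \in t_i} B_i,$$
and bundle these into formulas $\vartheta_\phi(\overline B)$ and $\vartheta_{\phi \wedge \neg \psi}(\overline B)$. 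Third, translate the four clauses defining $<_\CP$ directly into a BAPA predicate $\overline B' <_\CP \overline B$ relating two such tuples via their derived $B_A, B'_A$. Finally, assemble the BAPA sentence
$$\exists \overline B\, \bigl(\vartheta_{\phi \wedge \neg \psi}(\overline B) \wedge \forall \overline B'\, (\overline B' <_\CP \overline B \rightarrow \neg \vartheta_\phi'(\overline B'))\bigr),$$
whose satisfiability is equivalent to $\phi \not\models_\CP \psi$. Decidability then follows from decidability of BAPA. For the second half of the theorem, circumscribed AQ-querying reduces to circumscribed consequence in polynomial time using the same trick as in the proof of Theorem~\ref{thm:fotwoconsvsaq}.

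The principal obstacle I anticipate is verifying the faithfulness of this reduction. The WPK theorem links models to cardinality vectors, but does not say that every structure with a given profile is a model of $\phi$; so one must check that the BAPA sentence really captures the existence of a $\CP$-minimal model of $\phi \wedge \neg \psi$. The key point is that $<_\CP$ between two structures sharing the same universe depends only on their $\Msf \cup \Fsf$-extensions, which are in turn determined by their $1$-type realization sets. Hence any witness $\overline B$ to the satisfiability of the sentence can be realized by some concrete model $\Amf \models \phi \wedge \neg \psi$, and the universal block rules out precisely those cardinality profiles, on the fixed universe of $\Amf$, that would realize a $<_\CP$-smaller model of $\phi$. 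Making this correspondence precise, and in particular handling the technicality that varying predicates may redistribute elements among $1$-types without affecting $<_\CP$, is where the careful bookkeeping lies.
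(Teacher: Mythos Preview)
Your proposal is correct and follows essentially the same approach as the paper: both reduce to BAPA satisfiability by applying the Wies--Piskac--Kuncak theorem to obtain Presburger characterizations of 1-type multiplicities, introducing set variables $B_i$ for the 1-type realization sets together with derived variables $B_A = \bigcup_{A(x_1)\in t_i} B_i$ for $A \in \Msf \cup \Fsf$, expressing $<_\CP$ directly in BAPA, and assembling an $\exists\forall$ sentence whose satisfiability is equivalent to $\phi \not\models_\CP \psi$; the reduction of AQ-querying to consequence is likewise the same as in Theorem~\ref{thm:fotwoconsvsaq}. Your anticipated obstacle---that the WPK theorem only relates models to cardinality vectors, so one must argue that the set-level encoding of $<_\CP$ faithfully captures minimality on a shared universe---is exactly the point the paper glosses over with ``it can be verified''.
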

Since BAPA is also decidable over finite models, we also obtain the
version of Theorem~\ref{thm:Ctwomain} where circumscribed
consequence and querying are defined w.r.t.\ finite models.

\section{Conclusion}
\label{sec-conclusion}

We have studied the impact on computational complexity of adding
circumscription to decidable fragments of first-order logic, which
turns out to be remarkably varied: while FO$^2$ is very tame and does
not have higher complexity than \ALC in its circumscribed version, GF
suffers from a dramatic complexity explosion. We remark that there is
a close connection between circumscription and querying with closed
predicates as studied in \cite{Ngo2016,DBLP:journals/lmcs/LutzSW19},
see also Example~\ref{ex:first}. More details are in
\cite{DBLP:conf/kr/LutzMN23}. As an example,
Theorem~\ref{theorem:exp-hardness} also applies to AQ-querying of
guarded existential rules with a single unary closed predicate. This,
in turn, is related to results in \cite{DBLP:conf/lics/BenediktBCP16}.

Several interesting questions remain open. What is the exact
complexity of circumscribed consequence in C$^2$? We speculate that by making careful
use of the techniques in \cite{DBLP:journals/jolli/Pratt-Hartmann05},
one can bring it down to $\coNExp^\NPclass$. What is the complexity of
circumscribed consequence in GF with only a single minimized predicate
or with multiple such predicates but no preference order? Is
circumscribed UCQ-querying in GF finitely controllable? What is the
complexity of circumscribed querying with less expressive classes of
existential rules such as inclusion dependencies? Is circumscribed
consequence decidable in the unary / guarded negation
fragments of FO?
	Note that satisfiability in the latter fragment is known to be reducible to UCQ-querying in GF \cite{DBLP:journals/jacm/BaranyCS15}, but that this reduction relies on arguments based on treeifications of some subformulas of interest, a technique that cannot be applied in presence of circumscription as discussed with Example~\ref{ex:second}.

\cleardoublepage

\section*{Acknowledgments}
 The authors acknowledge the financial support by the Federal Ministry of Education and Research of Germany 
 and by the Sächsische Staatsministerium für Wissenschaft Kultur und Tourismus in the program Center of Excellence for AI-research 
 ``Center for Scalable Data Analytics and Artificial Intelligence Dresden/Leipzig'', 
 project identification number: ScaDS.AI

This work is partly supported by BMBF (Federal Ministry of Education and Research)
in DAAD project 57616814 (\href{https://secai.org/}{SECAI, School of Embedded Composite AI})
as part of the program Konrad Zuse Schools of Excellence in Artificial Intelligence.

\bibliographystyle{kr}
\bibliography{main}

\appendix 

\section{Proofs for Section~\ref{sect:fotwo}}

\lemsmallmodelsfotwo*

\begin{proof}
  Let $\phi$, $\Sigma$, \Amf, and $k$ be as in the proposition.  
  We say that $a \in A$ is a \emph{king} if
  $$
     |\{ b \in A \mid \mn{tp}^1_{\Amf,\Sigma}(a)=\mn{tp}^1_{\Amf,\Sigma}(b) \}| \leq 3n_\exists.
  $$
  Note that this is different from the definition of kings in
  \cite{GKV} where $3n_\exists$ is replaced with 1. Let $K$ be
  the set of all kings in \Amf.

  For every $a \in A$ and $i \in \{1,\dots,n_\exists\}$, choose an
  $a^i \in A$ such that $\Amf \models \psi_i[a,a^i]$.

  An \emph{extended 1-type} is a pair $(t,S)$ where $t$ is a 1-type
  and $S$ is a set of pairs $(T,a)$ with $T$ a 2-type and $a \in K$.
  The cardinality of $S$ must be at most $n_\exists$. For $a \in A$, the extended 1-type \emph{realized
    at}~$a$, denoted $\mn{etp}^1_{\Amf,\Sigma}(a)$, is $(t,S)$ where
  $\mn{tp}^1_{\Amf,\Sigma}(a)=t$ and
  $$
    S= \{ (\mn{tp}^2_{\Amf,\Sigma}(a,a^i),a^i) \mid 1 \leq i \leq n_\exists \text{
      and } a^i \text{ is a king} \}.
  $$
  Let \Tmf be the set of all extended 1-types $(t,S)$ realized in \Amf
  such that
  $$|\{ a \in A \mid \mn{tp}^1_{\Amf,\Sigma}(a)=t \}| > 3n_\exists.$$
  The universe $B$ of \Bmf is
  $$
    K \cup \{ (t,S,i,j) \mid (t,S) \in \Tmf, 1 \leq i \leq n_\exists, j \in \{0,1,2\}\}.
  $$
  Note that, in contrast to the construction in \cite{GKV}, the model
  \Bmf does not comprise a royal court. This is possible since we work
  with extended 1-types.

  For every constant $c$ in $\phi$, there is a unique $a \in A$ with
  $x=c$ contained in $\mn{tp}^1_{\Amf,\Sigma}(a)$. It follows that $a$ is a
  king and thus we may set $c^\Bmf=a$.

  It remains to define the extension of the relation symbols in \Bmf.
  To this end, it suffices to define all 1-types and 2-types in \Bmf.
  We set $\mn{tp}^1_{\Bmf,\Sigma}(a)=\mn{tp}^1_{\Amf,\Sigma}(a)$ for
  all $a \in K$ and $\mn{tp}^1_{\Bmf,\Sigma}(a)=t$ for all
  $(t,S,i,j) \in B$. It is clear that this achieves Points~1-3 of the
  proposition.

  We further set $\mn{tp}^2_{\Bmf,\Sigma}(a,b)=\mn{tp}^2_{\Amf,\Sigma}(a,b)$
  for all \mbox{$a,b \in K$}, that is, the restrictions of \Amf and \Bmf to
  universe $K$ are identical. This leaves us with determining the 2-types
  of pairs $(a,b)$ that involve at least one non-king. This proceeds
  in two steps.

  In the first step, we connect kings to non-kings as recorded in the
  extended 1-types of the latter: if $(t,S,i,j) \in B$ and
  $(T,a) \in S$, then $\mn{tp}^2_{\Bmf,\Sigma }((t,S,i,j),a)=T$. It can be
  verified that after this step, kings have all required existential
  witnesses. If, in fact, $a \in K$ and $i \in \{1,\dots,n_\exists\}$,
  then either $a^i \in K$ and thus $\Bmf \models \psi_i[a,a^i]$, or
  $\Bmf \models \psi_i[a,(t,S,i,j)]$ for $(t,S)=\mn{etp}^1_{\Amf,\Sigma}(a^i)$
  and any $i \in \{1,\dots,n_\exists\}$ and $j \in \{0,1,2\}$.

  In the second step, we make sure that the existential demands of
  non-kings are also satisfied. Let $(t,S,i,j) \in B$ and
  $1 \leq i \leq n_\exists$. Choose any $a \in A$ with
  $\mn{etp}^1_{\Amf,\Sigma}(a)=(t,S)$. If $a^i$ is a king, then
  $\Bmf \models \psi_i[(t,S,i,j),a^i]$ and there is nothing to
  do. Otherwise, we set
  $\mn{tp}^2_{\Bmf,\Sigma}((t,S,i,j),(t',S',i,j+1 \mod
  3))=\mn{tp}^2_{\Amf,\Sigma}(a,a^i)$ for $(t',S')=\mn{etp}^1_{\Amf,\Sigma}(a^i)$.

  Now that all existential demands are satisfied, it remains to `fill
  up' the remaining 2-types. First assume that
  $\mn{tp}^2_{\Bmf,\Sigma}(a,(t,S,i,j))$ was not yet determined. Then choose
  any $b \in A$ with $\mn{etp}^1_{\Amf,\Sigma}(b)=(t,S)$ and set
  $\mn{tp}^2_{\Bmf,\Sigma}(a,(t,S,i,j)) = \mn{tp}^2_{\Amf,\Sigma}(a,b)$. Now assume that
  $\mn{tp}^2_{\Bmf,\Sigma}((t,S,i,j),(t',S',i',j'))$ was not yet
  determined. Then choose any $a,a' \in A$ with
  $\mn{etp}^1_{\Amf,\Sigma}(a)=(t,S)$ and $\mn{etp}^1_{\Amf,\Sigma}(a')=(t',S')$ and set
  $\mn{tp}^2_{\Bmf,\Sigma}((t,S,i,j),(t',S',i',j')) = \mn{tp}^2_{\Amf,\Sigma}(a,a')$.

  The reader may verify that no conflicts arise when
  determining the 2-types in \Bmf above, that is, that we do not
  assign two different 2-types to the same pair of elements. This
  relies on the use of extended types and of the $i$- and
  $j$-components of the non-kings in $B$. In fact, we use the latter
  in exactly the same way as in the construction in \cite{GKV}.

  It is
  now straightforward to show that \Bmf is indeed a model of~$\phi$,
  using the fact that \Amf is a model of $\phi$.

  \medskip It remains to analyze the size of \Bmf. The number of
  1-types realized in \Amf is bounded by $|\mn{const}(\phi)| + 2^k$:
  for every $c \in \mn{const}(\phi)$ there is exactly one 1-type $t$
  with $x_1=c \in t$ that is realized in \Amf, and the number of
  1-types that contain $x_1 \neq c$ for all $c \in \mn{const}(\phi)$
  is bounded by $2^k$. The number of kings is thus bounded by
  $$
  |\mn{const}(\phi)| + 2^k \cdot n_\exists \leq |\phi|+2^k
  \cdot |\phi| \leq |\phi| \cdot 2^{{k+1}}.
  $$
  We next analyze the size of \Tmf, that is, the number of extended
  1-types $(t,S)$ realized in \Amf. The number of options for $t$ is
  bounded by $|\phi| + 2^k \leq |\phi| \cdot 2^{k+1}$. Regarding
  $S$, we may choose at most $n_\exists$ many kings $a^i$ for which we
  include the pair $(\mn{tp}^2_{\Amf, \Sigma}(a,a^i),a^i)$. 
  Note that the two 1-types (on variable $x_1$ and on variable $x_2$)
  in the 2-type $\mn{tp}^2_{\Amf, \Sigma}(a,a^i)$ are already
  determined
  by the choice of $t$ and $a^i$, and thus only $2^{2k}$ choices
  for the 2-type remain.
  The number of
  choices for $S$
  is thus bounded by
  $$
  ((|\phi| \cdot 2^{{k+1}} \cdot 2^{2k})+1)^{n_\exists} \leq |\phi|^{n_\exists}
\cdot 2^{n_\exists 3(k+2)}
  $$
  and we obtain an overall upper bound on \Tmf of
  $$
  (|\phi| \cdot 2^{k+1}) \cdot (|\phi|^{n_\exists}
\cdot 2^{n_\exists 3(k+2)}) \leq |\phi|^{n_\exists+1} \cdot
2^{n_\exists 4(k+2)}.
  $$
  We may thus bound from above the number of elements in $B$ by
  $$
  |\phi|^{n_\exists+1} \cdot 2^{n_\exists 4(k+2)} \cdot 3n_\exists \leq
    |\phi|^{n_\exists+1} \cdot 2^{n_\exists 4(k+6)}.
    $$
\end{proof}

\section{Proofs for Section~\ref{sec-circcons}, Except Lemma~\ref{lemma-non-uniform}}

\lemsmallcircmodelgf*
\begin{proof}
  Let $\phi$ be a satisfiable GF sentence and $\CP = ({\prec}, \Msf, \Fsf, \Vsf)$ a circumscription pattern. 
  Take a $\CP$-minimal model \Amf of $\phi$. Then \Amf is a model
  of $\phi$ and by Proposition~\ref{prop:smallmodelsgf}, using $\sig = \sigmin(\phi) \cup \Msf \cup \Fsf$, there is a
  model \Bmf of $\phi$ that satisfies Points~1-3 of the proposition.
  We show that \Bmf is also a $\CP$-minimal model of $\phi$.

  Assume to the contrary that there is a model $\Bmf'$ of $\phi$
  such that $\Bmf' <_\CP \Bmf$. To obtain a contradiction, we
  construct a model $\Amf'$ of $\phi$ such that $\Amf' <_\CP
  \Amf$.

  Of course, $\Amf'$ must have the same universe as $\Amf$, thus
  \mbox{$A'=A$}. By Point~3 of Proposition~\ref{prop:smallmodelsgf},
  we find an injection $f:B \rightarrow A$ such that
  $\tp[\sig]<1>[\Bmf]{b} = \tp[\sig]<1>[\Amf]{f(b)}$ for all $b \in B$. We
  define $\Amf'$ so that its restriction to the range of $f$ is
  isomorphic to $\Bmf'$, with $f$ being an isomorphism. Note in
  particular that $\Amf'$ interprets all constants that occur in
  $\phi$ exactly as $\Bmf'$ does (up to the isomorphism $f$), thus, by Point~4 of Proposition~\ref{prop:smallmodelsgf} and the definition of $\Bmf' <_\CP \Bmf$, exactly as $\Amf$ does.
  To define the remaining part of $\Amf'$,
  we use cloning.

  By Point~2 of
  Proposition~\ref{prop:smallmodelsgf} and choice of $f$, for every
  $a\in A$ that is not in the range of $f$ we find a
  $\widehat a \in A$ that is in the range of $f$ and such that
  $\tp[\sig]<1>[\Amf]{\widehat a} = \tp[\sig]<1>[\Amf]{a}$. We then make $a$ a
  clone of $\widehat a$ in $\Amf'$, that is, for every relation symbol
  $R$, we set
$$
R^{\Amf'} = R^{\Amf'} \cup \{ \bar a[a/\pi(b')] \mid \bar a \in R^{\Amf'} \}
$$
where $\bar a[a/\pi(b')]$ denotes the tuple obtained from $\bar a$ by
replacing every occurrence of $\widehat a$ with $a$. This finishes the
construction of the structure $\Amf'$.

Using the fact that $\Bmf'$ is a model of $\phi$, it is now
straightforward to show that, as desired, also $\Amf'$ is a model
of~$\phi$. 
Note in particular that the cloning does not duplicate any constants occurring in some equality atom of $\phi$, that is, if we make element $a$ a clone of $\widehat a$,
then there is no constant $c \in \sigmin(\phi)$ with $c^{\Amf'}=\widehat a$. 
This can be
seen as follows.
By contradiction, assume that we can find such a constant $c$.
Since $\widehat{a}$ is in the range of $f$, the definition of $c^{\Amf'}$, gives $f(c^{\Bmf'}) = \widehat{a}$.
From $\Bmf' <_\CP \Bmf$, we obtain $c^{\Bmf'} = c^\Bmf$.
Now from the definition of $f$, we have $\tp[\sig]<1>[\Bmf]{c^\Bmf} = \tp[\sig]<1>[\Amf]{f(c^\Bmf)}$, \emph{i.e.}\ $\tp[\sig]<1>[\Bmf]{c^\Bmf} = \tp[\sig]<1>[\Amf]{\widehat{a}}$.
By virtue of $c \in \sigmin(\phi) \subseteq \sig$, we get $c^\Amf = \widehat{a}$ and in particular $\widehat{a}$ is the only element of $\Amf$ with type $t := \tp[\sig]<1>[\Amf]{\widehat{a}}$, since $t$ contains the atom $x_1 = c$.

It can also be verified that $\Amf' <_\CP \Amc$, since $\Bmf'
  <_\CP \Bmf$ and $\Msf, \Fsf \subseteq \sig$.
\end{proof}

\thmsatdec*
\begin{proof}
%
  Let $\sig = \sigmin(\phi) \cup \Msf \cup \Fsf$.  Our algorithm
  iterates over all structures \Amf (in the signature of $\phi$) with
  universe $A \subseteq \Delta$, where $\Delta$ is a fixed set of size
  $\towerof{4^{\sizeof{\sig} + 4}}{\sizeof{\phi}}$.  For each such
  \Amf, it checks whether
  (i)~$\Amf\models\phi$, (ii)~$\Amf \not\models\psi$,
  and (iii)~$\Amf$ is $\CP$-minimal.  If such an $\Amf$ is found, it
  returns `no', otherwise it returns `yes'.

  By Theorem~\ref{lem:smallcircmodelgf}, the algorithm returns `no'
  iff $\phi \not\models_\CP \psi$. It can be implemented by a
  $\towerof{4^{\sizeof{\sig} + 4}}{\sizeof{\phi} +
    \sizeof{\psi}}$-space bounded Turing machine.  In particular,
  Point~(iii) can be decided by iterating over all interpretations
  $\Bmf$ with \ $B = A$ and checking whether $\Bmf \models \phi$ and
  $\Bmf <_\CP \Amf$.
%
\end{proof}

\section{Proof of Lemma~\ref{lemma-non-uniform}}

\newcommand{\sigext}{\widehat{\sig}}

We now prove Lemma~\ref{lemma-non-uniform}. Let $\phi$ be a GF
sentence, $\sig$ a unary signature that contains $\sigmin(\phi)$, \Amf a model of $\phi$, and $\Delta \subseteq A$ with $c^\Amf \in \Delta$ for all $c \in \const(\phi)$.

In the following, we actually focus on signature $\sigext := \sig \cup \Delta \cup \signature(\phi)$.
We argue it is sufficient for our purpose: Points~1 and 4 are independent from $\sig$, and Points~2 and 3 being valid for $\sigext$ clearly implies the same properties for $\sig$ since $\sig \subseteq \sigext$.
Note in particular that $\sigext$ needs not to be unary and that a type on the $\sigext$ stores full information about the
restriction of \Amf to its subdomain made of $\Delta$ and
the constants in $\phi$.

A set of elements $S \subseteq A \setminus \Delta$ is \emph{guarded}
if it is a singleton or there is a relation symbol $R$ such that some
tuple in $R^{\mathfrak{A}}$ contains all elements of $S$. A tuple of
elements $\bar a$ is \emph{guarded} if the set of elements that occur
in $\bar a$ is guarded. A guarded tuple $\bar a$ is \emph{maximal} if
there is no guarded tuple $\bar b$ such that the set of elements in
$\bar a$ is a strict subset of the set of elements in~$\bar b$. We use
$\mn{mgt}(\Amf)$ to denote the set of all maximal guarded tuples in
\Amf that do not contain repeated elements and $\mn{gtp}(\Amf)$ to
denote the set of all types on $\sigext$ realized by maximal guarded tuples in
$\Amf$.

We now move to the actual proof of Lemma~\ref{lemma-non-uniform}.  Let
$\structure{A}$ be a model of $\phi$ with
$\Delta \subseteq \domain{A}$. Notice that if $\Delta = \domain{A}$,
then $\structure{A}$ itself already satisfies all conditions from
Lemma~\ref{lemma-non-uniform} and there is nothing to do.  In the
following, we thus assume $\Delta \subsetneq \domain{A}$, which
guarantees that both $\mn{mgt}(\structure{A})$ and
$\mn{gtp}(\structure{A})$ are non-empty.  The maximal arity of a
predicate in $\phi$ is denoted~$w$, and we assume w.l.o.g.\ that
$w > 1$. 

For two tuples $\bar a, \bar b$ without repeated elements,
 we use
$\rho(\bar a,\bar b)$ to denote the injective substitution
$\rho : \{ x_1, \dots, x_\sizeof{\bar a} \} \rightarrow \{ x_1, \dots,
x_\sizeof{\bar b} \}$ that maps $x_i$ to $x_j$ iff $a_i =
b_j$.
Note that
this is well-defined since $\bar b$ does not contain repeated
elements and injective since $\bar a$ does not contain repeated
elements.  We use $\mn{dom}(\rho)$ and $\mn{ran}(\rho)$ to denote
the domain and range of $\rho$.

Let $K = w^{4}$. For an $n$-type $t$ on $\sigext$, we use $\mn{var}(t)$ to
denote the set of variables $\{x_1,\dots,x_n\}$. We introduce the following constants and
function symbols, for $0 \leq j < K$:
\begin{itemize}
	
	\item for all $n$-types $t \in \mn{gtp}(\Amf)$ and $i \in [n]$, a
	fresh constant symbol $c^j_{t, i}$;
	
	\item for all types $t_1,t_2 \in \mn{gtp}(\Amf)$, partial functions
	$\rho: \mn{var}(t_1) \rightarrow \mn{var}(t_2)$, and
	$x_i \in \mn{var}(t_2) \setminus \mn{img}(\rho)$, a
	function symbol $f^j_{t_1,\rho,t_2,i}$ of arity $|\mn{dom}(\rho)|$.
	
\end{itemize}
%
For each term $\theta$, let
$J(\theta)$ denote the set of values that occur in the superscript of
a constant or function symbol in $\theta$. This notion
extends to tuples of terms in the expected way. The \emph{truncation}
of a term $\theta$ at depth $k \in \{0,1\}$, denoted $\theta/_k$, is defined as
follows:
$$
\begin{array}{rcll}
	c^j_{t,i}/_k &=& c^j_{t,i}
	\\[1mm]
	f^j_{t_1,\rho,t_2,i}(\bar\theta)/_0 &=& c^j_{t_2,i} \\[1mm]
	f^j_{t_1,\rho,t_2,i}(\bar\theta)/_1 &=&
	f^j_{t_1,\rho,t_2,i}(\bar\theta/_0).
\end{array}
$$
We use $\mn{trunc}(\theta)$ to denote the the term $\theta/_1$.

\medskip

For each type $t \in \mn{gtp}(\Amf)$, we define sets
$\mathcal{K}^r(t)$ 
of instances of $t$
at height $r$.


\paragraph*{Initialization $r = 0$.}
For each $n$-type $t \in \mn{gtp}(\Amf)$, we set:
\[
\mathcal{K}^0(t) = \{ \bar c_t^j \mid 0 \leq j < K \},
\]
where $\bar c_t^j$ stands for the tuple of constants $(c_{t, 1}^j, \dots, c_{t, n}^j)$.


\paragraph*{From $r$ to $r+1$.}

Assume that we have already constructed the sets $\mathcal{K}^r(t)$
for all $t \in \mn{gtp}(\Amf)$. We define the set
$\mathcal{K}^{r+1}(t)$, for all $t \in \mn{gtp}(\Amf)$, by starting
with the empty set and then applying
the following saturation step as long as possible.

For each type $s$, each partial bijection $\rho$, each
$\bar b \in \mathcal{K}^{r}(s)$, and each index
$j \in \{0,\dots,K \} \setminus J(\bar b |_{\textrm{dom}(\rho)})$ such
that there are $\bar u, \bar v \in \mn{mgt}(\structure{A})$ with
$\tp[\sigext][\Amf]{\bar u} = s$, $\tp[\sigext][\Amf]{\bar v} = t$, and
$\rho(\bar u,\bar v) = \rho$,
extend $\mathcal{K}^{r+1}(t)$ with the tuple $\bar a$ whose $i^\textit{th}$ component $a^j_i$ is defined as:
\[
a_i = 
\left\{
\begin{array}{l@{\;}l}
b_\ell & \textrm{ if } (x_\ell , x_i) \in \rho 
\\
f^j_{s, \rho, t, i}(\bar b|_{\mn{dom}(\rho)}/_1) & \textrm{ if } x_i \notin \textrm{img}(\rho)
\end{array}
\right.
\]

Note that, because of truncation, the maximum height of each term
$a_i$ is two.
	Furthermore, it is easily verified by induction on $r$ that each tuple $\bar a \in \Kmc^r(t)$, and even its truncation $\bar a /_1$, is without repetition.
	This property crucially relies on the condition $j \notin J(\bar b |_{\textrm{dom}(\rho)})$ dictating the permitted function symbol at the root of each term.

\begin{lemma}
	\label{lem:no-repetition}
		Let $\bar a \in \Kmc^m(t)$ for some $m \geq 0$ and $t \in \mn{gtp}(\structure{A})$.
		The tuple $\bar a/_0$, and a fortiori tuples $\bar a/_1$ and $\bar a$, do not contain repeated elements, that is $a_i/_0 = a_k/_0$ iff $i = k$. 
\end{lemma}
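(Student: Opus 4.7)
The plan is to proceed by induction on $m$, showing that for every $\bar a \in \Kmc^m(t)$ the components of $\bar a/_0$ are pairwise distinct; the corresponding statements for $\bar a/_1$ and $\bar a$ then follow immediately, since $a_i = a_k$ would yield $a_i/_1 = a_k/_1$ and therefore $a_i/_0 = a_k/_0$.

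For the base case $m = 0$, the tuple $\bar a = \bar c_t^j = (c^j_{t,1}, \ldots, c^j_{t,n})$ consists of constants, each of which is its own truncation at depth $0$; these $n$ constants are pairwise distinct because their subscripts $i$ differ.

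For the inductive step, I consider $\bar a \in \Kmc^{r+1}(t)$ built from some $\bar b \in \Kmc^{r}(s)$, a partial bijection $\rho$, and an index $j \notin J(\bar b|_{\mn{dom}(\rho)})$, and I establish $a_i/_0 = a_k/_0 \Rightarrow i = k$ by splitting on whether $x_i$ and $x_k$ lie in $\mn{img}(\rho)$. If both do, write $\rho(x_\ell) = x_i$ and $\rho(x_{\ell'}) = x_k$; then $a_i/_0 = b_\ell/_0$ and $a_k/_0 = b_{\ell'}/_0$, so the induction hypothesis gives $\ell = \ell'$, and injectivity of $\rho$ forces $i = k$. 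If neither does, then $a_i/_0 = c^j_{t,i}$ and $a_k/_0 = c^j_{t,k}$, which coincide iff $i = k$. In the mixed case, say $x_i \in \mn{img}(\rho)$ with $\rho(x_\ell) = x_i$ and $x_k \notin \mn{img}(\rho)$, we have $a_k/_0 = c^j_{t,k}$ while $a_i/_0$ is the depth-$0$ truncation of $b_\ell$, a constant whose outermost superscript $j'$ lies in $J(b_\ell) \subseteq J(\bar b|_{\mn{dom}(\rho)})$. Since $j \notin J(\bar b|_{\mn{dom}(\rho)})$, we get $j' \neq j$, hence $a_i/_0 \neq a_k/_0$.

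The only step that goes beyond unpacking definitions is this mixed case, and it is exactly the side condition $j \notin J(\bar b|_{\mn{dom}(\rho)})$ built into the saturation step that makes it work: this condition ensures that the outer index $j$ of every newly introduced function symbol, and hence of its depth-$0$ truncation $c^j_{t,i}$, is fresh relative to every inherited parameter, blocking any post-truncation collision with components coming from $\bar b$. No further subtlety arises, so the remaining work is bookkeeping.
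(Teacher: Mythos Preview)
Your proof is correct and follows essentially the same approach as the paper's own proof: induction on $m$, with the same three-way case split in the inductive step according to whether $x_i$ and $x_k$ lie in $\mn{img}(\rho)$. In the mixed case you identify precisely the same mechanism as the paper does, namely that the side condition $j \notin J(\bar b|_{\mn{dom}(\rho)})$ separates the fresh superscript $j$ from any superscript carried by an inherited component.
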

\begin{proof}
	We proceed by induction on $r$.
	For $r = 0$, this is immediate by definition of $\Kmc^0(t)$.
	For the induction step, assume the claim holds for all integers smaller than some $r > 0$.
	Let $\bar a \in \Kmc^r(t)$ for some $t$.
	Since $r > 0$, there is some type $s$
	and $\bar b \in \Kmc^{r-1}(s)$
	from which $\bar a$ has been obtained
	in the construction of $\Kmc^r(t)$, that is, there exist $\bar u, \bar v \in \mn{mgt}(\structure{A})$ with $\tp[\sigext][\Amf]{\bar u} = s$, $\tp[\sigext][\Amf]{\bar v} = t$ and for all $i = 1, \dots, \sizeof{\bar a}$:
	\[
	a_i = 
	\left\{
	\begin{array}{l@{\;}l}
		b_\ell & \textrm{ if } (x_\ell , x_i) \in \rho
		\\
		f^{j}_{s, \rho, t, i}(\bar b|_{\mn{dom}(\rho)}/_1) & \textrm{ if } x_i \notin \textrm{img}(\rho)
	\end{array}
	\right.
	\]
	where $\rho = \rho(u, v)$ and $j \notin J(\bar b|_{\textrm{dom}(\rho)})$.
	By induction hypothesis, it is clear that if $x_i, x_k \in \mn{img}(\rho)$, then $a_i/_0 = a_k/_0$ iff $i = k$.
	In the case $x_i, x_k \notin \mn{img}(\rho)$, then we have $a_i/_0 = a_k/_0$ iff $f^{j}_{s, \rho, t, i}(\bar b|_{\mn{dom}(\rho)}/_1)/_0 = f^{j}_{s, \rho, t, k}(\bar b|_{\mn{dom}(\rho)}/_1)/_0$ iff $c^j_{t, i} = c^j_{t, k}$ iff $i = k$.
	It remains to treat the case of $x_i \in \mn{img}(\rho)$ and $x_k \not\in \mn{img}(\rho)$, for which we clearly have $i \neq k$ and thus aim to prove $a_i/_0 \neq a_k/_0$.
	Note that $a_k/_0 = c^j_{t, k}$ while $a_i/_0$ has shape $c^{j'}_{t', i'}$ for some $j' \in J(b_\ell)$ with $\ell$ such that $\rho(x_\ell) = x_i$.
	From $b_\ell \in \bar b|_{\mn{dom}(\rho)}$ and $j \notin J(\bar b|_{\mn{dom}(\rho)})$, we obtain $a_i/_0  \neq a_k/_0$. 
\end{proof}

%
%


\smallskip

We
now define the
structure $\Bmf$. The universe $\domain{B}$ consists of all elements
in $\Delta$ and all elements that appear in a tuple in one of the sets
$\Kmc^r(t)$. Set 
$$
\begin{array}{r@{}c@{}l}
R^\Bmf &=& \displaystyle \bigcup_{r\geq 0} \bigcup_{t \in
	\mn{gtp}(\Amf)} \{ [\bar a/\bar e] \mid
R(\bar e) \in t
\text { and } \bar a \in \Kmc^r(t) \}.
\end{array}
$$
for each relation symbol $R$ where $[\bar a/\bar e]$ denotes the
tuple obtained from $\bar e$ by replacing every variable $x_i$
with the $i$-th element $a_i$ of $\bar a$. 
%
Note that 
$\bar e$ may involve elements from $\Delta$ which are left unchanged.
For each constant~$c$, 
set $c^\Bmf = c^\Amf$.

\smallskip

We remark that the above model construction is exactly the one from
\cite{BGO} except that we have defined types and the set
$\mn{gtp}(\Amf)$ in a different way, giving a special role to the
elements of $\Delta$. Moreover, we truncate terms at height~1
while~\cite{BGO}, which additionally deals with conjunctive queries,
truncates at larger heights. Without truncation, the above
construction falls back to being a standard unraveling
construction. Such a construction, however, does not deliver finite
models and truncation is used as a way to reuse elements so as to
achieve finiteneness. One has to be careful, though, because reusing
elements in a too agressive way will result in different guarded types
to overlap, which effectively generates guarded types that are not
present in the original structure \Amf and may result in the
constructed structure \Bmf to not be a model of $\phi$. An extreme
case of such aggressive reuse is the filtration technique from modal logic
which is well-known to fail for the guarded fragment.  The indexes
$\cdot^j$ result in several copies of each element to be introduced
which makes it possible to avoid too aggressive reuse when truncating.

\smallskip

We have to show that \Bmf is a model of $\phi$ and satisfies
Conditions~1-4 from Lemma~\ref{lemma-non-uniform}.

\medskip

The following lemma establishes a central `compatibility property' for
the sets $\Kmc^m(t)$. If we find tuples $\bar a \in \Kmc^m(t)$ and
$\bar a' \in \Kmc^n(t')$ that overlap, then $t$ and $t'$ agree
regarding the positions in which $\bar a$ and $\bar a'$ overlap. If
that property would fail, then our construction of \Bmf could generate types
that did not exist in the original model \Amf, and this would lead to
\Bmf not necessarily being a model of $\phi$.
\begin{lemma}
  \label{lem:formerclaim}
  Let $\bar a \in \Kmc^m(t)$, $\bar a' \in \Kmc^n(t')$ and let $\sigma = \rho(\bar a, \bar a')$.
  If $R(\bar e) \in t$ and all variables in $\bar e$ are in $\mn{dom}(\sigma)$, then $R(\sigma \bar e) \in t'$.

%
\end{lemma}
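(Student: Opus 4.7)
My plan is to prove Lemma~\ref{lem:formerclaim} by induction on $m+n$, with the inductive step performing a case analysis on how $\bar a$ (assumed without loss of generality to come from a saturation step at level $m>0$) was built at its latest step and how the overlap $\sigma$ distributes across the positions of $\bar a$.

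In the base case $m=n=0$, both tuples consist of fresh constants, $\bar a = \bar c^{j}_{t}$ and $\bar a' = \bar c^{j'}_{t'}$, and any equality $c^{j}_{t,i} = c^{j'}_{t',k}$ forces $(j,t,i) = (j',t',k)$. Thus $t = t'$, $\sigma$ acts as a partial identity on its domain, and the claim is immediate. For the inductive step, assume $m>0$; then $\bar a$ was produced from some $\bar b \in \Kmc^{m-1}(s)$, maximal guarded tuples $\bar u, \bar v \in \mn{mgt}(\Amf)$ with $\tp[\sigext][\Amf]{\bar u}=s$ and $\tp[\sigext][\Amf]{\bar v}=t$, the bijection $\rho = \rho(\bar u,\bar v)$, and a fresh index $j_\star \notin J(\bar b|_{\mn{dom}(\rho)})$. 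Each position $i$ of $\bar a$ is either $a_i = b_{\rho^{-1}(x_i)}$ (when $x_i \in \textrm{img}(\rho)$) or the fresh function term $a_i = f^{j_\star}_{s,\rho,t,i}(\bar b|_{\mn{dom}(\rho)}/_1)$ (otherwise). I split each overlap $(x_i,x_k) \in \sigma$ along this dichotomy. In the first sub-case, $a_i = a'_k$ lies in $\bar b$, so the overlap restricts to an overlap between $\bar b \in \Kmc^{m-1}(s)$ and $\bar a' \in \Kmc^n(t')$; since $\bar u, \bar v$ coexist as a shared sub-structure of $\Amf$, any atom $R(\bar e) \in t$ whose variables all lie in $\textrm{img}(\rho)$ pulls back via $\rho^{-1}$ to an atom of $s$, and the induction hypothesis (applicable since $(m-1)+n < m+n$) transfers it to $t'$. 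In the second sub-case, $a'_k = a_i$ is a fresh function term, forcing $a'_k$ to carry the top symbol $f^{j_\star}_{s,\rho,t,i}$; analyzing the construction of $\bar a'$, I argue that either $\bar a'$ itself is generated at a saturation step with matching parameters $(s,\rho,t,j_\star)$, which forces $t'=t$ and $k=i$, or else $a'_k$ descends from some ancestor tuple of $\bar a'$ of type $t$, to which the induction hypothesis applies.

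The main obstacle will be the descent analysis in the second sub-case: a priori a term $f^{j_\star}_{s,\rho,t,i}(\dots)$ could be embedded inside a truncated sub-tuple used at an intermediate step of the construction chain leading to $\bar a'$, rather than at its top level. I will rule this out by propagating the freshness constraint $j_\star \notin J(\cdot)$ downward through the construction chain and exploiting that truncation at depth one preserves exactly the top function symbol of each term, so any hidden occurrence of $j_\star$ as a top label in a sub-tuple would contradict the freshness condition imposed at the corresponding earlier saturation step, together with the no-repetition guarantee of Lemma~\ref{lem:no-repetition}. Once both sub-cases are established, I assemble them: for a general atom $R(\bar e) \in t$ with $\mn{var}(\bar e) \subseteq \mn{dom}(\sigma)$, the variables of $\bar e$ lying in $\textrm{img}(\rho)$ are handled by the first sub-case and those outside by the second, yielding $R(\sigma \bar e) \in t'$ as required.
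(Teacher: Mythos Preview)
Your induction on $m+n$ and the instinct to exploit the freshness index $j_\star$ together with Lemma~\ref{lem:no-repetition} are on target, but the position-by-position split does not assemble into a proof. The claim concerns a single atom $R(\bar e)$, and when $\bar e$ contains both ``old'' variables (in $\mn{img}(\rho)$) and ``new'' ones (outside $\mn{img}(\rho)$), neither of your sub-cases handles the whole atom: the first sub-case only transfers atoms \emph{all} of whose variables lie in $\mn{img}(\rho)$, since you must first pull $R(\bar e)$ back to $R(\rho^{-1}\bar e)\in s$ before invoking the hypothesis on $\bar b$ versus $\bar a'$; and the second sub-case, in its first branch, yields $t'=t$ and $\sigma(x_i)=x_i$ only at the particular new position where it fires---it says nothing about how $\sigma$ acts on the old positions. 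Two partial transfers of one atom do not combine.

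The paper's fix is to case on the overlap \emph{globally} rather than per position. If $\mn{dom}(\sigma)\subseteq\mn{img}(\rho)$ one reduces to $\bar b$ versus $\bar a'$ (your first sub-case, applied only when it is actually legitimate); if $\mn{img}(\sigma)\subseteq\mn{img}(\rho')$ one reduces symmetrically to $\bar a$ versus $\bar b'$---this is what your ``ancestor'' branch ought to be, and note that the immediate predecessor $\bar b'$ has type $s'$, not $t$, so the phrase ``ancestor tuple of $\bar a'$ of type $t$'' is a misstatement that does not yield a usable induction hypothesis. Only when both containments fail does one do the real work: a $J$-index analysis using \emph{both} witness positions $x_\iota\in\mn{dom}(\sigma)\setminus\mn{img}(\rho)$ and $x_{\kappa'}\in\mn{img}(\sigma)\setminus\mn{img}(\rho')$ first forces $t'=t$, $\rho'=\rho$, $s'=s$, $j'=j_\star$ and equality of the truncated argument tuples, and then a second pass invoking Lemma~\ref{lem:no-repetition} on $\bar b$ shows $\sigma=\mn{Id}$ also on the old positions. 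Two smaller issues: your base case omits the sub-case where $\bar e$ contains no variables (only $\Delta$-constants), handled by observing that every type in $\mn{gtp}(\Amf)$ records the same $\Delta$-facts; and ``WLOG $m>0$'' is fine only if you argue the statement for $(\bar a,\bar a')$ and $(\bar a',\bar a)$ simultaneously at each level of the induction.
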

\begin{proof}
  The proof is by a nested induction on $m$ and $n$. 

  \smallskip

  For the outer
  induction start, assume that $m=0$.

\smallskip

				For the inner induction start, assume that $n=0$.
				Let $\bar a \in \Kmc^0(t)$ and $\bar a' \in \Kmc^0(t')$ for some $t$ and $t'$, and let $\sigma = \rho(\bar a, \bar a')$.
				By definition of $\Kmc^0(t)$ and
                                $\Kmc^0(t')$, there are $j$ and $j'$
                                such that $\bar a = \bar c^j_{t}$ and
                                $\bar a' = \bar c^{j'}_{t'}$. 
				Let $R(\bar e) \in t$ such that all variables in $\bar e$ are in $\mn{dom}(\sigma)$.

                                If $\bar e$ indeed contains a variable
                                $x_i$, say  with $\sigma x_i = x_{i'}$, then from $a_i=a'_{i'}$ we obtain $c^j_{t,i} = c^{j'}_{t', i'}$.
				Therefore, $t = t'$, $j = j'$, $i = i'$, thus $\bar a = \bar a'$ and $\sigma$ is the identity.
				It follows trivially that $R(\sigma
                                \bar e) \in t'$.
                                
				Otherwise there is no variable in
                                $\bar e$, and thus $\bar e$ only
                                involves elements from $\Delta$. 
				Since $t$ is a $\Delta$-extended
                                type from $\mn{gtp}(\structure{A})$,
                                by definition of such types we have
                                $\structure{A} \models R(\bar e)$, which in turn yields $R(\bar e) \in t'$.


                                \smallskip
                                Now for the (inner) induction step.
				Assume that the claim holds for $m=0$
                                and all integers smaller than some $n >0$.
				Let $\bar a \in \Kmc^0(t)$ and $\bar a' \in \Kmc^n(t')$ for some $t$ and $t'$ and let $\sigma = \rho(\bar a, \bar a')$.
				As $m=0$, there is again a $j$ such that $\bar a = \bar c^j_{t}$.
				Since $n > 0$, there is some type $s'$
                                and $\bar b' \in \Kmc^{n-1}(s')$
                                from which $\bar a'$ has been obtained
                                in the construction of $\Kmc^n(t')$, that is, there exist $\bar u, \bar v \in \mn{mgt}(\structure{A})$ with $\tp[\sigext][\Amf]{\bar u} = s'$, $\tp[\sigext][\Amf]{\bar v} = t'$ and for all $i = 1, \dots, \sizeof{\bar a'}$:
				\[
				a'_i = 
				\left\{
				\begin{array}{l@{\;}l}
					b'_\ell & \textrm{ if } (x_\ell , x_i) \in \rho
					\\
					f^{j'}_{s', \rho, t', i}(\bar b'|_{\mn{dom}(\rho)}/_1) & \textrm{ if } x_i \notin \textrm{img}(\rho)
				\end{array}
				\right.
				\]
				where $\rho = \rho(u, v)$ and $j' \notin J(\bar b' |_{\textrm{dom}(\rho)})$.
				Since $\bar a$ does not contain any function symbol, it must be that every variable in the image of $\sigma$ is also in the image of $\rho$, that is $\mn{img}(\sigma) \subseteq
				\mn{img}(\rho)$.
				Since $\rho$ is injective, we can consider $\rho^{-1} \circ \sigma$, which is the restriction to $\mn{dom}(\sigma)$ of $\sigma' := \rho(\bar a, \bar b')$.
				Let $R(\bar e) \in t$ such that all variables in $\bar e$ are in $\mn{dom}(\sigma)$ and, {a fortiori}, in $\mn{dom}(\sigma')$.
				By induction hypothesis, we obtain $R(\sigma' \bar e) \in s'$, thus $R(\rho^{-1} \sigma \bar e) \in s'$.
%
				Now, from $\typeinof{A}{\bar u} =
                                s'$, it follows that $\Amf
                                \models R(\rho^{-1} \sigma \bar e)[\bar u]$. 
                                Recalling $\rho = \rho(\bar u, \bar
                                v)$, this
                                yields $\structure{A} \models
                                R(\sigma \bar e)[\bar v]$.
				Recalling that $\typeinof{A}{\bar v} = t'$ now guarantees $R(\sigma \bar e) \in t'$.


\medskip
                                Now for the outer induction step. 
		Assume that the claim holds for all integers smaller than some $m >0$.

                \smallskip
		For the inner induction start, where $n = 0$, the arguments are similar to those used in the case $m = 0$ and $n > 0$.
		For the inner induction step, assume that the claim holds for all integers smaller than some $n >0$.
		Let $\bar a \in \Kmc^m(t)$ and $\bar a' \in \Kmc^n(t')$ for some $t$ and $t'$ and $\sigma = \rho(\bar a, \bar a')$.
		Let $\bar b \in \Kmc^{m-1}(s)$ and $\bar b' \in
                \Kmc^{n-1}(s')$ be tuples from which $\bar a$ and
                $\bar a'$ have been obtained in the construction of
                $\Kmc^m(t)$ and $\Kmc^n(t')$,
		meaning there exist $\bar u, \bar v, \bar u', \bar v' \in \mn{mgt}(\structure{A})$ with $\tp[\sigext][\Amf]{\bar u} = s$, $\tp[\sigext][\Amf]{\bar v} = t$, $\tp[\sigext][\Amf]{\bar u'} = s'$, $\tp[\sigext][\Amf]{\bar v'} = t'$ such that for all $i = 1, \dots, \sizeof{\bar a}$:
		\[
		a_i = 
		\left\{
		\begin{array}{l@{\;}l}
			b_\ell & \textrm{ if } (x_\ell , x_i) \in \rho 
			\\
			f^{j}_{s, \rho, t, i}(\bar b|_{\mn{dom}(\rho)}/_1) & \textrm{ if } x_i \notin \textrm{img}(\rho)
		\end{array}
		\right.
		\]
		where $\rho = \rho(u, v)$ and $j \notin J(\bar b|_{\textrm{dom}(\rho)})$,
		and for all $i = 1, \dots, \sizeof{\bar a'}$:
		\[
		a'_i = 
		\left\{
		\begin{array}{l@{\;}l}
			b'_\ell & \textrm{ if } (x_\ell , x_i) \in \rho'
			\\
			f^{j'}_{s', \rho', t', i}(\bar b'|_{\mn{dom}(\rho')}/_1) & \textrm{ if } x_i \notin \textrm{img}(\rho')
		\end{array}
		\right.
		\]
		where $\rho' = \rho(u', v')$ and $j' \notin J(\bar b' |_{\textrm{dom}(\rho')})$.
		Let $R(\bar e) \in t$ such that all variables in $\bar e$ are in $\mn{dom}(\sigma)$.
		
		If $\mn{dom}(\sigma) \subseteq \mn{img}(\rho)$,
                %
                %
                %
                then we can form $\sigma \circ \rho$, which restricts $\sigma' = \rho(\bar a', \bar b)$ to $\mn{dom}(\rho)$.
                We may then conclude by
                applying the induction hypothesis on $m$, with
                $\bar a := \bar b$, $t=s$, $m=m-1$, and $R(\bar
                e)=R(\rho^{-1}\bar e)$.
		
		Similarly, if $\mn{img}(\sigma) \subseteq
                \mn{img}(\rho')$, 
                then we can form
                $\rho^{-1} \circ \sigma$ which restricts $\sigma' = \rho(\bar a, \bar b')$ to $\mn{dom}(\sigma)$.
                In that case, the induction hypothesis
                on $n$ lets us conclude. 

		In the remaining case, that is
                $\mn{dom}(\sigma) \not\subseteq \mn{img}(\rho)$ and
                $\mn{img}(\sigma) \not\subseteq \mn{img}(\rho')$,
                there are
                $x_\iota \in \mn{dom}(\sigma) \setminus \mn{img}(\rho)$
                and
                $x_{\kappa'} \in \mn{img}(\sigma) \setminus
                \mn{img}(\rho')$.
                De facto,
                $a_\iota = f^{j}_{s, \rho, t, \iota}(\bar b|_{\mn{dom}(\rho)}/_1)$ and
                $a'_{\kappa'} = f^{j'}_{s', \rho', t', \kappa'}(\bar b'|_{\mn{dom}(\rho')}/_1)$.
                We let $\iota'$ and $\kappa$ be the indexes such that $\sigma(x_\iota) = x_{\iota'}$ and $\sigma(x_\kappa) = x_{\kappa'}$
               	
                We now prove $\sigma = \mn{Id}$ and $t = t'$, which will directly conclude the proof.
              	Let $x_{i} \in \mn{dom}(\sigma)$ and let $x_{i'} := \sigma x_i$.
              	We want to prove $i' = i$.
              
              	Consider first the case of $x_{i'} \not\in \mn{img}(\rho')$, that is $a_i = a'_{i'} = f^{j'}_{s', \rho', t', i'}(\bar b'|_{\mn{dom}(\rho')}/_1)$.
              	Notably, we have $j' \in J(a_i/_1)$.
              	This latter remark enforces $x_i \not\in \mn{img}(\rho)$.
              	Indeed, if $x_i \in \mn{img}(\rho)$, then $a_i \in \bar b|_{\mn{dom}(\rho)}$, thus $j' \in J(\bar b|_{\mn{dom}(\rho)}/_1)$.
              	In particular, $j' \in J(a_{\iota})$.
              	If $a'_{\iota'} \in \mn{img}(\rho')$, it yields $j' \in J(\bar b'|_{\mn{dom}(\rho')})$, which is a contradiction.
              	Similarly, if $a'_{\iota'} \not\in \mn{img}(\rho')$, then we have $j = j'$, thus $j \in J(\bar b|_{\mn{dom}(\rho)}/_1)$ which is a contradiction.
              	Therefore, we know $x_i \not\in \mn{img}(\rho)$, which yields $a_i = f^{j}_{s, \rho, t, i}(\bar b|_{\mn{dom}(\rho)}/_1)$.
              	Coupled with $a_i = f^{j'}_{s', \rho', t', i'}(\bar b'|_{\mn{dom}(\rho')}/_1)$, we obtain $i' = i$ as desired.
              	
              	Notice the above case covers $i' = \iota'$, and in particular gives $f^{j}_{s, \rho, t, \iota'}(\bar b|_{\mn{dom}(\rho)}/_1) = f^{j'}_{s', \rho', t', \iota'}(\bar b'|_{\mn{dom}(\rho')}/_1)$.
              	We therefore just proved that $t = t'$, $\rho = \rho'$, $s = s'$, $j = j'$ and $\bar b'|_{\mn{dom}(\rho')}/_1 = \bar b|_{\mn{dom}(\rho)}/_1$.
              	
              	We now move to $x_{i'} \in \mn{img}(\rho')$, thus $a_i = a'_{i'} \in \bar b'|_{\mn{dom}(\rho')}$.
              	This enforces $x_i \in \mn{img}(\rho)$.
             	Indeed, if $x_i \not\in \mn{img}(\rho)$, then $a_i = a'_{i'} = f^{j}_{s, \rho, t, i}(\bar b|_{\mn{dom}(\rho)}/_1)$ and therefore $j \in J(\bar b'|_{\mn{dom}(\rho')})$.
             	Recalling $j = j'$, this would contradict $j' \not\in J(\bar b'|_{\mn{dom}(\rho')})$.
             	Therefore, we know $x_i \in \mn{img}(\rho)$ and recall we have $x_{i'} \in \mn{img}(\rho')$.
             	Thus, there is $\ell$ and $\ell'$ such that $\rho(x_\ell) = x_i$ and $\rho'(x_{\ell'}) = x_{i'}$, that is $\rho(x_{\ell'}) = x_{i'}$ since $\rho = \rho'$, and we have $a_i = b_\ell$ and $a'_{i'} = b'_{\ell'}$.
             	From $a_i$ = $a'_{i'}$, it gives $b_\ell = b'_{\ell'}$ and thus, in particular, $b_\ell/_1 = b'_{\ell'}/_1$.
             	Recalling $\bar b'|_{\mn{dom}(\rho')}/_1 = \bar b|_{\mn{dom}(\rho)}/_1$, and since both $\ell$ and $\ell'$ clearly belong to $\mn{dom}(\rho)$, we get $b_\ell/_1 = b_{\ell'}/_1$.
             	Lemma~\ref{lem:no-repetition} yields $\ell = \ell'$, which in turn gives $i = i'$ as desired.

	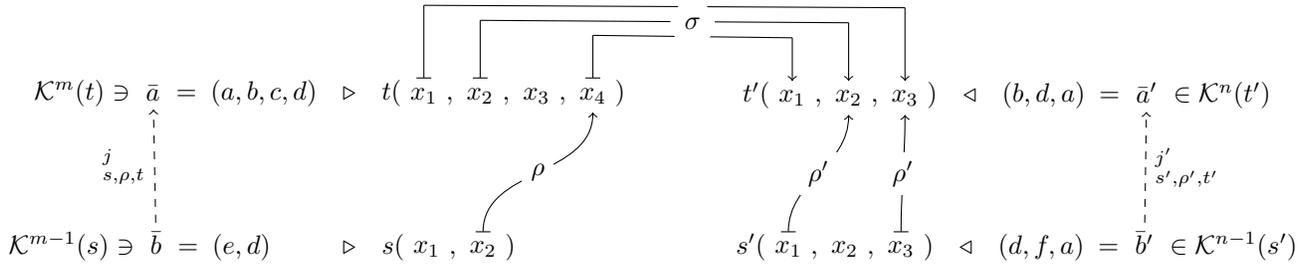
\begin{figure*}
		\centering
		\begin{tikzpicture}[remember picture]
			\node (aa) at ( 0, 0) {$
				\Kmc^m(t) \ni
				\raisebox{-.8ex}{\tikz[remember picture] \node (a) at (0, 0) {$\bar a$};}
				= ~ (a, b, c, d) ~~~\triangleright~~~ t(
				\raisebox{-1.05ex}{\tikz[remember picture] \node (ax1) at (0, 0) {$x_1$};},
				\raisebox{-1.05ex}{\tikz[remember picture] \node (ax2) at (0, 0) {$x_2$};}, 
				\raisebox{-1.05ex}{\tikz[remember picture] \node (ax3) at (0, 0) {$x_3$};},
				\raisebox{-1.05ex}{\tikz[remember picture] \node (ax4) at (0, 0) {$x_4$};}
				)$};
			\node (bb) at ( -.9,-2) {$
				\Kmc^{m-1}(s) \ni
				\raisebox{-.8ex}{\tikz[remember picture] \node (b) at (0, 0) {$\bar b$};}
				= ~ (e, d) ~~~~~~~~~~\triangleright~~~ s(
				\raisebox{-1.05ex}{\tikz[remember picture] \node (bx1) at (0, 0) {$x_1$};},
				\raisebox{-1.05ex}{\tikz[remember picture] \node (bx2) at (0, 0) {$x_2$};}
				)$};
			\node (aa') at ( 9, 0) {$
				t'(
				\raisebox{-1.05ex}{\tikz[remember picture] \node (a'x1) at (0, 0) {$x_1$};},
				\raisebox{-1.05ex}{\tikz[remember picture] \node (a'x2) at (0, 0) {$x_2$};}, 
				\raisebox{-1.05ex}{\tikz[remember picture] \node (a'x3) at (0, 0) {$x_3$};}
				) ~~~\triangleleft~~~ (b, d, a) ~ = 
				\raisebox{-.8ex}{\tikz[remember picture] \node (a') at (0, 0) {$\bar a'$};}
				\in \Kmc^{n}(t')
				$};
			\node (bb') at ( 9.13,-2) {$
				s'(
				\raisebox{-1.05ex}{\tikz[remember picture] \node (b'x1) at (0, 0) {$x_1$};},
				\raisebox{-1.05ex}{\tikz[remember picture] \node (b'x2) at (0, 0) {$x_2$};},
				\raisebox{-1.05ex}{\tikz[remember picture] \node (b'x3) at (0, 0) {$x_3$};}
				) ~~~\triangleleft~~~ (d, f, a) ~ =
				\raisebox{-.8ex}{\tikz[remember picture] \node (b') at (0, 0) {$\bar b'$};}
				\in \Kmc^{n-1}(s')
				$};
			\path
			(bx2) edge [draw, out=90, in=-90, |->] node [fill=white] {$\rho$} (ax4)
			
			(b'x1) edge [draw, out=90, in=-90, |->] node [fill=white] {$\rho'$} (a'x2)
			(b'x3) edge [draw, out=90, in=-90, |->] node [fill=white] {$\rho'$} (a'x3)
			
			(ax1) edge [draw, |-] ($(ax1) +(0,1.2)$) 
			($(ax1) +(0,1.2)$)  edge [draw, -] ($(a'x3) +(0,1.2)$) 
			($(a'x3) +(0,1.2)$)  edge [draw, ->] (a'x3)
			
			(ax2) edge [draw, |-] ($(ax2) +(0,1)$) 
			($(ax2) +(0,1)$)  edge [draw, -] ($(a'x2) +(0,1)$) 
			($(a'x2) +(0,1)$)  edge [draw, ->] (a'x2)
			
			(ax4) edge [draw, |-] ($(ax4) +(0,.8)$) 
			($(ax4) +(0,.8)$)  edge [draw, -] node [circle, fill=white, yshift={5}] {$\sigma$} ($(a'x1) +(0,.8)$) 
			($(a'x1) +(0,.8)$)  edge [draw, ->] (a'x1)
			
			(b) edge [draw, ->, dashed] node [left] {$ _{s, \rho, t}^j$} (a)
			(b') edge [draw, ->, dashed] node [right] {$ _{s', \rho', t'}^{j'}$} (a')
			;
		\end{tikzpicture}
		\caption{Inductive case on both $m$ and $n$ in the proof of Lemma~\ref{lem:formerclaim}. If $R(x_2, \delta, x_1, x_1, x_4) \in t$, we need to prove $R(x_2, \delta, x_3, x_3, x_1) \in t'$.}
	\end{figure*}
	
\end{proof}

%
\begin{lemma}
\label{lem:typelem}
If $\bar a \in \Kmc^m(t)$,
then $\tp[\sigext][\Bmf]{\bar a} = t$.
\end{lemma}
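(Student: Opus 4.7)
The plan is to verify that the literals of $t$ are precisely those satisfied by $\bar a$ in $\Bmf$, splitting the analysis by the kind of literal. First, since $t \in \mn{gtp}(\Amf)$ is realized by a maximal guarded tuple $\bar v$ of $\Amf$, which by definition of guarded sets has pairwise distinct entries all lying in $A \setminus \Delta$, the type $t$ contains $x_i \neq x_j$ for every $i \neq j$, $x_i \neq d$ for every $d \in \Delta$, and $x_i \neq c$ for every constant symbol $c$ of $\phi$ (since $c^\Amf \in \Delta$ by hypothesis). By Lemma~\ref{lem:no-repetition}, $\bar a$ has pairwise distinct components, and by inspection of the construction each component is either a fresh constant $c^j_{t,i}$ or a function term, so it lies in $B \setminus \Delta$ and differs from $c^\Bmf = c^\Amf$ for every constant $c$ of $\phi$. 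Literals involving only constants and $\Delta$-entries are preserved between $\Amf$ and $\Bmf$ by the construction of $\Bmf$. Hence the equality fragment of $t$ already matches $\tp[\sigext][\Bmf]{\bar a}$.

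For positive relational atoms the easy direction is immediate: if $R(\bar e) \in t$, then the definition of $R^\Bmf$ directly places $[\bar a / \bar e]$ in $R^\Bmf$, so $\Bmf \models R(\bar e)[\bar a]$.

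The main step is the converse for relational atoms, and this is where I expect the real work to lie. Suppose $\Bmf \models R(\bar e)[\bar a]$, that is, $[\bar a/\bar e] \in R^\Bmf$. By construction, there exist $m' \geq 0$, $t' \in \mn{gtp}(\Amf)$, $\bar a' \in \Kmc^{m'}(t')$, and $R(\bar e') \in t'$ such that $[\bar a / \bar e] = [\bar a' / \bar e']$. Since components of $\bar a$ and $\bar a'$ never lie in $\Delta$ (being fresh constants or function terms), while $\Delta$-elements and constants of $\phi$ appearing in $\bar e, \bar e'$ are left untouched by the substitutions, a position-by-position comparison forces $\bar e$ and $\bar e'$ to agree on every position occupied by a $\Delta$-element or a constant, and at every variable position $i$, writing $e_i = x_k$ and $e'_i = x_\ell$, to satisfy $a'_\ell = a_k$. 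Setting $\sigma := \rho(\bar a', \bar a)$, this reads $\sigma x_\ell = x_k$, so every variable occurring in $\bar e'$ lies in $\mn{dom}(\sigma)$ and $\sigma \bar e' = \bar e$. Applying Lemma~\ref{lem:formerclaim} with the roles of $(\bar a, t, m)$ and $(\bar a', t', m')$ swapped then yields $R(\bar e) = R(\sigma \bar e') \in t$, as required. The main obstacle is precisely this last invocation: its applicability depends on the observation that no component of $\bar a$ or $\bar a'$ can coincide with a $\Delta$-entry or with a constant of $\phi$ appearing inside the other tuple's $\bar e$, which is what aligns the variable slots across the two ways of generating $[\bar a/\bar e]$.
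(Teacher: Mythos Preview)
Your proof is correct and follows essentially the same approach as the paper: both handle the equality literals by appealing to Lemma~\ref{lem:no-repetition} and the fact that components of tuples in $\Kmc^m(t)$ never lie in $\Delta$, and both handle the non-trivial backward direction for relational atoms by extracting the witnessing $\bar a' \in \Kmc^{m'}(t')$ with $R(\bar e') \in t'$, setting $\sigma = \rho(\bar a',\bar a)$, and invoking Lemma~\ref{lem:formerclaim}. The only difference is organizational (you treat all equality literals upfront, the paper goes atom shape by atom shape), but the substance is identical.
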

\begin{proof}
%
%
%
%
%

    Let $t \in \mn{gtp}(\structure{A})$ be an $n$-type on $\sigext$ and $\bar a \in \Kmc^m(t)$ for some $m$. We have to show that for any atom
    $\alpha$ using as terms only variables among $x_1,\dots,x_n$ and constants
    from $\Delta$, we have $\alpha \in t$ iff
    $\Bmf \models \alpha[\bar a]$. 
    From the maximality of $t$ and $\tp[\sigext][\Bmf]{\bar a}$, it will
    follow that the same equivalence holds also for literals, concluding the proof.
    Now for an atom $\alpha$, we distinguish the following cases:
  \begin{itemize}

  \item $\alpha$ has the form $x_i = c$, with $1 \leq i \leq n$ and $c \in \Delta$.
  
  	Assume that $x_i = c \in t$.
  	Since $t \in \mn{gtp}(\structure{A})$, there exists a $\bar b \in \mn{mgt}(\structure{A})$ with type $t$.
  	However, by definition, $\bar b$ does not contain any element from $\Delta$, hence a contradiction.
  	Conversely, notice that, by definition of $\Kmc^m(t)$, none of its tuples contains an element from $\Delta$.
  	Therefore, assuming $\structure{B} \models (x_i = c)[\bar a]$ already yields a contradiction.

  \item $\alpha$ has the form $x_i = x_j$, with $1 \leq i, j \leq n$.
 	
  	Recall that, from Lemma~\ref{lem:no-repetition}, $\bar a$ is without repetition.
  	Now, if $x_i = x_j \in t$, recalling that $t$ is the type of a maximally guarded tuple in $\structure{A}$, hence of a repetition-free tuple, it must be that $i = j$.
  	Naturally, $\structure{B} \models a_i = a_i$.
  	For the other direction, assume $\structure{B} \models a_i = a_j$.
  	Since $\bar a$ is without repetition, it must be that $i = j$. 
  	Every type from $\structure{A}$ trivially contains $x_i = x_i$.

  \item $\alpha$ has the form $c_1 = c_2$, with $c_1, c_2 \in \Delta$. 
  	
  	Since elements of $\Delta$ are treated as constants in $\structure{B}$ and we defined $c^\Bmf = c^\Amf$ for each constant $c$, we immediately have $\structure{B} \models c_1 = c_2$ iff $\structure{A} \models c_1 = c_2$ iff $c_1 = c_2 \in t$ (by definition of $\mn{gtp}(\structure{A})$).

  \item $\alpha$ has the form $R(\bar e)$.
  
  	It is immediate by construction of \Bmf that $R(\bar e) \in t$
  	implies $\Bmf \models R(\bar e)[\bar a]$. Conversely, assume that
  	$\Bmf \models R(\bar e)[\bar a]$. Then by construction of \Bmf there is
  	some $n'$-type $t'$, $m' \geq 0$, $\bar a' \in \Kmc^{m'}(t')$, and
  	atom $R(\bar e') \in t'$ such that $R(\bar e')[\bar a'] = R(\bar e)[\bar a]$, that is:
	$e'_\ell = x_i$ and $e_\ell=x_j$ implies $a'_i=a_j$.
	Letting $\sigma = \rho(\bar a', \bar a)$, the above yields $\sigma \bar e' = \bar e$ and,
	in particular, all variables occurring in $\bar e'$ are contained in $\mn{dom}(\sigma)$.
	Lemma~\ref{lem:formerclaim} thus delivers
	$R(\sigma \bar e') \in t$, that is $R(\bar e) \in t$ as desired.
%
%
  \end{itemize}

%

\end{proof}

\begin{lemma}
\Bmf is a model of $\phi$.
\end{lemma}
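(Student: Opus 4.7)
The plan is to verify that $\Bmf$ satisfies each conjunct of $\phi$ in its Scott normal form---the universal conjuncts $\forall \bar x(\alpha_i \to \varphi_i)$ and the existential conjuncts $\forall \bar x(\beta_i \to \exists \bar y(\gamma_i \wedge \psi_i))$---using $\Amf \models \phi$ together with Lemma~\ref{lem:typelem}. A useful preliminary is that $\Bmf$ restricted to $\Delta$ coincides with $\Amf$ restricted to $\Delta$: constants are interpreted identically, and any ground atom $R(\bar d)$ with $\bar d \subseteq \Delta$ belongs by maximality to every type in $\mn{gtp}(\Amf)$ (nonempty under the standing assumption $\Delta \subsetneq A$, which also guarantees $\Kmc^0(t) \neq \emptyset$ for every such $t$) and therefore enters $R^\Bmf$, while atoms false in $\Amf$ belong to no such type and never enter $R^\Bmf$.

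For the universal case, suppose $\Bmf \models \alpha_i[\bar a]$. If $\bar a \subseteq \Delta$, the equality of $\Delta$-reducts together with $\Amf \models \alpha_i \to \varphi_i$ and the quantifier-freeness of $\varphi_i$ immediately yield $\Bmf \models \varphi_i[\bar a]$. Otherwise, by the construction of $R^\Bmf$ the atom $\alpha_i[\bar a]$ is produced from some type $t \in \mn{gtp}(\Amf)$ and some $\bar c \in \Kmc^m(t)$; Lemma~\ref{lem:typelem} gives $\tp[\sigext][\Bmf]{\bar c} = t$, and since $t$ is realized by some $\bar u \in \mn{mgt}(\Amf)$ at which $\varphi_i$ must hold, every literal of $\varphi_i$ at $\bar u$ belongs to $t$ and so holds at $\bar c$ in $\Bmf$, hence at $\bar a$.

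For the existential case, suppose $\Bmf \models \beta_i[\bar a]$. Again $\bar a$ is induced by some $\bar c \in \Kmc^m(t)$ whose type $t$ is realized by $\bar u \in \mn{mgt}(\Amf)$ (the subcase $\bar a \subseteq \Delta$ being treated analogously using $\Kmc^0$ directly). Since $\Amf \models \phi$, there are witnesses $\bar b^\ast$ in $\Amf$ for the existential at the corresponding $\Amf$-tuple; extend the non-$\Delta$ elements of $\bar u\bar b^\ast$ to a maximal guarded tuple $\bar v \in \mn{mgt}(\Amf)$, let $t' = \tp[\sigext][\Amf]{\bar v}$, and set $\rho = \rho(\bar u, \bar v)$. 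Applying the saturation step at stage $m+1$ to $t, t', \rho$, and any index $j \notin J(\bar c|_{\mn{dom}(\rho)})$ produces a tuple $\bar c' \in \Kmc^{m+1}(t')$ whose entries at positions in $\mn{img}(\rho)$ agree with those of $\bar c$ (hence of $\bar a$) and whose remaining entries are fresh. Lemma~\ref{lem:typelem} then gives $\tp[\sigext][\Bmf]{\bar c'} = t'$, and since $t'$ contains $\gamma_i$ together with all literals of $\psi_i$ at the positions of $\bar u\bar b^\ast$ inside $\bar v$, the fresh entries of $\bar c'$---combined with the $\Delta$-elements of $\bar b^\ast$---supply witnesses in $\Bmf$ for the existential at $\bar a$.

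The main delicacy is the existential step: one must exhibit an admissible index $j$ at every application of the saturation step, which follows from $K = w^{4}$ being much larger than the bounded number of distinct indices that can appear in a tuple of truncation depth at most two and arity at most $w$, and one must match positions carefully when $\bar a$ mixes $\Delta$-constants with non-$\Delta$ entries produced by $\bar c$. Once these bookkeeping points are settled, the entire verification reduces to systematic applications of Lemma~\ref{lem:typelem} and the transfer of types realized in $\Amf$ into $\Bmf$ through the sets $\Kmc^m(t)$.
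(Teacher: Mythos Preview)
Your proposal is correct and follows essentially the same route as the paper: verify the universal conjuncts directly from Lemma~\ref{lem:typelem}, and for each existential conjunct trace the guard atom back to some $\bar c \in \Kmc^m(t)$, pick a realization $\bar u$ of $t$ in $\Amf$, take witnesses there, extend to a maximally guarded tuple $\bar v$ with type $t'$, and invoke the saturation step to produce a tuple in $\Kmc^{m+1}(t')$ that supplies the required witnesses in $\Bmf$ via Lemma~\ref{lem:typelem}. The paper is terser---it dispatches the universal part in one sentence and does not separate the $\bar a \subseteq \Delta$ subcase---whereas you make the $\Delta$-reduct coincidence and the index-availability bound for $j$ more explicit than the paper does (indeed the paper's ``$K > w$'' remark understates the bound you correctly identify). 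One small imprecision: when you write ``extend the non-$\Delta$ elements of $\bar u\bar b^\ast$'', note that $\bar u$ may contain elements not appearing in the $\gamma_i$-guard tuple, so the set you want to extend is the non-$\Delta$ part of the $\gamma_i$-tuple rather than of $\bar u\bar b^\ast$; this does not affect the argument, since $\rho(\bar u,\bar v)$ still captures the $\bar x$-positions needed to align $\bar c'$ with $\bar a$.
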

\begin{proof}
Let
$$
\phi = \bigwedge_i \forall \bar x \, (\alpha_i \rightarrow \varphi_i)
\wedge \bigwedge_j \forall \bar x \, (\beta_j \rightarrow
\exists \bar y \, ( \gamma_j \wedge \psi_j)).
$$
It is a direct consequence of Lemma~\ref{lem:typelem} that all
conjuncts of the form $\forall \bar x \, (\alpha \rightarrow \varphi)$
are satisfied in \Bmf.

Thus consider a conjunct
$\forall \bar x \, (\beta \rightarrow \exists \bar y \, ( \gamma
\wedge \psi))$ and assume that $\Bmf \models \beta[\bar a']$.  By
definition of \Bmf, there is an $s \in \mn{gtp}(\Amf)$, an $n\geq0$,
and an $\bar a \in \Kmc^n(s)$ such that all elements of $\bar a'$
occur in~$\bar a$. Let $v_1,\dots,v_k$ be the variables in $\beta$ and
let $\beta'$ be obtained from $\beta$ by replacing each variable $v_i$
with $x_j$ if $a'_i=a_j$ (this is well-defined because $\bar a$ does
not have repeated elements). By Lemma~\ref{lem:typelem}, we have
$\beta' \in s$. Take any tuple $\bar u \in \mn{mgt}(\Amf)$ such that
$\tp[\sigext][\Amf]{\bar u}=s$. Since \Amf is a model of $\phi$, we have
$\Amf \models \exists \bar y \, ( \gamma \wedge \psi)[\bar u]$.  Let
$\bar v \in \mn{mgt}(\Amf)$ be a tuple witnessing this and let
$\tp[\sigext][\Amf]{\bar v}=t$ and $\rho=\rho(\bar u,\bar v)$.  Choose any
$j \in \{0,\dots,K \} \setminus J(\bar a |_{\textrm{dom}(\rho)})$,
which must exist since the length of the tuples in $\Kmc^n(s)$ is
bounded from above by the maximum arity of relation symbols $w$,
and $K > w$. By
definition, $\Kmc^{n+1}(t)$ contains a tuple $\bar b$ whose
$i^\textit{th}$ component $b_i$ is defined as:
\[
b_i = 
\left\{
\begin{array}{l@{\;}l}
a_\ell & \textrm{ if } (x_\ell , x_i) \in \rho 
\\
f^j_{s, \rho, t, i}(\bar a|_{\mn{dom}(\rho)}/_1) & \textrm{ if } x_i \notin \textrm{img}(\rho).
\end{array}
\right.
\]
By Lemma~\ref{lem:typelem}, $\tp[\sigext][\Bmf]{\bar b}=t$. Thus,
$\bar b$ witnesses that $\Bmf \models \exists \bar y \, ( \gamma
\wedge \psi)[\bar a]$,
as required.
\end{proof}

\begin{lemma}
\Bmf satisfies Conditions~1-4 from Lemma~\ref{lemma-non-uniform}.
\end{lemma}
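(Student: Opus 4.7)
The plan is to verify the four conditions in turn; Points~2, 3 and 4 follow from the construction together with Lemma~\ref{lem:typelem}, while Point~1 is where the quantitative bound is extracted and where the main effort lies. Points~2 and~4 are essentially immediate. For Point~4, $\Delta \subseteq \domain{B}$ is built into the definition of the universe of $\Bmf$, and we explicitly set $c^\Bmf := c^\Amf$ for every constant $c$ of $\phi$, which is well-defined as $c^\Amf \in \Delta$ by hypothesis. For Point~2, every element $a \in \Delta$ is treated as a constant symbol of $\sigext$, so each type $t \in \mn{gtp}(\Amf)$ records the truth value in $\Amf$ of the pure-constant literal $P(a)$ for every unary $P \in \sig$; because $\Delta \subsetneq \domain{A}$ ensures $\mn{gtp}(\Amf) \neq \emptyset$ and every $\Kmc^0(t)$ is nonempty, the definition of $R^\Bmf$ places $a$ into $P^\Bmf$ if and only if $a \in P^\Amf$, so the $\sig$-1-type of $a$ is preserved.

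Point~3 is obtained from Lemma~\ref{lem:typelem} in both directions. Given $a \in A \setminus \Delta$ with $\sig$-1-type~$t$, the singleton $\{a\}$ is guarded, so $a = u_i$ for some $\bar u \in \mn{mgt}(\Amf)$; any tuple $\bar b \in \Kmc^0(\tp[\sigext][\Amf]{\bar u})$ then produces an element $b_i$ of $B \setminus \Delta$---a fresh constant of the form $c^j_{\cdot, i}$, never belonging to $\Delta$---whose $\sigext$-type in $\Bmf$ coincides with the $\sigext$-type of $\bar u$ in $\Amf$, and in particular whose $\sig$-1-type is~$t$. Conversely, any $b \in B \setminus \Delta$ is a component of some $\bar a \in \Kmc^r(t)$ with $t \in \mn{gtp}(\Amf)$, and since $t$ is realised by a maximally guarded tuple $\bar u$ in $\Amf$ whose elements all lie in $A \setminus \Delta$, the matching component of $\bar u$ witnesses the $\sig$-1-type of $b$ on the $\Amf$ side.

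The hard step is Point~1. Every element of $\domain{B} \setminus \Delta$ is a component of some tuple in $\Kmc^r(t)$ and, thanks to the truncation rule, is a term of height at most two built from the constants $c^j_{t,i}$ and the function symbols $f^j_{t_1,\rho,t_2,i}$. Writing $w$ for the maximum arity in $\phi$ and noting that $\sig$ is unary---so its contribution to the exponent of $|\sigext|^w$ is only additive---the number of $n$-types on $\sigext$ with $n \leq w$ is bounded by $2^{\mathrm{poly}((|\Delta|+|\phi|)^{w})}$, and hence so is $|\mn{gtp}(\Amf)|$. Combined with $K = w^4$, the at most $w^{2w}$ partial substitutions and the arity factor $w$, this yields alphabets of constants and function symbols of matching size. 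A standard count of height-$\leq 2$ terms over these alphabets, plus the additive $|\Delta|$ contributed by the $\Delta$-part of $\domain{B}$, fits inside $2^{(|\Delta|+|\phi|)^{|\phi|+11}}$. I expect the careful tracking of all these polynomial factors---so that the resulting exponent remains bounded by $|\phi|+11$---to be the most delicate step; the constant $11$ is the slack absorbing the arity $w$, the substitution count, and the squaring that occurs when moving from height one to height two.
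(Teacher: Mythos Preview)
Your proposal is correct and follows essentially the same route as the paper: Points~2--4 are handled identically via the constant-like role of $\Delta$-elements and Lemma~\ref{lem:typelem}, and for Point~1 the paper performs exactly the count you sketch (bounding $|\mn{gtp}(\Amf)|$ by viewing an $n$-type as an interpretation over $\Delta \cup \{x_1,\dots,x_n\}$, then bounding the constant and function-symbol alphabets, then counting height-$\leq 2$ terms), making the intermediate arithmetic explicit to arrive at $2^{(|\Delta|+|\phi|)^{|\phi|+11}}$.
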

\begin{proof}
	Notice that in the case $\Delta = \domain{A}$, we defined $\structure{B} = \structure{A}$ and the conditions are trivially satisfied.
	We verify each condition for the case $\Delta \subsetneq \domain{A}$:
\begin{enumerate}
		\item 
		We exhibit an upper bound on the size of $\domain{B}$.
		We recall that $w \geq 2$ is the maximal arity of predicates from $\phi$ and we let $p \geq 1$ be the number of predicates occurring in $\phi$.
		Let us denote $T$ the number of types on $\sigext$.
		Since each $n$-type on $\sigext$ is essentially an interpretation over domain $\Delta \cup \{ x_1, \dots, x_n \}$, there are at most $2^{{p (\sizeof{\Delta} + n)}^{w}}$ distinct $n$-types on $\sigext$.
		The type of a maximally guarded tuple from $\structure{A}$ is a $n$-type on $\sigext$ for some $n \leq w$, thus we obtain:
		\[
		T \leq \sum_{n = 0}^{w} 2^{{p (\sizeof{\Delta} + n)}^{w}} \leq 2^{{pw(\sizeof{\Delta} + w)}^{w}}.
		\]
		Now, notice $\domain{B}$ is split between elements from $\Delta$ and those composed of constant symbols $c^j_{t, i}$ and of functions symbols $f^j_{s, \rho, t, i}$.
		The number of such partial bijection $\rho$ from $\{ x_1, \dots, x_m \}$ to $\{ x_1, \dots, x_n \}$, assuming $s$ is a $m$-type and $t$ an $n$-type, is bounded by $(w+1)^{w}$ thus by $2^{w^2}$.
		Recall that superscript $j$ ranges over $1, \dots, K$, with $K = w^{4}$, and $i$ over $1, \dots, w$.
		Therefore, the number $C$ of constant symbols $c^j_{t, i}$ is at most $T w^{5}$, and the number $F$ of function symbols at most $T^22^{w^6}$.
		Because of the truncation mechanism, every non-$\Delta$ element $e$ of $\domain{B}$ can be determined by a function that assigns to each node of the complete tree with depth $2$ and branching degree $w$, either a constant symbol, a function symbol, or a special value indicating that this node is not needed to describe $e$.
		It follows that there are at most $(C + F + 1)^{w^{3}}$ non-$\Delta$ elements in $\domain{B}$.
		Therefore, unraveling the bounds on $C$ and $F$, we get:
		\[
		\sizeof{B} \leq \sizeof{\Delta} + (T w^{5} + T^22^{w^6} + 1)^{w^{3}},
		\]
		which we can abruptly bound with:
		\[
		\sizeof{B} \leq \sizeof{\Delta} + 2^{w^9}T^{w^{5}}.
		\]
		Now, unraveling the bound on $T$, we obtain:
		\[
		\sizeof{B} \leq \sizeof{\Delta} + 2^{w^9 + {p w^{6}(\sizeof{\Delta} + w)}^{w}},
		\]
		which we can further bound by:
		\[
		\sizeof{B} \leq 2^{p (\sizeof{\Delta} + w)^{w + 10}}.
		\]
		This exhibits a double-exponential dependency w.r.t.\ the maximal arity, but only single-exponential dependency in terms of the number of predicates and the size of $\Delta$.
		Now, using $p, w \leq \sizeof{\phi}$, we can slightly simplify this bound as:
		\[
		\sizeof{B} \leq 2^{(\sizeof{\Delta} + \sizeof{\phi})^{\sizeof{\phi} + 11}}.
		\]

\item 
		Let $a \in \Delta$. 
		From $\Delta \subsetneq \domain{A}$, we have that $\mn{gtp}(\structure{A})$ is non-empty.
		In particular, there is a type $t \in \mn{gtp}(\structure{A})$.
		Since $a$ is treated as a constant symbol by types on $\sigext$, we have $\alpha \in \tp[\sigext]<1>[\Amf]{a}$ iff $\alpha[a] \in t$.
		In particular, if $\alpha$ is positive literal, that is an atom, then $\alpha \in \tp[\sigext]<1>[\Bmf]{a}$.
		Furthermore, the above equivalence holds for all type $t$ on $\sigext$ from $\mn{gtp}(\structure{A})$, which guarantees that no other atom only involving $a$ occurs in $\tp[\sigext]<1>[\Bmf]{a}$, thus  $\tp[\sigext]<1>[\Amf]{a} = \tp[\sigext]<1>[\Bmf]{a}$.
	\item 
		{\bf($\subseteq$).}
		Let $t \in \tp[\sigext]<1>[\Amf]{\domain{A} \setminus \Delta}$.
		By definition, there exists $e \in \domain{A} \setminus \Delta$ with $\tp[\sigext]<1>[\Amf]{e} = t$.
		Since $e \in \domain{A} \setminus \Delta$, there must exist some $\bar a \in \mn{mgt}(\structure{A})$ such that $a_i = e$ for some $i$.
		Let $s = \tp[\sigext][\Amf]{\bar a}$.
		We now prove that the element $c^1_{s, i}$, belonging to $\domain{B} \setminus \Delta$, has unary type $\tp[\sigext]<1>[\Bmf]{c^1_{s, i}} = t$.
		Notice $s$ fully determines the unary type of its variable $x_i$ variable, that is $t$.
		Thus, since $(c^1_{s, 1}, \dots, c^1_{s, \sizeof{\bar a}}) \in \Kmc^0(s)$ and from the definition of $\cdot^\structure{B}$, every atom of $t$ is in $\tp[\sigext]<1>[\Bmf]{c^1_{s, i}}$.
		Lemma~\ref{lem:typelem} guarantees there are no other atom in $\tp[\sigext]<1>[\Bmf]{c^1_{s, i}}$, thus $\tp[\sigext]<1>[\Bmf]{c^1_{s, i}} = t$.
		
		{\bf($\supseteq$).}
		Let $t \in \tp[\sigext]<1>[\Bmf]{\domain{B} \setminus \Delta}$.
		By definition, there exists $e \in \domain{B} \setminus \Delta$ with $\tp[\sigext]<1>[\Bmf]{e} = t$.
		By definition of $\domain{B}$, there must exist a type $s \in \mn{gtp}(\structure{A})$, an integer $m$ and a tuple $\bar a \in \Kmc^m(s)$ s.t.\ $a_i = e$ for some $i$.
		Since $s \in \mn{gtp}(\structure{A})$, there exists $\bar b \in \mn{mgt}(\structure{A})$ such that $\tp[\sigext][\Amf]{\bar b} = s$.
		In particular, $b_i \in \domain{A} \setminus \Delta$, and it is easily verified that $\tp[\sigext]<1>[\Amf]{b_i} = t$.

                	\item 
		From the very definition of $B$, resp.\ of $\Bmf$, we have $\Delta \subseteq \domain{B}$, resp.\ $c^\Bmf = c^\Amf$ for all $c \in \const(\phi)$, as desired.
		

\end{enumerate}
\end{proof}

\section{Proofs for Section~\ref{sec-circ-query}}

We start with explaining why it is not straightfoward to prove finite
controllability of circumscribed (U)CQ-querying in GF using the Rosati
cover.  The whole point of using the Rosati cover in \cite{BGO} is to
start from a model \Amf that satisfies a GF sentence $\varphi$ while
not satisfying a Boolean CQ $q$, and to convert it into a finite model
\Bmf with the same properties. This only works, however, under the
assumption that the original model \Amf does not satisfy any
`treeifications' of $q$, that is, there is no homomorphism $h$ from
$q$ to \Amf such that the homomorphic image of $q$ under $h$ is
acyclic. In the case without circumscription as in \cite{BGO}, this
assumption is for free since we can simply take $\Amf$ to be a model
of the GF sentence $\varphi \wedge \neg \chi_q$ where $\chi_q$ is the
disjunction of all treeifications of $q$. This works, because,
trivially, a model of $\varphi \wedge \chi_q$ is also a model
of~$\varphi$.  With circumscription, however, we would need the same
statement for \emph{minimal} models and it is easy to find examples
which falsify this.

\subsection{Proof of Lemma~\ref{lem-lemma5}}

\corelemma*

\begin{proof}
	Assume to the contrary of what is to  be shown that \Bmf is not a $\CP$-minimal model of
	$\phi$.  As \Bmf is a model of $\phi$ by
	assumption, there is thus a model $\Bmf'$ of $\phi$ with
	$\Bmf' <_\CP \Bmf$ (and thus $B'=B$).  To derive a
	contradiction, we construct a model $\Amf'$ of $\phi$ with
	$\Amf' <_\CP \Amf$.
	
	
	Applying Proposition~\ref{prop:smallmodelsgf} to $\Bmf'$ and $\Sigma$, we obtain a model $\Bmf''$ of $\phi$ such that
	the following conditions are satisfied:
	\begin{enumerate}
		\item[(a)] $|B''| \leq \towerof{4^{\sizeof{\sig} + 4}}{\sizeof{\phi}}$;
		
		\item[(b)] $\tp[\sig]<1>{{\Bmf''}} = \tp[\sig]<1>{\Bmf'}$;
		
		\item[(c)] 
		$\#_{\Bmf''}(t) \leq \#_{\Bmf'}(t)$ for every 1-type $t$ on $\sig$;
		
		\item[(d)] 
		$c^{\Bmf''} = c^{\Bmf'}$ for all constants $c$ in $\phi$.
	\end{enumerate}
	%
	
	To assemble the desired model $\Amf'$, we first determine the
	intended 1-type of each element $a$ of $\Amf'$.
	To this end, we define a surjection from $A$ to $B''$ which is obtained as the union of several mappings exposed as follows.
	
	For each $t \in \mn{tp}^1_\sig(\Amf)$ s.t.\ $\#_\Amf(t) \leq \towerof{4^{\sizeof{\sig} + 4}}{\sizeof{\phi}}$, we set:
	\begin{align*}
	D_t & := \{ t \in A \mid \mn{tp}^1_{\Amf, \sig}(a) = t \}
	\\
	S_t & := \{ b \in \Bmf'' \mid \exists b' \in B', \mn{tp}^1_{\Bmf, \sig}(b') = t  
	\\ & \hspace*{2cm}\wedge \mn{tp}^1_{\Bmf', \sig}(b') = \mn{tp}^1_{\Bmf'', \sig}(b) \}
	\end{align*}
	Note that by definition $\sizeof{D_t} = \#_\Amf(t) > \towerof{4^{\sizeof{\sig} + 4}}{\sizeof{\phi}}$ while $\sizeof{S_t} \leq \sizeof{B''} \leq \towerof{4^{\sizeof{\sig} + 4}}{\sizeof{\phi}}$ from Condition~(a) in the definition of $\Bmf''$.
	We can thus chose $\pi_t : D_t \rightarrow S_t$ a surjective mapping.
	
	We now form the union $\pi_0$ of all mappings $\pi_t$ s.t.\ $\#_\Amf(t) \leq \towerof{4^{\sizeof{\sig} + 4}}{\sizeof{\phi}}$.
	Note that the domain of $\pi_0$ is $A \setminus \coreofsigma{A}$, and that its range is exactly the elements from $B''$ whose type in $\Bmf''$ is realized in $\Bmf'$ at least once outside of $\coreofsigma{A}$.
	Thus, the remaining elements that need to be reached to cover all of $B''$ are those whose type in $\Bmf''$ is realized in $\Bmf'$ exclusively in $\coreofsigma{A}$.
	Let us denote the set of those latter types $S$.
	For each $t$ in $S$, it follows from Point~(c) in the definition of $\Bmf''$, that we can pick a surjection $\sigma_t : \{ b \in B' \mid \mn{tp}^1_{\Bmf', \sig}(b) = t \} \rightarrow \{ b \in B'' \mid \mn{tp}^1_{\Bmf'', \sig}(b) = t \}$.
	We complete $\pi_0$ into $\pi_1$ by adding to $\pi_0$ the union of all mappings $\sigma_t$ for $t \in S$.
	Note that the range of $\pi_1$ is already the whole $B''$, but that it's domain is still missing those elements of $\coreofsigma{A}$ whose type in $\Bmf'$ is not among $S$.
	To complete $\pi_1$ we simply pick, for each such $a \in \coreofsigma{A}$ s.t.\ $\mn{tp}^1_{\Bmf', \sig}(a) \notin S$ an element $b \in B''$ s.t.\ $\mn{tp}^1_{\Bmf', \sig}(a) = \mn{tp}^1_{\Bmf'', \sig}(b)$.
	We denote $\pi$ the resulting surjection from $A$ to $B''$.
	
	Since $\pi$ is surjective, we can pick a partial inverse function: $\rho : B'' \rightarrow A$ being s.t.\ $\rho \circ \pi = \identity$.

	We can now define the restriction of $\Amf'$ to $\mn{ran}(\rho)$:
	\begin{itemize}
		
		\item 
		$
		R^{\Amf'} = \{ \rho(\bar a) \mid \bar a \in R^{\Bmf''} \}
		$ for every relation symbol $R$;

		\item
		$c^{\Amf'} = \rho(c^{\Bmf'})$ for every constant $c$.
		
	\end{itemize}
	As a consequence, the restriction of $\Amf'$ to $\mn{ran}(\rho)$ is simply an
	isomorphic copy of $\Bmf''$. 
	By definition of $\pi$ and due to Points~(b) in the definition of $\Bmf''$ and Condition~2 on $\Bmf$, it
	additionally satisfies the condition that for all $a \in \mn{ran}(\rho)$,
	\begin{itemize}
		
		\item[($*$)] there is a $b \in B$
		such that $\mn{tp}^1_{\Amf', \sig}(a)=\mn{tp}^1_{\Bmf', \sig}(b)$ and
		$\mn{tp}^1_{\Amf, \sig}(a) = \mn{tp}^1_{\Bmf, \sig}(b)$
		
	\end{itemize}
	We want to extend
	the definition of $\Amf'$ to the elements
	$a \in A \setminus \mn{ran}(\rho)$ so that Condition~($*$) is
	satisfied. Informally, we do this by `cloning' already existing
	elements. It is then easy to use Condition~($*$) to show that
	$\Amf' <_\CP \Amf$ as desired, details are left to the reader.
	
	\smallskip
	
	We can do the cloning for any $a \in A \setminus \mn{ran}(\rho)$ in
	isolation.  
	We want to assign to $a$ the type $\mn{tp}^1_{\Bmf'', \sig}(\pi(a))$ by making it a clone of the element $a' := \rho(\pi(a))$ that already satisfies this property and is already present in $\Amf'$.
	This is done by setting, for every relation symbol $R$,
	$$
	R^{\Amf'} = R^{\Amf'} \cup \{ \bar a[a/a'] \mid \bar a \in R^{\Bmf''} \}
	$$
	where $\bar a[a/a']$ denotes the tuple obtained from $\bar a$ by
	replacing every occurrence of $a'$ with $a$. This finishes the
	construction
	of the structure $\Amf'$.
	
	Using the fact that $\Bmf''$ is a model of $\phi$, it is now
	straightforward to show that, as desired, also $\Amf'$ is a model
	of~$\phi$. 
\end{proof}

\subsection{Proof of Lemma~\ref{lemma-small-reference-model}}

\smallreferencemodel*

\begin{proof}
	We apply Proposition~\ref{prop:smallmodelsgf} on $\Amf$ and $\sig$ to obtain a model $\Amf'$ whose size is at most $\towerof{4^{\sizeof{\sig} + 4}}{\sizeof{\phi}}$.
	By Point~2 of Proposition~3, for each type $t \in \mn{tp}^1_{\Amf, \sig}(A)$, we can find an element $a_t \in A'$ whose type is $t$.
	We now extend $\Amf'$ into a model $\Bmf$ by cloning those elements $a_t$ until we have:
	\begin{itemize}
		\item $\#_\Bmf(t) = \#_\Amf(t)$ if $t \in \mn{tp}^1_{\Amf, \sig}(\Delta)$;
		\item $\#_\Bmf(t) = 1 + \towerof{4^{\sizeof{\sig} + 4}}{\sizeof{\phi}}$ if $t \in \mn{tp}^1_{\Amf, \sig}(A \setminus \Delta)$.
	\end{itemize}
	For types $t \in \mn{tp}^1_{\Amf, \sig}(\Delta)$, this is well-defined as we know that $\#_{\Amf'}(t) \leq \#_\Amf(t)$ from Point~3 of Proposition~\ref{prop:smallmodelsgf}.
	For types $t \in \mn{tp}^1_{\Amf, \sig}(A \setminus \Delta)$, this is also well-defined as we know that $\#_{\Amf'}(t) \leq \sizeof{\Amf'} \leq \towerof{4^{\sizeof{\sig} + 4}}{\sizeof{\phi}}$.
	It is readily verified that $\Bmf$ satisfies all the desired properties.
	In particular, Lemma~\ref{lem-lemma5} using $\Amf$ as the reference model applies and guarantees the $\CP$-minimality of $\Bmf$.
	Regarding the size of $\Bmf$, it is exactly $\sizeof{\Delta} + M (1 + \towerof{4^{\sizeof{\sig} + 4}}{\sizeof{\phi}})$ where $M$ is the number of types in $\mn{tp}^1_{\Amf, \sig}(A \setminus \Delta)$.
	Using $\sizeof{\Delta} \leq (2^\sizeof{\sig} - M) \towerof{4^{\sizeof{\sig} + 4}}{\sizeof{\phi}}$ and $M \leq 2^\sizeof{\sig}$, we obtained the claimed bound.
\end{proof}

\subsection{Proof of Lemma~\ref{lemma-mosaic}}

\lemmamosaic*

\renewcommand{\countermodel}{{\Amf}}
\newcommand{\patchoice}[3][\pattern]{\mathsf{ch}^{#2}_{#1, #3}}
\newcommand{\patref}{\mathsf{ref}}
\newcommand{\patrefof}[1]{\patref(#1)}
\newcommand{\targetof}[1]{\target(#1)}
\newcommand{\relevantpat}{\mathsf{MTree}}

To prove the ``$\Leftarrow$'' direction of Lemma~\ref{lemma-mosaic}, assume we are given 
a good mosaic $\candidatepattern$.
For each good mosaic $\pattern$, each $1 \leq i \leq n_\exists$, if $\bar a \in R_i^{\Bmf_{\pattern}}$, then we choose a good mosaic $\pattern '$ such that:
\begin{enumerate}
	\item $\mn{tp}_{\Bmf_M, \sig}(\bar a)=\mn{tp}_{\Bmf_{M'}, \sig}(\bar a)$;
	
	\item $\Bmf_{M'} \models \exists \bar y\, (\gamma_i \wedge
	\psi_i)[\bar a]$;
	
	\item if $(p,\widehat p,h') \in S_{M'}$, then $(p,\widehat p,h) \in S_{M}$ where
	$h$ is the restriction of $h'$ to range $A_0
	\cup \bar a$.
\end{enumerate}
We use $\patchoice{i}{\bar a}$ to denote this chosen $M'$.
Then, starting from $\candidatepattern$, we build a forest-shaped set of words which witnesses the acceptance of $\candidatepattern$.

\begin{definition}
	The mosaic forest $\relevantpat$ is the smallest set of words such that:
	\begin{itemize}
		\item $(\candidatepattern, \emptyset, \emptyset) \in \relevantpat$;
		\item If $w \in \relevantpat$ ends with $(\pattern, x, y)$ with $M$ a mosaic, and $\bar a \in R^{\Bmf_{\pattern}}$ for some $1 \leq i \leq n_\exists$ with $\beta_i = R(\bar z)$, then $w \cdot (\patchoice{i}{\bar a}, i, \bar a) \in \relevantpat$.
	\end{itemize}
\end{definition}



It remains to `glue' together 
the interpretations $\interpof$ according to the structure of $\relevantpat$.
Since a mosaic $\pattern$ may occur more than once, 
we create a copy of $\interpof$ for each 
node in $\relevantpat$ of the form $w \cdot (\pattern, x, y)$.
We do not duplicate elements from $A_0$ as they precisely are those we want to reuse.
Hence only elements from $U^+$ may be duplicated.
We also take into consideration the overlap between successive mosaics in the tree and introduce a copy of an element only when necessary.
Formally, this is achieved by the 
following duplicating functions, defined inductively on $\relevantpat$.
For $(\candidatepattern, \emptyset, \emptyset)$, define:
\[
\begin{array}{r@{~}c@{~}l}
	\caster[(\candidatepattern, \emptyset, \emptyset)] : B_{M_0} & \rightarrow & A_0 \cup \{ x^{(\candidatepattern, \emptyset, \emptyset)} \mid x \in B_{M_0} \}
	\\
	b & \mapsto &
	\left\{
	\begin{array}{ll}
		b & \text{if } b \in A_0
		\\
		b^{(\candidatepattern, \emptyset, \emptyset)} & \text{otherwise}
	\end{array}
	\right.
\end{array}
\]
If $w \cdot (M, i, \bar a) \in \relevantpat$ with $w$ not empty, define:
\[
\begin{array}{r@{~}c@{~}l}
	\caster[w \cdot (\pattern, i,  \bar a)] : \domain{\interpof} & \rightarrow & A_0 \cup \{ u^m \mid u \in U^+, m \in \relevantpat \}
	\\
	b & \mapsto &
	\left\{
	\begin{array}{ll}
		b & \text{if } b \in A_0
		\\
		\caster[w \cdot (\pattern, i,  \bar a)](b) & \text{if } b \in \bar a \setminus A_0
		\\
		b^{w \cdot (\pattern, i, \bar a)} & \text{otherwise}
	\end{array}
	\right.
\end{array}
\]
The desired model $\countermodel $ 
can then be defined as:
\[
\countermodel := \bigcup_{\substack{ w \cdot (\pattern, i, \bar a) \in \relevantpat \\ w \text{ possibly empty}}} \casterof[w \cdot (\pattern, i, \bar a)]{\interpof},
\]
that is the domain (resp.\ the interpretation of each predicate) of $\countermodel$ is the union across all $w \cdot (\pattern, i, \bar a) \in \relevantpat$ of the image by $\caster[w \cdot (\pattern, i, \bar a)]$ of the domain (resp.\ the interpretation of each predicate) of $\interpof$.

By definition, each $\caster[w \cdot (\pattern, i, \bar a)]$ is a homomorphism from $\interpof$ to $\countermodel$.
It is also easily verified that this homomorphism is injective.
Due to Point~1 in the definition of good mosaics, if the range of two duplicating functions overlap, then the common element agree on the interpretation of all predicates. 
More formally, the following lemma is an immediate consequence of the definition of functions $\caster[w \cdot (\pattern, i, \bar a)]$ and Point~1 in the definition of good mosaics.

\begin{lemma}
	\label{lemma:concepts-in-patterns}
	\label{lemma:predicate-transfer}
	For all $w \cdot (\pattern, i, \bar a) \in \relevantpat$ and all $\bar b \in B_M^{\sizeof{\bar b}}$, we have $\mn{tp}_{\countermodel, \sig}(\casterof[w \cdot (\pattern, i, \bar a)]{\bar b}) = \mn{tp}_{\interpof, \sig}({\bar b})$.%
\end{lemma}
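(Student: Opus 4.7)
Since $\sig$ is a unary signature, the type $\mn{tp}_{\cdot, \sig}(\bar b)$ of a tuple is entirely determined by (i) the unary atoms $A(x_i)$ with $A \in \sig$, (ii) the equality atoms $x_i = x_j$ between tuple positions, and (iii) the equality atoms $x_i = c$ with $c$ a constant of $\phi$. I will check that the casting map $\caster[w']$, with $w' = w \cdot (M, i, \bar a)$, preserves each of these three kinds of atoms in both directions.

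For the unary atoms, I will show that for every $b \in B_M$ and $A \in \sig$, one has $A^{\Bmf_M}(b)$ iff $A^{\countermodel}(\caster[w'] b)$. The ``$\Rightarrow$'' direction is immediate from the definition of $\countermodel$ as the union of the images $\casterof[w'']{\Bmf_{M''}}$. For ``$\Leftarrow$'', suppose $A^{\countermodel}(\caster[w'] b)$ is witnessed by another node $w''\cdot(M'',j',\bar a') \in \relevantpat$ with some $b'' \in B_{M''}$ such that $\caster[w''] b'' = \caster[w'] b$ and $A^{\Bmf_{M''}}(b'')$. By inspecting the three cases in the definition of $\caster[w']$, the element $\caster[w'] b$ can only coincide with an element produced by a different caster when it lies in $A_0$ or in the overlap $\bar a \setminus A_0$. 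In the $A_0$ subcase, both mosaics extend $\Amf_0$ (Condition~(2) of the mosaic definition), so they interpret $A$ identically on $b$. In the $\bar a$ subcase, Point~1 in the definition of good mosaics, $\mn{tp}_{\Bmf_M,\sig}(\bar a)=\mn{tp}_{\Bmf_{\patchoice{i}{\bar a}},\sig}(\bar a)$, propagates the unary information backwards along the forest $\relevantpat$, which by a straightforward induction on the length of $w'$ yields $A^{\Bmf_M}(b)$.

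For the equality atoms $x_i = x_j$, I need that $\caster[w']$ is injective on $B_M$. This follows by a short case analysis on its definition: elements of $A_0$ are the identity and distinct from every fresh name $b^{w'}$ (which lives in $U^+ \times \relevantpat$); elements of $\bar a \setminus A_0$ are mapped via $\caster[w]$, which is injective by induction on $\relevantpat$; the remaining elements receive the pairwise distinct fresh tags $b^{w'}$. Combining these three classes, no collisions can occur. Hence $b_i = b_j$ iff $\caster[w'] b_i = \caster[w'] b_j$.

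For the equalities with constants, observe that every constant $c$ of $\phi$ is interpreted inside $A_0$ in every mosaic (by Condition~(2) again and the fact that $\Amf_0$ interprets all constants of $\phi$), and $\caster[w']$ fixes $A_0$ pointwise. Thus $b = c^{\Bmf_M}$ iff $\caster[w'] b = c^{\Bmf_M} = c^{\countermodel}$.

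The only genuine difficulty is the ``$\Leftarrow$'' half of the unary-atom step, because a priori a spurious $A$-assertion on $\caster[w'] b$ could be inherited from a completely unrelated part of $\relevantpat$. The point is that the fresh-name scheme globally separates elements that are not in $A_0 \cup (\bar a \setminus A_0)$, so the only potential interactions are with the immediate parent in $\relevantpat$, and those are exactly the interactions that Point~1 of the good-mosaic condition was designed to control. Once this observation is made, the induction on $\relevantpat$ goes through cleanly and the lemma follows.
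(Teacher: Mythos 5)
Your decomposition of the $\sig$-type into unary atoms, equalities between positions, and equalities with constants is the right one (since $\sig$ is a unary signature), and you correctly identify the two agreement mechanisms the paper points to: Condition~2 of the mosaic definition, which makes all mosaic structures agree on $A_0$, and Point~1 of good mosaics, which makes parent and child agree on the $\sig$-type of the shared tuple $\bar a$. The paper treats the lemma as an immediate consequence of exactly these facts and the definition of the duplicating functions, so your approach is the same; your handling of the equality atoms via injectivity of the caster, and of constants via the fact that they sit in $A_0$ (which every caster fixes pointwise), is also fine.

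The gap is in the collision analysis for the ``$\Leftarrow$'' half of the unary case. You claim that the fresh-name scheme globally separates elements outside $A_0 \cup (\bar a \setminus A_0)$, so that the only interactions are with the immediate parent. That is not so: an element freshly named at a node $u'$ receives the tag $u'$, but that tagged element is also in the range of $\caster[u'']$ for every descendant $u''$ of $u'$ through which it is propagated via the successive $\bar a$-tags, and even for nodes that are neither ancestors nor descendants of $u'$ if they share a common ancestor at which the element was born. In particular, an induction on the length of $w'$ cannot close this case, because the conflicting witness $u''$ may be a \emph{longer} word. The argument has to be reorganized around a chain invariant: whenever $\casterof[u]{b} = \casterof[u'']{b''}$ and the common image is not in $A_0$, one has $b = b''$, there is a unique birthplace node $v$ that is an ancestor of both $u$ and $u''$, and $b$ occurs in the $\bar a$-tag of every node on the two downward paths from $v$ to $u$ and to $u''$; applying Point~1 of good mosaics stepwise along each of these paths shows that $b$ realizes the same $1$-type on $\sig$ in the mosaic structures at $u$, $v$, and $u''$. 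Together with the $A_0$ case and your injectivity argument this yields the lemma, but only once the proof is organized around the birthplace rather than the parent-only picture you describe.
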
%

Note that, in the above, exact values of $i$ and $\bar a$ do not affect the statement.
Henceforth, we often use $w \cdot (M, \_, \_)$ to refer to an element of $\relevantpat$ in which values of the second and third components of its last triple are not important.
We are ready to show that 
$\countermodel$ is a model of~$\phi$.

\begin{lemma}
	\label{lemma:countermodel-is-model}
	$\countermodel$ is a model of $\phi$.
\end{lemma}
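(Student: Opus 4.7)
The plan is to verify each conjunct of $\phi$ separately by lifting tuples from $\countermodel$ back to a single mosaic interpretation $\Bmf_M$, invoking modelhood locally, and then pushing the conclusion forward via Lemma~\ref{lemma:predicate-transfer}. The crucial underlying fact that I will use throughout is that, by construction, $R^{\countermodel} = \bigcup \caster[u](R^{\Bmf_{M_u}})$ where $u$ ranges over the nodes of $\relevantpat$, so any relational atom holding in $\countermodel$ admits a preimage in some single mosaic, and this preimage is guarded there.

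For a universal conjunct $\forall \bar x \, (\alpha_i \rightarrow \varphi_i)$, suppose $\countermodel \models \alpha_i[\bar c]$. If $\alpha_i$ is an equality atom, modelhood follows immediately from Lemma~\ref{lemma:predicate-transfer} applied to any tuple witnessing $\bar c$. Otherwise $\alpha_i$ is relational and $\bar c$ is guarded; by the observation above there is a node $u = w \cdot (M, \_, \_) \in \relevantpat$ and a tuple $\bar b$ in $\Bmf_M$ with $\caster[u](\bar b) = \bar c$ and $\Bmf_M \models \alpha_i[\bar b]$. Condition~4 in the definition of mosaic then gives $\Bmf_M \models \varphi_i[\bar b]$, and since $\caster[u]$ is an injective homomorphism that preserves types by Lemma~\ref{lemma:predicate-transfer}, this pushes forward to $\countermodel \models \varphi_i[\bar c]$.

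For an existential conjunct $\forall \bar x \, (\beta_i \rightarrow \exists \bar y\, (\gamma_i \wedge \psi_i))$ with $\beta_i = R(\bar z)$, suppose $\countermodel \models \beta_i[\bar c]$. Again lift $\bar c$ to some $\bar b \in R^{\Bmf_M}$ at a node $u \in \relevantpat$. By definition of the mosaic forest, the extended word $u' := u \cdot (\patchoice{i}{\bar b}, i, \bar b)$ is itself a node of $\relevantpat$. Condition~2 in the definition of good mosaic ensures that $\Bmf_{\patchoice{i}{\bar b}} \models \exists \bar y\, (\gamma_i \wedge \psi_i)[\bar b]$, so there are witnesses $\bar d$ in $\Bmf_{\patchoice{i}{\bar b}}$. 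The caster $\caster[u']$ was defined so as to agree with $\caster[u]$ on the shared guard $\bar b$, hence $\caster[u'](\bar b) = \bar c$; pushing $\bar d$ through $\caster[u']$ and applying Lemma~\ref{lemma:predicate-transfer} once more yields the required existential witness for $\bar c$ in $\countermodel$.

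The main delicate step is the uniform lifting of guarded tuples to a single mosaic, which is what makes the type-preservation of Lemma~\ref{lemma:predicate-transfer} directly applicable. This lifting is guaranteed by the construction of the caster functions, which duplicate every non-$A_0$ element fresh to a node with a node-specific superscript, so that each relational tuple in $\countermodel$ traces back to a single mosaic image. Combined with Condition~1 of the good mosaic definition, which enforces type-compatibility on the shared guard $\bar b$ between parent $M$ and child $\patchoice{i}{\bar b}$, this prevents the gluing process from introducing any spurious relational facts at overlaps. Once this is in place, the verification reduces to the two local arguments above.
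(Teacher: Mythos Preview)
Your proof is correct and follows essentially the same approach as the paper's: lift a guarded tuple from $\countermodel$ back to a single mosaic via the definition of $R^{\countermodel}$ as a union of caster images, invoke Point~4 (for universal conjuncts) or the goodness condition Point~2 at the child node $u\cdot(\patchoice{i}{\bar b},i,\bar b)$ (for existential conjuncts), and then push forward with Lemma~\ref{lemma:predicate-transfer}. Your extra paragraph spelling out why each relational tuple in $\countermodel$ lifts to a single mosaic, and your explicit handling of equality guards, are welcome clarifications but do not change the underlying argument.
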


\begin{proof}
	Let $1 \leq i \leq n_\forall$. 
	We prove $\countermodel$ satisfies $\forall \bar x \, (\alpha_i \rightarrow \varphi_i)$.
	Assume there is $\bar a \in R^\countermodel$, where $\alpha_i = R(\bar z)$.
	By definition of $\bar a$, there exists $w \cdot (\pattern, \_, \_)$ and $\bar b \in R^{\interpof}$ s.t.\ $\bar a = \casterof[w \cdot (\pattern, \_, \_)]{\bar b}$.
	From Point~4 of the definition of a mosaic, we get $\interpof \models \varphi_i[\bar b]$.
	By Lemma~\ref{lemma:predicate-transfer} we obtain $\countermodel \models \varphi_i[\casterof[w \cdot (\pattern, \_, \_)]{\bar b}]$ that is $\countermodel \models \varphi_i[\bar a]$ as desired.
	
	Let $1 \leq i \leq n_\exists$. 
	We prove $\countermodel$ satisfies $\forall \bar x \, (\beta_i \rightarrow
	\exists \bar y \, ( \gamma_i \wedge \psi_i))$.
	Assume there is $\bar a \in R^\countermodel$, where $\beta_i = R(\bar z)$.
	By definition of $\bar a$, there exists $w \cdot (\pattern, \_, \_)$ and $\bar b \in R^{\interpof}$ s.t.\ $\bar a = \casterof[w \cdot (\pattern, \_, \_)]{\bar b}$.
	By definition of the mosaic tree, we get $w \cdot (M, \_, \_) \cdot (\patchoice{i}{\bar b}, i, \bar b) \in \relevantpat$.
	We let $M' := \patchoice{i}{\bar b}$ for readability.
	By definition of $M'$ and from Point~2 of the definition of good mosaics, we obtain $\Bmf_{M'} \models \exists \bar y(\gamma_i \wedge \psi_i)[\bar b]$.
	Thus there exists an extension $\bar d \in S^{\Bmf_{M'}}$ of $\bar b$ such that $\Bmf_{M'} \models \psi_i[\bar d]$, where $\gamma_i = S(\bar z')$.
	By Lemma~\ref{lemma:predicate-transfer}, we obtain $\countermodel \models \psi_i[\casterof[w \cdot (M, \_, \_) \cdot (M', i, \bar b)]{\bar d}]$.
	By definition of $\caster[w \cdot (M, \_, \_) \cdot (M', i, \bar b)]$, we have $\casterof[w \cdot (M, \_, \_) \cdot (M', i, \bar b)]{\bar b} = \bar a$.
	Thus the tuple $\casterof[w \cdot (M, \_, \_) \cdot (M', i, \bar b)]{\bar d}$ is an extension of $\bar a$ (that extends $\bar a$ as $\bar d$ extends $\bar b$) that satisfies $\gamma_i \wedge \psi_i$ in $\countermodel$ as desired.
\end{proof}

This proves $\Amf_0$ can indeed by extended into a model of $\phi$.
Notice that Condition~(a) is clearly satisfied due to Point~2 in the definition of mosaics.
The $\subseteq$ direction of Condition~(b) is clearly satisfied from Point~3 in the definition of mosaics joint with Lemma~\ref{lemma:concepts-in-patterns}.
Its $\supseteq$ direction of Condition~(b) is satisfied from the basic conditions on the pair $(\Amf_{0},\typescandidate)$, that is $\Amf_0$ already contains at least one witness for each type from $\typescandidate$.
It remains to verify Condition~(c), that is for all $p \in q$, there is no homomorphism from $p$ in $\countermodel$.

By contradiction, assume there exists a CQ $p \in q$ and a homomorphism $\match : p \rightarrow \countermodel$.
Therefore, for each atom $\alpha := R(\bar z)$ in $p$, it follows from the definition of $R^\countermodel$ that there exists a node $w_\alpha := w \cdot (M, \_, \_) \in \relevantpat$ and $\bar a \in R^{\Bmf_M}$ s.t.\ $\casterof[w_\alpha]{\bar a} = \match(\bar z)$.
We assume chosen such a node $w_\alpha$ for each $\alpha \in p$.
On $\relevantpat$, we consider the order given by the prefix relation, that is for all $w_1, w_2 \in \relevantpat$, we denote $w_1 \leq w_2$ iff $w_1$ is a prefix of $w_2$.
We define a set of words $W_p$ as containing all words from $\relevantpat$ that are prefixes of a word $w_\alpha$ for some $\alpha \in p$.
We let $p_0 := \{ \alpha \in p \mid \Amf_0 \models \match(\alpha) \}$ the subquery of $p$ that is mapped by $\match$ in $\Amf_0$ and $h_0 := \match|_{\mn{var}(p_0)}$ the corresponding mapping of variables.
For a word $w \in \relevantpat$, we set $p_w := p_0 \cup \{ \alpha \in p \setminus p_0 \mid w \leq w_\alpha \}$ and $h_w$ a mapping defined on every $v \in \mn{var}(p_u)$ s.t.\ $\match(v) \in \casterof[u]{B_M}$ by $h(v) = (\caster[u])^{-1}(\match(v))$.
Note that $h_w$ always extends $h_0$ since all mosaics agree on the interpretation of $\Amf_0$ (Point~2 in the definition of mosaics).
%
%

\begin{lemma}
	For all $w \cdot (M, \_, \_) \in W_p$, $(p, p_w, h_w) \in S_M$. 
\end{lemma}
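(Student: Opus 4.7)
The plan is to prove the lemma by reverse induction on the finite tree $W_p$, starting from its leaves and climbing toward the root.

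\emph{Base case.} If $w \cdot (M, \_, \_)$ is a leaf of $W_p$, then no strict extension of $w$ in $\relevantpat$ belongs to $W_p$, so every atom $\alpha \in p_w \setminus p_0$ must satisfy $w_\alpha = w \cdot (M, \_, \_)$. By definition of $w_\alpha$ this forces $\match(v) \in \casterof[w \cdot (M, \_, \_)]{B_M}$ for every $v \in \mn{var}(\alpha)$; combined with the fact that $\match$ sends $\mn{var}(p_0)$ into $A_0 \subseteq \casterof[w \cdot (M, \_, \_)]{B_M}$, this makes $h_w$ total on $\mn{var}(p_w)$. Lemma~\ref{lemma:predicate-transfer} then ensures that $h_w$ is a homomorphism from $p_w$ to $\Bmf_M$, and the first clause of the saturation condition on $S_M$ yields $(p, p_w, h_w) \in S_M$.

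\emph{Inductive step.} For an internal node $w \cdot (M, \_, \_)$ of $W_p$ with children $w_j := w \cdot (M, \_, \_) \cdot (M_j, i_j, \bar b_j)$ in $W_p$, I will decompose
\[
p_w \;=\; p^{\mathrm{dir}}_w \;\cup\; \bigcup_j p_{w_j}, \qquad p^{\mathrm{dir}}_w := p_0 \cup \{\alpha : w_\alpha = w \cdot (M, \_, \_)\},
\]
and handle each piece separately. The direct part yields $(p, p^{\mathrm{dir}}_w, h^{\mathrm{dir}}_w) \in S_M$ by exactly the argument of the base case. For each child, the induction hypothesis gives $(p, p_{w_j}, h_{w_j}) \in S_{M_j}$, and Point~3 of the goodness of $M$ delivers $(p, p_{w_j}, h'_j) \in S_M$, where $h'_j$ is the restriction of $h_{w_j}$ to range $A_0 \cup \bar b_j$. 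Iterating the second saturation clause of $S_M$ then merges the direct triple with all the children triples to obtain $(p, p_w, h_w) \in S_M$.

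The main obstacle is justifying each invocation of the saturation clause: I must verify that $h^{\mathrm{dir}}_w$ and the $h'_j$'s agree pairwise on their overlapping variables and that their union equals $h_w$. Both checks reduce to inspecting the definition of $\caster$, which coincides with $\caster[w]$ on $A_0 \cup \bar b_j$; so $h'_j$ is precisely the restriction of $h_w$ to variables $v$ with $\match(v) \in \casterof[w]{A_0 \cup \bar b_j}$. Coverage of $\mn{dom}(h_w)$ then reduces to a case analysis on $\match(v)$: values in $A_0$ are handled by $h^{\mathrm{dir}}_w$, values of the form $b^w$ with $b \in \bar b_j$ for some $j$ are captured by the corresponding $h'_j$, and any remaining value $b^w$ with $b \notin \bar b_j$ for every $j$ can, by tracing how $\caster[w_\alpha]$ could produce the name $b^w$, only arise for variables belonging to some atom of $p^{\mathrm{dir}}_w$, hence is again covered by $h^{\mathrm{dir}}_w$.
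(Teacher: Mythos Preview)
Your proposal is correct and follows the same reverse induction on $W_p$ as the paper: the base case uses the first saturation clause, and the inductive step combines the ``direct'' triple with the children's triples transported to $S_M$ via Point~3 of goodness, then merges them using the second saturation clause.

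One small correction to your final case analysis: whether a variable $v \in \mn{dom}(h_w)$ is covered by $h^{\mathrm{dir}}_w$ or by some $h'_j$ is governed by which atom of $p_w$ contains $v$, not solely by the shape of $\match(v)$. For instance, $\match(v) \in A_0$ does not force $v \in \mn{var}(p^{\mathrm{dir}}_w)$; likewise, $h_w(v) \in \bar b_j$ does not by itself put $v$ in $\mn{dom}(h'_j)$, since that also requires $v \in \mn{var}(p_{w_j})$. The paper (and your own third case) handles this uniformly: pick any atom $\alpha \in p_w$ containing $v$; if $\alpha \in p^{\mathrm{dir}}_w$ then $h^{\mathrm{dir}}_w$ covers $v$, and if $\alpha \in p_{w_j} \setminus p_0$ then tracing how $\caster[w_\alpha]$ produces $\match(v)$ forces the preimage through the overlap $\bar b_j$, so $h'_j$ covers $v$. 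This atom-based version also cleanly delivers the compatibility condition required by the saturation clause (both maps defined on every shared variable), which your value-based split does not quite give.
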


In particular, since $(\candidatepattern, \emptyset, \emptyset) \in W_p$ and $p_{(M_0, \emptyset, \emptyset)} = p$, this guarantees $(p, p, h_{(M_0, \emptyset, \emptyset)}) \in S_{(M_0, \emptyset, \emptyset)}$, which contradicts $M_0$ being a mosaic (it contains a complete match triple!) and thus concludes the proof of Condition~(c).

We now prove the above lemma.

\begin{proof}
	We proceed by induction on elements of $W_p$, starting from its maximal elements w.r.t.\ $\leq$.
	Notice there are at most $\sizeof{p}$ such maximal elements.
	
	{\bf Base case.}
	Assume $u = w \cdot (\pattern, \_, \_) \in W_p$ is maximal for $\leq$.
	Therefore, $p_u = p_0 \cup \{ \alpha \in p \setminus p_0 \mid w_\alpha = u \}$.
	By definition of each $w_\alpha$, $h_u$ defines a homomorphism from $p_u$ to $\Bmf_M$. 
	Since $S_M$ is saturated, it follows from the first saturation condition that $(p, p_u, h_u) \in S_M$.
	
	
	{\bf Induction case.}
	Consider $u = w \cdot (\pattern, \_, \_) \in W_p$ and assume the property holds for all $u' \in W_p$ with $u \leq u'$.
	We refine the subset $p_u \subseteq p$ of atoms by a smaller subset $p_{=u} := p_0 \cup \{ \alpha \in p_u \setminus p_0 \mid w_\alpha = u \}$.
	As in the base case, the definition of words $w_\alpha$ guarantees there is a homomorphism from $p_{=u}$ to $\Bmf_M$ given by $h_{=u} := \caster[u]^{-1} \circ \match|_{D_u}$ where $D_u = \match^{-1}(\casterof[u]{B_M})$.
	From $S_M$ being saturated, we obtain $(p, p_{=u}, h_{=u}) \in S_M$.
	For the other atoms of $p_u$, that is $\alpha \in p_u \setminus p_{=u}$, we have $u < w_\alpha$ thus there exists $u_\alpha = u \cdot (M_\alpha, i_\alpha, \bar a_\alpha) \in W_p$ with $u_\alpha \leq w_\alpha$.
	We denote $u_1, \dots, u_n$ the distinct elements from $\{ u_\alpha \mid \alpha \in p_u \setminus p_{=u} \}$ and if $u_k = u_\alpha$, we let $(M_k, i_k, \bar a_k) := (M_\alpha, i_\alpha, \bar a_\alpha)$. 
	Note that $p_u$ can now be partitioned as $p_{=u} \uplus \biguplus_{k = 1}^n (p_{u_k} \setminus p_0)$.
	For each $1 \leq k \leq n$, we apply the induction hypothesis to obtain $(p, p_{u_k}, h_{u_k}) \in S_{M_k}$.
	From $w \cdot (M, \_, \_) \cdot (M_k, i_k, \bar a_k) \in \relevantpat$ and Point~3 of in the definition of a good mosaic, we obtain $(p, p_{u_k}, g_{u_k}) \in S_{M}$ where $g_{u_k}$ is the restriction of $h_{u_k}$ to range $A_0 \cup \bar a_k$.
	Recall that $S_M$ is saturated, thus using the second condition from definition of saturation, we can derive $(p, p_u, h_{=u} \cup \bigcup_{k = 1}^n g_{u_k}) \in S_M$ if we prove that partial mappings $h_{=u}, g_{u_1}, \dots, g_{u_n}$ are pairwise compatible (in the sense specified in the definition of saturation).
	
	We first treat the case of $g_{u_1}$ and $g_{u_2}$ (any other combination of $g_{u_k}$ and $g_{u_\ell}$ with $k \neq \ell$ being treated similarly).
	Assume $x \in \mn{var}(p_{u_1}) \cap \mn{var}(p_{u_2})$, we have to prove that both $g_{u_1}$ and $g_{u_2}$ are defined on $x$ and that $g_{u_1}(x) = g_{u_2}(x)$.
	If $x \in \mn{var}(p_0)$, this is clear.
	Otherwise, $\match(x) = d^{w'}$ for some $d \in U^+$ and $w' \in \relevantpat$.
	By definition of $h_{u_1}$ and $h_{u_2}$, we get $d^{w'} \in \casterof[u_1]{B_{M_1}}$ and $d^{w'} \in \casterof[u_1]{B_{M_2}}$ respectively.
	Thus $w'$ must be a prefix of both $u_1$ and $u_2$, by definition of the functions $\caster[u_1]$ and $\caster[u_2]$.
	The longest common prefix of $u_1$ and $u_2$ being $u$, we obtain that $w'$ is a prefix of $u$.
	In particular, $d^{w'} \in \casterof[u]{B_M}$ and $h_{u_1}(x) \in \bar a_1$ and $h_{u_2}(x) \in \bar a_2$.
	Thus $g_{u_1}$ and $g_{u_2}$ are both defined on $x$ and equal to $\caster[u]^{-1}(\match(x))$, that is $h_u(x)$.
	
	The argument for the case of $h_{=u}$ and $g_{u_k}$ is similar, observing that if $x \in (\mn{var}(p_{=u}) \cap \mn{var}(p_{u_k})) \setminus \mn{var}(p_0)$, we have $h_{u_k} \in \bar a_k$ thus $g_{u_k}$ being defined on $x$ and equal to $\caster[u]^{-1}(\match(x))$, that is $h_u(x)$.
	
	We thus obtain $(p, p_u, h_{=u} \cup \bigcup_{k = 1}^n g_{u_k}) \in S_M$ and it was already verified within the above cases that $h_u = h_{=u} \cup \bigcup_{k = 1}^n g_{u_k}$, thus $(p, p_u, h_u) \in S_M$ as desired.

\end{proof}

This concludes the proof of  the `$\Leftarrow$' direction in Lemma~\ref{lemma-mosaic}.

\newcommand{\universe}{U}

We now turn to the `$\Rightarrow$' direction.
Assume that $\Amf_0$ can be extended to a model \Amf of $\phi$ that
satisfies Conditions~(a) to~(c).
We extract a set of good mosaics from $\Amf$.
To this end, we define the notion of paths in $\Amf$.

A \emph{path} in \Amf is a sequence
$\bar a_1,\dots,\bar a_n$ of maximally guarded tuples from
$\mn{mgt}(\Amf)$ that contain at least one element of
\mbox{$A \setminus \Delta$}.
%
For each path $p = \bar a_1, \dots, \bar a_n$ in $\Amf$, we now inductively define a mosaic $M_p$ as follows.

If $p = \bar a$ is of length $1$, we let $L_p := \{ i \mid a_i \in \bar a \setminus A_0 \}$ and pick for each $i \in L_p$ a fresh element $u_i \in \universe$.
We denote $\lambda_p : a_i \mapsto u_i$ the corresponding bijection between elements of $\bar a$ that are not in $A_0$ and the chosen fresh elements.
We extend $\lambda_p$ to $A_0 \cup \bar a$ by defining $\lambda_p$ to be the identity on elements from $A_0$.
We now define $M_p$ by $\Bmf_{M_p} = \lambda_p(\Amf|_{A_0 \cup \bar a})$ and $S_{M_p}$ as the set of all triples $(p, \widehat{p}, \lambda_p \circ h|_{A_0 \cup \bar a} )$ s.t.\ $p \in q$, $\widehat{p} \subseteq p$ and $h : \widehat{p} \rightarrow \Amf$ is a homomorphism.
Note that $M_p$ is a well-defined mosaic: regarding $\Bmf_{M_p}$, Point~1 is clear by construction, Points~2 and 3 follow respectively from Conditions~(a) and (b) being satisfied by $\Amf$, Point~4 from $\Amf$ being a model of $\phi$; regarding $S_{M_p}$, it is saturated by construction and does not contain a complete match triple from Condition~(c) being satisfied by $\Amf$.

If $p = \bar a_1, \dots, \bar a_{n}, \bar a_{n+1}$ has length $> 1$, we let $L_p := \{ i \mid a_i \in \bar a_{n+1} \setminus (A_0 \cup \bar a_n) \}$ and pick for each $i \in L_p$ a fresh element $u_i \in \universe$.
Note that this is always possible as $\universe$ has size twice the maximal arity of a predicate.
We denote $\lambda_p : a_i \mapsto u_i$ the corresponding bijection elements of $\bar a_{n+1}$ that are neither in $A_0$ nor in $\bar a_n$, and the chosen fresh elements.
We extend $\lambda_p$ to $A_0 \cup \bar a_{n+1}$ by defining $\lambda_p$ to be the identity on elements from $A_0$ and to be $\lambda_{\bar a_1, \dots, \bar a_n}$ on elements from $\bar a_n \cap \bar a_{n+1}$.
We now define $M_p$ by $\Bmf_{M_p} = \lambda_p(\Amf|_{A_0 \cup \bar a_{n+1}})$ and $S_{M_p}$ as the set of all triples $(q', \widehat{q'}, \lambda_p \circ h|_{A_0 \cup \bar a_{n+1}} )$ s.t.\ $q' \in q$, $\widehat{q'} \subseteq q'$ and $h : \widehat{q'} \rightarrow \Amf$ is a homomorphism.
As in the base case, it is easily verified that $M_p$ is a well-defined mosaic.

We now let $\Mmc := \{ M_p \mid p \textrm{ is a path in } \Amf \}$ and claim all mosaics in $\Mmc$ are good in $\Mmc$.
Let $M_p \in \Mmc$ for some path $p = \bar a_1, \dots, \bar a_n$ in $\Amf$.
Assume there exists $\bar a \in R^{\Bmf_{M_p}}$ where $R(\bar z) = \beta_i$ for some $1 \leq i \leq n_\exists$.
Since $\Amf$ is a model of $\phi$, there exists a tuple $\bar a' \in S^\Amf$ where $S(\bar z') = \gamma_i$ s.t.\ $\bar a'$ witnesses $\Amf \models \exists y (\gamma_i \wedge \psi_i)[\bar a]$.
Consequently, $p' = \bar a_1, \dots, \bar a_n, \bar a'$ is a path in $\Amf$.
We prove $M_{p'}$ is a mosaic satisfying Conditions~1 to 3 in the definition of good mosaics.
Condition~1 is clear by construction of $M_{p'}$, which is inductively defined from $M_p$ and thus their `overlap' is consistent as expected.
Condition~2 is also guaranteed from the very definition of $\bar a'$.
For Condition~3, recall that each match triple $(q', \widehat{q'}, h) \in S_{M_{p'}}$ is such that $\lambda_{p'}{^-1} \circ h$ is the restriction to $A_0 \cup \bar a'$ of a complete homomorphism from $\widehat{q'}$ to $\Amf$.
By considering instead the restriction of $\widehat{q'}$ to $A_0 \cup \bar a$ we obtain the desired match triple in $S_{M_p}$.

\section{Proofs for Section~\ref{sect:lowerbounds}}

To fix notation, we recall that an ATM $\Mmc$ is specified by a
$6$-tuple $\Mmc = (Q, \Sigma, \Gamma, \delta, q_0, g)$ where:
\begin{itemize}
\item $Q$ is the finite set of states;
  \item $\Sigma$ is the finite input alphabet;
	\item $\Gamma \supseteq \Sigma$ is the finite tape alphabet
          with a special blank symbol $\textvisiblespace \in \Gamma
          \setminus \Sigma$;
	\item $\delta : Q \times \Gamma \rightarrow (Q \times \Gamma \times \{ \triangleleft,  \triangleright \})^2$ is the transition function;
	\item $q_0 \in Q$ is the initial state;
	\item $g : Q \rightarrow \{ \universalstate, \existentialstate, \mn{accept}, \mn{reject} \}$ specifies the type of each state.
\end{itemize}
Note that without loss of generality, we consider TMs having the following properties:
\begin{itemize}
	\item for every universal or existential configuration, there exist exactly two applicable transitions;
	\item the machine directly accepts any configuration whose state $s$ is such that $g(s) = \mn{accept}$;
	\item the ATM never tries to go to the left of the initial position.
\end{itemize} 
We say that $\Mmc$ is \emph{polynomially space-bounded} if there
exists a polynomial $p$ such that on input $x$, $\Mmc$ visits only the
first $p(\sizeof{x})$ tape cells. \Mmc being
\emph{$\lol$-exponentially space-bounded}, for some $\lol \geq 1$, is
defined accordingly, allowing the ATM to visit
$\towerof{\lol}{p(\sizeof{x})}$ tape cells. We consider only ATMs that
terminate on every input. It is known that there is a fixed
polynomially space-bounded ATM
$\Mmc = (Q, \Sigma, \Gamma, \delta, q_0, g)$ whose word problem is
\Exp-hard \cite{chandra-alternation}.

\subsection{Proof of Theorem~\ref{theorem:exp-hardness}}

\begin{figure*}
\begin{align}
	\config_q(x,x_h) 
	& \rightarrow \nonconfig_{\triangleleft}(x,x_h),
   \nonconfig_{\triangleright}(x,x_h)
   \label{ruleexp:one}
	\\
	\nonconfig_{\triangleleft}(x,x_p),\ \nonstart(x_p) 
	& \rightarrow \exists y_p \, \nextsymb(y_p,x_p),
	\nonconfig_{q'}(x,y_p), \nonconfig_{\triangleleft}(x,y_p)
   \label{ruleexp:two}
	\\
	\nonconfig_{\triangleright}(x,x_p),\ \nonend(x_p) & \rightarrow
	\exists y_p \, \nextsymb(x_p,y_p), \nonconfig_{q'}(x,y_p),
                                                            \nonconfig_{\triangleright}(x,y_p).
   \label{ruleexp:three}
	\\
	\nextconfig_{i}(x,y,x_h) & \rightarrow 
                                   \copylefttape(x,y,x_h), \copyrighttape(x,y,x_h)
   \label{ruleexp:four}
  \\
	\copylefttape(x,y,x_p), \nonstart(x_p) & \rightarrow 
	\exists y_p \,
	\nextsymb(y_p,x_p), 
                                                 \copylefttape(x,y,y_p), \copytape(x,y,y_p)
                                                    \label{ruleexp:five}
	\\
	\copyrighttape(x,y,x_p), \nonend(x_p) &\rightarrow 
	\exists y_p \,
	\nextsymb(x_p,y_p), 
	\copyrighttape(x,y,y_p), \copytape(x,y,y_p)
                                                    \label{ruleexp:six}
	\\
	\copytape(x,y,x_p), \mn{tape}_s(x,x_p) &\rightarrow 
                                                 \mn{tape}_s(y,x_p)
                                                    \label{ruleexp:seven}
\end{align}
\caption{Additional rules used in the proof of Theorem~\ref{theorem:exp-hardness}, for all $i \in \{ 1, 2\}$.}
\label{figure:rules-exp-hardness}
\end{figure*}


We exhibit a set of existential rules $\Omc$ and a circumscription
pattern $\CP$ such that, given a word $e = e_1 \cdots e_n \in \Sigma^*$, we can
construct in polynomial time a database $D$ such that $\Mmc$ accepts
$e$ iff $\Circ(\Omc, D) \models \target(a)$, where $\target$ is a
unary predicate and $a$ a dedicated constant symbol.  To give a better
intuition of the involved mechanisms, we describe the
constructions of $\Omc$ and~$D$ together.  It can, however, be verified that
$\Omc$ is independent of~$e$.

We use all tape positions $i$ with $1 \leq i \leq p(n)$ as constants
in the database $D$. In addition, we use the constant $a$ that occurs
inside the query above. In $D$, we mark the constant symbols that
represent positions by a
$$
\pos(i)  \textrm{ for } 1 \leq i \leq p(n).
$$
To ensure that, in every minimal model, the positions are picked
from the intended constants, we require the predicate $\pos$ to be minimized.
%

We introduce binary predicates $\mn{config}_q$ for every $q \in Q$ and
$\mn{tape}_s$ for every $s \in \Gamma$. Intuitively,
$\mn{config}_q(x,x_h)$ says that $x$ is a configuration of \Mmc where
\Mmc is in state $q$ and the head is on position $x_h$ of the
tape. Likewise, $\mn{tape}_s(x,x_p)$ says that in configuration $x$,
the symbol $s$ is on position $x_p$ of the tape. For all
$q \in Q$ and $s \in \Gamma$, we require that
$$
\begin{array}{rcl}
  \config_q(x,x_h) &\rightarrow& \pos(x_h) \\[1mm]
\tape_s(x, x_p) &\rightarrow&\pos(x_p).
\end{array}
$$
The initial configuration is now encoded by
\begin{align*}
	\config_{q_0}(a,1) 
	\\
	\tape_{e_i}(a, i) & \textrm{ for } 1 \leq i \leq n
	\\
	\tape_{\textvisiblespace}(a, i) & \textrm{ for } n < i \leq p(n).
\end{align*}
To generate the other configurations of the ATM computation on input
$e$, we use existential quantifiers in rules. We need some
preparation. To start with, we add auxiliary facts on the
constants
$1,\dots,n$ that pertain to their order:
\begin{align*}
	\nonnextpos(i,j) & \textrm{ for } 1 \leq i,j \leq p(n) \text{
                           with } i \neq j \\
  	\nonstart(i) & \textrm{ for } 1 < i \leq p(n) \\
  	\nonend(i) & \textrm{ for } 1 \leq i < p(n).
\end{align*}
We will later also use a `positive' version \mn{next} of the
$\nonnextpos$ predicate. We want these predicates to be disjoint
in models that falsify $\target(a)$.
This is achieved by adding the rules
\begin{align*}
	\nextsymb(x, y), \nonnextsymb(x, y) & \rightarrow \error(x) \\
  \catcherror(x, y), \error(x) & \rightarrow \target(x).
\end{align*}
  and facts
\[
\catcherror(a, i)  \textrm{ for } 1 \leq i \leq p(n).
\]
This mechanism will be used in the following to detect also other
violations of the intended encoding. We also want $\nextsymb$ to be
restricted to positions:
$$
\nextsymb(x,y) \rightarrow \pos(x), \pos(y).
$$

We next make sure that the state, head position, and tape content
is unique for every configuration. Regarding the tape content, for
all distinct $s,s' \in \Gamma$ we put
$$
  \mn{tape}_s(x,x_p),\mn{tape}_{s'}(x,x_p) \rightarrow \mn{error}(x_p).
$$
To ensure uniqueness of the state and head position, for all $q,q' \in
Q$ we add Rules~(\ref{ruleexp:one}) to~(\ref{ruleexp:three}) shown in Figure~\ref{figure:rules-exp-hardness}.
%
Moreover, for distinct $q,q' \in Q$, we add
%
%
\begin{align*}
  \config_q(x,x_h),\config_{q'}(x,x_h) & \rightarrow \mn{error}(x_h)
  \\
  	\config_q(x,x_p),\nonconfig_{q}(x,x_p) & \rightarrow \mn{error}(x_p).
\end{align*}
We now generate the additional configurations.  For all $q \in Q$ with
$g(q) \notin \{ \mn{accept}, \mn{reject} \}$, all $s \in \Gamma$, 
if $\delta(q,s)=(q_1,s_1,m_1,q_2,s_2,m_2)$ then
put for all $i \in \{ 1, 2\}$,
\[
\begin{array}{l@{}c@{}r}
	\config_q(x,x_h), \mn{tape}_s(x,x_h) & \rightarrow
	& \exists y\, \exists y_h \, \nextconfig_{i}(x,y,x_h),
	\\ & & \config_{q_i}(y,y_h), \mn{tape}_{s_i}(y,x_h), 
	\\ & & \mn{MOV}
\end{array}
\]
%
where $\mn{MOV}$ is $\mn{next}(x_h,y_h)$ if $m_i= \triangleright$ and
$\mn{next}(y_h,x_h)$ if $m_i= \triangleleft$. The above rule sets up
correctly the state and head position of the new configuration, as
well as
the symbol on the tape at the position where the head was located
previously.
We still need to say that the other symbols didn't change. For
all $q \in Q$, $i \in \{1,2\}$, and $s \in \Gamma$, add Rules~(\ref{ruleexp:four}) to~(\ref{ruleexp:seven}) 
shown in Figure~\ref{figure:rules-exp-hardness}.

Note that the constant $a$ is the root of a tree of configurations
that are connected by the (projection to the first two components
of the) predicates $\nextconfig_{i}$, 
$i \in \{1,2\}$. We now
propagate acceptance information upwards in that tree. For all
states $q \in Q$ with $g(q)=\mn{accept}$, add
$$
	\config_q(x,x_h) \rightarrow \acc(x).
$$
Further add, for all $q \in Q$ and $i \in \{1,2\}$,
$$
\nextconfig_{i}(x,y,x_h), \acc(y) \rightarrow \mn{acc}_i(x).
$$
Now, for all $q \in Q$ with  $g(q)=\mn{univ}$, all $q' \in Q$ with
$g(q')=\mn{exist}$, and all $i \in \{1,2\}$, add
\begin{align*}
  \config_q(x,x_h), \acc_1(x), \acc_2(x) & \rightarrow \acc(x) \\
  \config_{q'}(x,x_h), \acc_i(x) & \rightarrow \acc(x).
\end{align*}
At the root of the configuration tree, which is $a$, acceptance
makes the query true:
\begin{align*}
	\acc(x) & \rightarrow \target(x).
\end{align*}
We use $\Omc$ to denote the obtained set of rules and $D$ for the obtained database.
The circumscription pattern $\CP$ has a minimized predicate $\pos$ and all other predicates are varying.
In particular, $\CP$ has no fixed predicates. It is now
straightforward to prove the following.
\begin{lemma}
$\Mmc$ accepts on input $e$ iff $\Circ(\Omc, D) \models \target(a)$. 
\end{lemma}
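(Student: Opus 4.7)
The plan is to establish the ``iff'' by proving each direction separately.

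For the forward direction ($\Mmc$ accepts $e$ implies $\Circ(\Omc, D) \models \target(a)$), I would argue by contradiction: suppose $\Amf$ is a $\CP$-minimal model of $(\Omc, D)$ with $\target(a)^\Amf$ false. Then $\error^\Amf = \emptyset$, for otherwise the facts $\catcherror(a, i) \in D$ combined with the rule $\catcherror(x, y), \error(x) \rightarrow \target(x)$ would yield $\target(a)^\Amf$. Using the facts $\pos(i) \in D$, the minimality of $\pos$, the rules forcing heads and tape positions into $\pos$, and the various error-detection rules (for $\nextsymb/\nonnextsymb$, for state/head uniqueness, and for tape-symbol uniqueness), I would then argue that $\pos^\Amf = \{1^\Amf, \dots, p(n)^\Amf\}$, that $\nextsymb^\Amf$ encodes the true successor relation on this set, and that each ``configuration element'' of $\Amf$ represents a single well-defined ATM configuration. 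A structural induction along the $\nextconfig_i$-tree rooted at $a$, using the acceptance-propagation rules for $\acc, \acc_1, \acc_2$, then shows $\acc(c)^\Amf$ holds whenever $c$ represents an accepting ATM configuration; since $\Mmc$ accepts $e$, this yields $\acc(a)^\Amf$ and hence $\target(a)^\Amf$, contradicting our assumption.

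For the backward direction (contrapositively, $\Mmc$ does not accept $e$ implies $\Circ(\Omc, D) \not\models \target(a)$), I would exhibit an explicit $\CP$-minimal countermodel $\Amf$. Its universe consists of $\{1, \dots, p(n)\}$ together with a fresh element $c_\gamma$ for each reachable configuration $\gamma$ of $\Mmc$ on $e$, identifying $a$ with $c_{\gamma_0}$. Set $\pos^\Amf := \{1, \dots, p(n)\}$, interpret $\nextsymb, \nonnextsymb, \nonstart, \nonend, \nonnextpos, \catcherror$ as dictated by $D$, and for each reachable $\gamma$ with state $q_\gamma$, head $h_\gamma$, and tape $t_\gamma$, populate $\config_{q_\gamma}(c_\gamma, h_\gamma)$, $\tape_s(c_\gamma, i)$ for $t_\gamma[i] = s$, and $\nextconfig_i(c_\gamma, c_{\gamma'}, h_\gamma)$ for each ATM successor $\gamma'$. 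Complete the propagation predicates $\nonconfig_{\triangleleft}, \nonconfig_{\triangleright}, \nonconfig_{q'}, \copylefttape, \copyrighttape, \copytape$ to the least extensions forced by Rules~(\ref{ruleexp:one})--(\ref{ruleexp:seven}), and let $\acc^\Amf$ contain exactly those $c_\gamma$ for which $\gamma$ accepts under the ATM semantics. Modelhood is a routine check, $\CP$-minimality follows because $\pos^\Amf$ already coincides with the constants asserted in $D$, and non-acceptance of $e$ gives $a \notin \acc^\Amf$, so $\target(a)^\Amf$ is false.

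The hard part will be justifying rigorously, in the forward direction, that a $\CP$-minimal model with no errors cannot contain stray position-witnesses outside $\{1^\Amf, \dots, p(n)^\Amf\}$: intuitively, any such witness $y$ generated by an existential rule can be ``merged'' with the unique existing position dictated by $\nextsymb$ and the $\nonnextpos$-facts in $D$, which yields a strictly smaller $\pos^\Amf$ and contradicts minimality; the subtle point is that this re-routing must not violate any other rule, which follows because the intended encoding is already forced by the error-detection machinery. Once this pinning of positions is established, the tracking of tape contents through the $\copytape$-chain of Rules~(\ref{ruleexp:four})--(\ref{ruleexp:seven}) and the acceptance propagation along $\nextconfig_i$ become straightforward.
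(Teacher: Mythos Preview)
Your overall plan matches the paper's: establish that $\pos^\Amf=\{1,\dots,p(n)\}$ in every $\CP$-minimal model, deduce that a minimal countermodel encodes a faithful computation tree on which acceptance propagates, and for the converse build the explicit intended model. The backward direction and the acceptance-propagation argument are fine.

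The substantive gap is in the step pinning down $\pos^\Amf$. You propose to take a stray position $y$ ``generated by an existential rule'' and merge it with ``the unique existing position dictated by $\nextsymb$ and the $\nonnextpos$-facts.'' Two things go wrong. First, in an arbitrary model nothing forces a stray element of $\pos^\Amf$ to arise as an existential witness or to participate in any $\nextsymb$-fact; a model may simply place extra elements into $\pos^\Amf$, so there need be no canonical merge target. Second, even when one exists, a merge can create new rule bodies that were absent in $\Amf$: for instance, identifying $y$ with some $i$ may produce a new coincidence $\tape_s(c,i)\wedge\tape_{s'}(c,i)$ whose required head $\error(i)$ is not in the image, so the merged structure fails to be a model. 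Your appeal to the error machinery is circular here: the $\catcherror(a,i)$ facts cover only $i\in\{1,\dots,p(n)\}$, so errors on stray positions do not propagate to $\target(a)$, and the encoding is \emph{not} forced there. For the same reason, your preliminary step concluding $\error^\Amf=\emptyset$ from $\Amf\not\models\target(a)$ only yields that no error fires at a database constant; it should come \emph{after} positions are pinned down.

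The paper's argument for this step is different and sidesteps the issue entirely. Given any model $\Bmf$ with $\pos^\Bmf\supsetneq\{1,\dots,p(n)\}$, it does not try to repair $\Bmf$. Instead it builds, on the same universe, a deliberately crude model $\Amf$: set $\pos^\Amf=\{1,\dots,p(n)\}$, $\mn{next}^\Amf=\pos^\Amf\times\pos^\Amf$, each $\config_q^\Amf$ and $\tape_s^\Amf$ equal to $A\times\pos^\Amf$, and interpret all remaining predicates as total. One then checks directly that every rule is satisfied, and $\pos^\Amf\subsetneq\pos^\Bmf$ gives $\Amf<_\CP\Bmf$, so $\Bmf$ is not minimal. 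The key point you are missing is that this smaller witness need not be error-free or avoid $\target(a)$; it only has to be a model of $(\Omc,D)$. Once $\pos$ is pinned down this way, your remaining steps---now in the order positions, then no errors at the (database) positions, then faithful encoding, then acceptance---go through.
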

We omit the details, but remark that, as a crucial point, in every
minimal model $\Amf$ of $\kb = (\Omc,D)$, we must have
$\pos^\Amf = \{1,\dots,p(n)\}$.  This can be seen as follows.  First,
$1,\dots,p(n)$ have been asserted to satisfy $\pos$ in $D$ and must be
distinct due to the semantics of databases. And second, for any model
\Bmf of \kb in which $\pos^\Bmf$ is a strict superset of
$\{1,\dots,p(n)\}$,
we can create a model \Amf of \kb with $\Amf <_\CP \Bmf$ in
the following way:
\begin{itemize}

\item set $\pos^\Amf = \{1,\dots,p(n)\}$;

\item set $\mn{next}^\Amf = \pos^\Amf \times \pos^\Amf$;
  
\item set $R^\Amf = A \times \{1,\dots,p(n)\}$ for all relations $R$
  of the form $\mn{conf}_q$ and $\mn{tape}_s$;

\item interpret all other
  predicates as total. 

\end{itemize}

\medskip

We now consider the second part of Theorem~\ref{theorem:exp-hardness},
claiming that when switching from AQ-querying to UCQ-querying we can
make do with predicates of arity at most two. Note that the only
predicates in the above reduction that are of higher arity are
$\nextconfig_i$, $\copytape$, $\copylefttape$, and
$\copyrighttape$. These predicates are used to ensure that the content
of tape cells that are not under the head remain the same when the
ATM makes a transition.

We modify the reduction by making $\nextconfig_i$ binary, dropping the
third position that records the head position of the configuration
that is in the first position. The (important) rule that generates
additional configurations otherwise remains unchanged.  We drop the
relations $\copytape$, $\copylefttape$, and $\copyrighttape$ as well
as Rules~(4) to~(7) in Figure~\ref{figure:rules-exp-hardness} in which
they are used.

To compensate for this, we add additional disjuncts to the query. In
fact, we use one disjunct for any two distinct $s,s' \in \Gamma$
and all $i \in \{1,2\}$:
$$
\begin{array}{l}
  \nextconfig_i(x,y) \wedge \mn{tape}_s(x,p) \wedge
  \mn{tape}_{s'}(y,p) \; \wedge \\[1mm]
  \bigwedge_{q \in Q} \nonconfig_q(x,p).
\end{array}
$$

\subsection{Proof of Theorem~\ref{theorem:tower-hardness}}

It remains to describe how the computation of the ATM
$\Mmc = (Q, \Sigma, \Gamma, \delta, q_0, g)$ is represented.

\smallskip

We introduce a unary predicate $\state_q$ for every $q \in Q$ and binary predicates
$\head$ and $\mn{tape}_s$ for every $s \in \Gamma$. Intuitively,
$\state_q(x)$ and $\head(x, x_h)$ say that $x$ is a configuration of \Mmc where
\Mmc is in state $q$ and the head is on position $x_h$ of the
tape. Likewise, $\mn{tape}_s(x,x_p)$ says that in configuration $x$,
the symbol $s$ is written on position $x_p$ of the tape. For all $s \in \Gamma$, we require that
$$
\begin{array}{rcl}
	\head(x,x_h) & \rightarrow & \cell_{\lol}(x_h) 
	\\
	\tape_s(x, x_p) & \rightarrow & \cell_{\lol}(x_p).
\end{array}
$$

\begin{figure*}
	\begin{align}
%
		\nonhead_{\triangleleft}(x,y),\ \nonstart(y) & \rightarrow \exists y' \, \nonhead_{\triangleleft}(x,y'), \nextcell_{\lol}(y',y), \nonhead(x,y')
		\label{rule:head-left}
		\\
		\nonhead_{\triangleright}(x,y),\ \nonend(y) & \rightarrow \exists y' \, \nonhead_{\triangleright}(x,y'), \nextcell_{\lol}(y,y'), \nonhead(x,y')
		\label{rule:head-right}
		\\
		\blanks_\triangleright(x, y), \nonend(y) & \rightarrow \exists y'\; \blanks_\triangleright(x, y'), \nextcell_{\lol}(y, y'), \tape_{\textvisiblespace}(x, y')
                                                           \label{rule:blanks-right} \\
                                                           		\zeros_\triangleright(x, y), \nonend(y), \cell_{k-1}(y) & \rightarrow \exists y'\; \zeros_\triangleright(x, y'), \nextcell_{k-1}(y, y'), \bit_{k, 0}(x, y')
		\label{rule:zeros-right}
		\\
		\copytape_\triangleleft(x, x', y), \nonstart(y), \cell_{k-1}(y) & \rightarrow \exists y'\; \copytape_\triangleleft(x, x', y'), \nextcell_{k-1}(y, y'), \copytape(x, x', y')
		\label{rule:copy-left}
	\end{align}
	\caption{Additional rules used in the proof of Theorem~\ref{theorem:tower-hardness} for every $k \in \{ 2, \dots, \lol\}$.}
	\label{figure:rules-tower-two}
\end{figure*}
 We next make sure that the state, head position, and tape content
are unique for every configuration. Regarding the state and tape content, for
all distinct $s,s' \in \Gamma$ and all distinct $q,q' \in
Q$ we put:
\begin{align*}
	\state_q(x),\state_{q'}(x) & \rightarrow \exists y\; \error_{\lol + 1}(y)
	\\
	\tape_s(x,x_p), \tape_{s'}(x,x_p) & \rightarrow \exists y\; \error_{\lol + 1}(y).
\end{align*}
%
To ensure uniqueness of the head position, we scan the tape in both directions, which is initialized by the rule
\begin{align*}
	\head(x,x_h) & \rightarrow \nonhead_{\triangleleft}(x,x_h),	\nonhead_{\triangleright}(x,x_h).
\end{align*}
Predicates $\nonhead_{\triangleleft}$ and $\nonhead_{\triangleright}$
follow the same propagation mechanism as $\zeros_\triangleleft$, $\ones_\triangleleft$ and $\copytape_\triangleright$, see Rules~\ref{rule:head-left} and \ref{rule:head-right} in Figure~\ref{figure:rules-tower-two}.
To prevent multiple head positions, we add the rule
\begin{align*}
	\head(x,x_p),\nonhead(x,x_p) & \rightarrow \exists y\; \error_{\lol + 1}(y).
\end{align*}
%
%
%
%

The initial configuration is represented by the constant $a$ and encoded with the following facts in $D$:
\begin{align*}
	\state_{q_0}(a)
	\\
	\head(a,c^{\lol}_0) 
	\\
	\tape_{e_i}(a, c^{\lol}_{i-1}) & \textrm{ for } 1 \leq i \leq n
	\\
	\tape_{\textvisiblespace}(a, c^{\lol}_n), \blanks_\triangleright(a, c^{\lol}_n)
\end{align*}
The predicate $\blanks_\triangleright(a, c^{\lol}_n)$ initiates
another propagation, see Rule~\ref{rule:blanks-right} in
Figure~\ref{figure:rules-tower-two}.  For the above to work, we want each
constant $c^{\lol}_i$ to represent the $i^\text{th}$ element of the
$\lol^\text{th}$ order, for $1 \leq i \leq n$. To achieve this, we
actually need constants $c^{k}_i$ for \emph{all}
$k \in \{ 1,\dots,\lol\}$ and all $i \in \{1,\dots,n\}$. This is
because we need to properly setup the bit values for the constants
$c^{\lol}_n$ for which we need the first elements of the
$(\lol-1)^\text{st}$, and so on, recursively.  For $i \in \{1,\dots,n\}$,
we use $b^{(i)}_0\dots b^{(i)}_n$ to denote the binary encoding of $i$
with $n+1$ bits. Notice that $b^{(i)}_n$ is always $0$ since we use $n+1$
bits to encode an integer that is bounded by
$n$. 
For every $k \in \{ 2, \dots, \lol \}$, every
$i, j \in \{ 0, \dots, n \}$, we now add facts:
\begin{align*}
	\bit_{k, b^{(i)}_j}(c^k_i, c^{k-1}_j) 
	\\
	\zeros_\triangleright(c^k_i, c^{k-1}_{n}).
\end{align*}
Also here, a predicate $\zeros_\triangleright$ is used to complete the encoding on long tapes, via Rule~\ref{rule:zeros-right} in Figure~\ref{figure:rules-tower-two}.

To generate the other configurations of the ATM computation on input
$e$, we use existential quantifiers in rules.
For all $q \in Q$ with
$g(q) \notin \{ \mn{accept}, \mn{reject} \}$, all $s \in \Gamma$, 
if $\delta(q,s)=(q_1,s_1,m_1,q_2,s_2,m_2)$ then
put for all $i \in \{ 1, 2\}$:
\[
\begin{array}{l@{}r}
	\state_q(x), \head(x,x_h), \tape_s(x,x_h) \ \rightarrow \hspace*{-3cm}
	\smallskip
	\\ & \exists y\, \exists y_h \, \nextconfig_{i}(x,y,x_h), \state_{q_i}(y)
	\smallskip
	\\ & \config(y,y_h), \mn{tape}_{s_i}(y,x_h), \mn{MOV}.
\end{array}
\]
%
where $\mn{MOV}$ is $\nextcell_{\lol} (x_h,y_h)$ if $m_i= \triangleright$ and
$\nextcell_{\lol}(y_h,x_h)$ if $m_i= \triangleleft$. The above rule sets up
correctly the state and head position of the new configuration, as
well as
the symbol on the tape at the position where the head was located
previously.
We still need to ensure that the other symbols didn't change. 
To this end, we reuse the predicate $\copytape$ used to copy bit values when moving along each $\nextcell_k$.
First, we add the following rule:
\begin{align*}
\nextconfig_{i}(x,y,x_h) & \rightarrow 
\copylefttape(x,y,x_h), \copyrighttape(x,y,x_h),
\end{align*}
along with copies of Rule~\ref{rule:copy-left} from Figure~\ref{figure:rules-tower-two}
and Rule~\ref{rule:copy-right} from Figure~\ref{figure:rules-tower} for $k = \lol$.
To guarantee that the symbols are copied, we add, for every $s \in \Gamma$, the rule
\begin{align*}
  \copytape(x,y,x_p), \tape_s(x,x_p) \rightarrow \tape_s(y,x_p).
\end{align*}


Note that the constant $a$ is the root of a tree of configurations
that are connected by the (projection to the first two components
of the) predicates $\nextconfig_{i}$, 
$i \in \{1,2\}$. We now
propagate acceptance information upwards in that tree. For all
states $q \in Q$ with $g(q)=\mn{accept}$, add
$$
\state_q(x) \rightarrow \acc(x).
$$
Further add, for all $q \in Q$ and $i \in \{1,2\}$,
$$
\nextconfig_{i}(x,y,x_h), \acc(y) \rightarrow \mn{acc}_i(x).
$$
Now, for all $q \in Q$ with  $g(q)=\mn{univ}$, all $q' \in Q$ with
$g(q')=\mn{exist}$, and all $i \in \{1,2\}$, add
\begin{align*}
	\state_q(x), \acc_1(x), \acc_2(x) & \rightarrow \acc(x) \\
	\state_{q'}(x), \acc_i(x) & \rightarrow \acc(x).
\end{align*}
At the root of the configuration tree, which is $a$, acceptance
makes the query true:
\begin{align*}
	\acc(x) & \rightarrow \target(x).
\end{align*}
We denote $\Omc$ the obtained set of rules and $D$ the obtained database.
We can now prove the following.
\begin{lemma}
	\label{lemma-main-claim-tower-hardness}
	$\Mmc$ accepts on input $e$ iff $\Omc, D \models_\CP \target(a)$. 
\end{lemma}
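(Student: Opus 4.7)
The plan is to prove both directions by relating $\CP$-minimal models to faithful encodings of the ATM computation tree.

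For the ``$\Leftarrow$'' direction, assume $\Mmc$ does not accept $e$. I would construct the canonical model $\Amf$ of $(\Omc,D)$ in which every $\cell_k^\Amf$ is a chain of exactly $\towerof{k-1}{p(n)}$ fresh elements with $\nextcell_k$, $\bit_{k,0/1}$, $\firstzero_k$, $\nextfz_k$ and the auxiliary propagation predicates $\ones_\triangleleft$, $\zeros_\triangleleft$, $\copytape_\triangleright$, $\copytape$ interpreted in the obvious canonical way; and in which $\nextconfig_1, \nextconfig_2$ unfold the rejecting ATM computation tree of $\Mmc$ on $e$, with $\state_q$, $\head$, $\tape_s$ assigned uniquely per configuration and tape contents propagated via $\copytape$. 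By construction $\error_k^\Amf=\emptyset$ for all $k$, no accepting state is reached, $\acc^\Amf = \emptyset$, and hence $\target^\Amf = \emptyset$. I would then verify $\CP$-minimality by the standard argument: any $\Bmf <_\CP \Amf$ would either shrink some $\cell_k$, forcing a rule to be violated unless some $\error_j$ becomes non-empty, but then the preference order ($\error_j \prec \cell_j$ for $j \leq k$, and $\catch \prec \error_1$) would block the comparison.

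For the ``$\Rightarrow$'' direction, assume $\Mmc$ accepts $e$, and let $\Amf$ be any $\CP$-minimal model of $(\Omc,D)$. I would proceed by a cascade of claims ordered by the preference relation:
\begin{enumerate}
\item $\catch^\Amf = \{a^\Amf\}$, since $\catch$ is the smallest in $\prec$ and $D$ asserts only $\catch(a)$.
\item If some $\error_k^\Amf \neq \emptyset$, the rule $\error_k(x)\rightarrow \catch(x),\target(x)$ together with (1) delivers $\target(a^\Amf)$, and we are done.
\item Otherwise $\error_k^\Amf = \emptyset$ for all $k\in\{1,\dots,\lol+1\}$. By induction on $k$, I would then argue that the $k$-th order is faithfully represented, i.e.\ $(\cell_k^\Amf,\nextcell_k^\Amf)$ is a chain of length exactly $\towerof{k-1}{p(n)}$ with the intended binary encoding of positions through $\bit_{k,0/1}$. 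The induction step uses the fact that the $\error_k \prec \cell_k$ layer of the preference order means new $\cell_k$-elements are created whenever the rules demand them (since the alternative would be a non-empty $\error_k$), while $\cell_k$ remains minimized relative to $\error_{k+1}$ and $\cell_{k+1}$, ruling out spurious elements.
\item Using the faithful $\lol$-th order as the tape, the rules generating $\nextconfig_i$, together with the uniqueness constraints enforced by $\error_{\lol+1}$ on $\state_q$, $\head$ and $\tape_s$ and the tape-preservation rules via $\copytape$, force the $\nextconfig_1,\nextconfig_2$-structure to mirror the actual ATM computation tree of $\Mmc$ on $e$.
\item Since $\Mmc$ accepts $e$, the recursive propagation of $\acc$ from $\mn{accept}$-leaves through $\acc_i$ and the rules for universal/existential states forces $\acc(a^\Amf)$, hence $\target(a^\Amf)$.
\end{enumerate}

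The main obstacle is step (3): rigorously arguing that the preference layering really forces each $\cell_k^\Amf$ to have precisely the intended size and structure. This is where the chained inequalities $\catch \prec \error_1 \prec \cell_1 \prec \cdots \prec \error_\lol \prec \cell_\lol \prec \error_{\lol+1}$ are used in a delicate way: I would assume towards a contradiction that $\cell_k^\Amf$ contains either too few elements (producing a missing $\nextcell_k$-successor and hence a forced $\error_k$, contradicting (3)), or strictly too many, in which case I construct $\Bmf <_\CP \Amf$ by removing the excess $\cell_k$-elements (together with their $\nextcell_k$, $\bit_k$, $\firstzero_k$ and propagation facts) and argue that the removed structure is not needed to satisfy any rule, so no new $\error_j$ with $j<k$ or $j = k+1$ becomes non-empty, witnessing non-minimality of $\Amf$. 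The bookkeeping for the propagation predicates ($\ones_\triangleleft$, $\zeros_\triangleleft$, $\copytape_\triangleright$, $\copytape$) that carry information across the lower-level order is what makes this step technically delicate but not conceptually hard.
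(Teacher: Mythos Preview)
Your overall plan matches the paper's proof: the same case split on whether some $\error_k$ is non-empty, the same induction on $k$ to pin down the orders, the same handling of $(\Leftarrow)$ by exhibiting the intended model. One tactical difference is worth highlighting.

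In step (3), where you show $\cell_k^\Amf$ cannot have excess elements, you propose to construct $\Bmf <_\CP \Amf$ by \emph{surgically removing} the extra $\cell_k$-instances together with all facts mentioning them, and then arguing that no rule loses its witness. The paper uses a much blunter construction: it keeps the intended interpretation of $\cell_k$ (and everything below it in $\prec$) and simply \emph{collapses all predicates concerning the higher orders and the ATM onto $a^\Amf$}. This trivially satisfies every existential rule (the witness is always $a^\Amf$) at the price of making $\error_{k+1},\dots,\error_{\lol+1}$ equal to $\{a^\Amf\}$. That growth is harmless, since $\cell_k \prec \error_{k+1}$ and $\cell_k$ has strictly shrunk, so condition~3 of $<_\CP$ is met. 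The collapse avoids having to identify which elements are ``excess'' in a model where $\nextcell_k$ need not be a clean chain, and avoids the delicate bookkeeping for the propagation predicates that you flag as the main obstacle.

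Relatedly, your indexing in the last paragraph is off. What must be preserved for $\cell_k$ to serve as the witness in condition~4 of $<_\CP$ are exactly the predicates strictly below $\cell_k$ in $\prec$, i.e.\ $\catch,\error_1,\cell_1,\dots,\cell_{k-1},\error_k$. So it is $\error_k$ (not $\error_{k+1}$) that must stay empty; $\error_{k+1}$ is \emph{allowed} to grow, and the paper's construction deliberately lets it do so.
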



First, notice every $\CP$-minimal model $\structure{A}$ of $(\Omc, D)$ satisfies $\catch^\structure{A} = \{ a^\structure{A} \}$.
It is clear, from $D$, that $a^\structure{A} \in \catch^\structure{A}$.
Now assume by contradiction that $\{ a^\structure{A} \} \subsetneq \catch^\structure{A}$.
Then we easily construct a model $\structure{B} <_\CP \structure{A}$ by setting $\domain{B} = \domain{A}$ and interpreting every constant by $a^\structure{A}$ and every predicate of arity $r \geq 1$ as $\{ a^\structure{A} \}^r$.

From there and the rules stating each $\error_k$ is subsumed by $\catch$ and $\target$, it follows that in every $\CP$-minimal model $\structure{A}$ of $(\Omc, D)$ and for every $k \in \{ 1, \dots, \lol + 1 \}$, if $\error_k^\structure{A} \neq \emptyset$, then $\error_k^\structure{A} = \{ a^\structure{A} \}$ and $\structure{A} \models \target(a)$.

For the $(\Rightarrow)$ direction of Lemma~\ref{lemma-main-claim-tower-hardness}, it thus remains to treat the case of $\CP$-minimal models $\structure{A}$ of $(\Omc, D)$ such that $\error_k^\structure{A} = \emptyset$ for every $k \in \{ 1, \dots, \lol + 1 \}$.
We prove that in such a model $\structure{A}$, we have $\sizeof{\cell_1^\structure{A}} = p(n)$ and for every $k \in \{2, \dots, \lol \}$: $\sizeof{\cell_k^\structure{A}} = 2^{\sizeof{\cell_{k-1}^\structure{A}}}$.

For $\cell_1$, recall that $D$ already specifies $p(n)$ instances of this predicate, namely the constants $c^1_0, \dots, c^1_{p(n)-1}$. 
Furthermore, with the predicate $\diff$, all these constants must be interpreted as distinct elements, as otherwise $\error_1 \neq \emptyset$.
Thus $\sizeof{\cell_1^\structure{A}} \geq p(n)$.
Now, assume by contradiction that there exists another instance of $\cell_1$ somewhere in $\domain{A}$.
We can then construct a model $\structure{B} <_\CP \structure{A}$, yielding a contradiction, by interpreting $\cell_1$ as specified in $D$ and by collapsing every further constants and predicates on $a^\structure{A}$.
Notice that $a^\structure{A}$ cannot be among interpretations of constants $c^1_0, \dots, c^1_{p(n)-1}$ as it would have triggered $\error_1$.
Note that the obtained model $\structure{B}$ satisfies $\error_k^\structure{B} = \{ a^\structure{A} \} \supsetneq \error_k^\structure{A}$ for $k \in \{ 2, \dots, \lol + 1\}$ but that $\structure{B} <_\CP \structure{A}$ still holds due to the preference order on minimized predicates, prioritizing $\cell_1$ over those further predicates.

We then proceed by induction on $k$.
The argument is sensibly the same: due to $\error_k^\structure{A} = \emptyset$ and the $(k-1)^\textrm{th}$ tape having the intended size, it is readily verified that each constant $c^k_0$ starts a tape, following predicate $\nextcell_k$, of length at least $2^{\sizeof{\cell_{k-1}^\structure{A}}}$.
The existence of an extra instance beyond those $2^{\sizeof{\cell_{k-1}^\structure{A}}}$ is denied in the same manner, that is by constructing a $\structure{B} <_\CP \structure{A}$ collapsing all predicates concerning further tapes on $a^\structure{A}$ and crucially relying on the preference order.

We thus proved that all $\CP$-minimal models of $(\Omc, D)$ either have a non-empty $\error_k$ for some $k \in \{ 1, \dots, \lol + 1 \}$, in which case they satisfy the query, or produce a $\lol^\textrm{th}$ tape with exactly the correct size for our ATM to run on.
In this latter case, $\error_{\lol +1 }$ being empty guarantees that the model $\structure{A}$ encodes a valid computation of $\Mmc$ on input $e$.
Therefore, if $\Mmc$ accepts on input $e$, then the $\acc$ predicate is carried back to the root and yields $\structure{A} \models \target(a)$, which concludes the $(\Rightarrow)$ direction of Lemma~\ref{lemma-main-claim-tower-hardness}.
For the $(\Leftarrow)$ direction, we assume $\Mmc$ does not accept on input $e$, and we construct $\structure{A}$ representing the computation of $\Mmc$ on $e$ exactly as intended.
$\CP$-minimality of $\structure{A}$ follows from the intermediate claims we already proved.



\end{document}